\documentclass[10pt,journal,compsoc]{IEEEtran}
\usepackage[nocompress]{cite}
\usepackage{xcolor}
\usepackage{amsmath}
\usepackage{amsfonts}
\usepackage{array}
\usepackage{dblfloatfix}
\usepackage{url}
\usepackage{tikz}
\usepackage{pgfplots}
\pgfplotsset{compat=1.17}
\usepackage{adjustbox}
\usepackage{bigstrut}
\usepackage{enumitem}
\usepackage{amsthm}

\usepackage{mathtools}
\definecolor{color-orange}{RGB}{230,90,50}
\definecolor{color-blue}{RGB}{39,108,158}
\definecolor{color-red}{RGB}{180,37,37}
\definecolor{color-pink}{RGB}{228,56,126}
\definecolor{color-yellow}{RGB}{230,167,87}
\definecolor{color-green}{RGB}{50,152,69}
\definecolor{color-purple}{RGB}{180,131,210}
\definecolor{color-cyan}{RGB}{60,180,180}

\definecolor{dsibcolor}{RGB}{252, 231, 178}
\definecolor{convcolor}{RGB}{233, 240, 222}
\usetikzlibrary{decorations.pathreplacing, arrows.meta, positioning, math, shapes.geometric, calc}
\usepackage{graphicx}
\usepackage{caption}
\usepackage{subcaption}
\usepackage{booktabs}
\usepackage{multirow}
\usepackage{wasysym}

\newtheorem{proposition}{Proposition}

\usepackage[hidelinks]{hyperref}

\begin{document}
\title{Principled Reflection Separation via Nonlinear Superposition and Feature Interaction}
\author{Qiming~Hu, Mingjia~Li, Yuntong~Li, and
        $\text{Xiaojie~Guo}^*$\thanks{All the authors are from College of Intelligence and Computing, Tianjin University, Tianjin 300350, China.\\ * Corresponding author.}
        \protect}

\markboth{}%
{Principled Reflection Separation via Nonlinear Superposition and Feature Interaction}
\IEEEtitleabstractindextext{%
\begin{abstract}

Single-image reflection separation is fundamentally challenged by the entanglement of transmission and reflection layers under complex image formation processes. Existing approaches largely rely on simplified assumptions or independent modeling, limiting their ability to handle real-world scenarios. In this work, we revisit the problem from a unified perspective and identify a key issue of existing approaches, \emph{i.e.}, the widely adopted linear composition model in the sRGB domain fails to capture the nonlinear coupling introduced by real-world image signal processing pipelines. To address this, we introduce a learnable nonlinear superposition model that more faithfully characterizes layer interactions and improves decomposition fidelity.
Building upon this formulation, we propose a generalized dual-stream interactive framework that explicitly models bidirectional dependencies between transmission and reflection through feature exchange. This framework unifies activation-, gating-, and attention-based interaction mechanisms, and is compatible with both CNN and Transformer backbones. Extensive experiments on diverse real-world benchmarks demonstrate that the proposed approach achieves superior performance with strong generalization capability. More importantly, our study reveals that \emph{reflection separation is not about undoing a linear mixture, but about learning nonlinear formation and interaction}, offering new insights into the design of principled image decomposition models. 
Code and models are publicly available at \url{https://mingcv.github.io/DIRS-Page}.

\end{abstract}

\begin{IEEEkeywords}
Reflection Separation, Nonlinear Superposition, Feature Interaction, Blind Source Separation.
\end{IEEEkeywords}}

\maketitle
\IEEEpeerreviewmaketitle

\IEEEraisesectionheading{\section{Introduction}\label{sec:introduction}}

\IEEEPARstart{R}eflections caused by light scattering and refraction on surfaces are fundamental to how both animals and imaging systems perceive the world \cite{born1965principles}, as shown in Fig.~\ref{fig:reflection_model}~(a). However, reflective surfaces are frequently transparent or semi-transparent. Consequently, the captured image $\mathbf{I}$ is a composite of two entangled components including a transmission layer $\mathbf{T}$ originating from the background and a reflection layer $\mathbf{R}$ formed by light bouncing off the surface~\cite{cvpr/FaridA99,schechner2000polarization}, as illustrated in Fig.~\ref{fig:reflection_model}~(b). This entanglement obscures the structure and semantics of each component, likely degrading visual quality and hindering downstream vision tasks~\cite{eccv/shizawa1992visual,cvpr/SimonP15,cvpr/qiu2023looking}. While specialized hardware such as polarization filters can partially mitigate reflections as shown in Fig.~\ref{fig:reflection_model}~(c), such solutions are often impractical for amateur use. This work concentrates on equipment-free single-image reflection separation.

Recovering transmission and reflection from a single observation is inherently ill-posed, as infinitely many decompositions can explain the same image. A widely-adopted assumption models the observed image as a linear combination of the two layers~\cite{cvpr/FaridA99}, which approximately holds in the RAW domain as 
    $\textbf{I}_\text{raw} = \textbf{T}_\text{raw} + \textbf{R}_\text{raw}$,
where $\textbf{I}_\text{raw}$, $\textbf{T}_\text{raw}$, and $\textbf{R}_\text{raw}$ denote the composite image, transmission layer, and reflection layer in the RAW sensor space, respectively. However, this assumption breaks down in the sRGB domain due to the nonlinear transformations introduced by the image signal processing (ISP) mapping (\emph{e.g.}, gamma correction and color mapping) $\mathcal{F}_{\text{ISP}}$~\cite{cvpr/WenT0LHH19}:
\begin{equation}
\begin{aligned}
    \mathbf{I} &= \mathcal{F}_{\text{ISP}}(\mathbf{T}_{\text{raw}} + \mathbf{R}_{\text{raw}}) \\
    &\neq \mathcal{F}_{\text{ISP}}(\mathbf{T}_{\text{raw}}) + \mathcal{F}_{\text{ISP}}(\mathbf{R}_{\text{raw}}) = \mathbf{T} + \mathbf{R}.
\end{aligned}
\end{equation}

This reveals an inherent mismatch between the commonly assumed linear superposition model and the actual image formation process in sRGB space. Consequently, methods relying on linear models in sRGB \cite{iccv/FanYHCW17, cvpr/ZhangNC18a, cvpr/WeiYFW019} often fail to generalize to real-world superimpositions.

\begin{figure*}[t]
    \centering
    \begin{subfigure}[b]{0.30\linewidth}
        \centering
        \includegraphics[width=\linewidth,height=100pt]{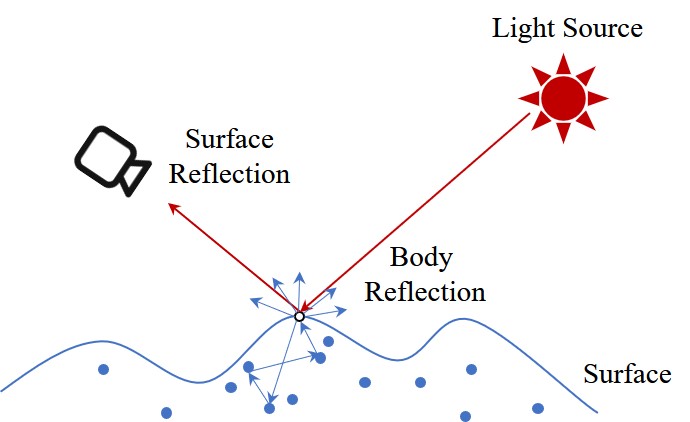}
        \vspace{1pt}
        \caption{Reflection at an opaque surface}
    \end{subfigure}
    \hfill
    \begin{subfigure}[b]{0.35\linewidth}
        \centering
        \includegraphics[width=\linewidth,height=100pt]{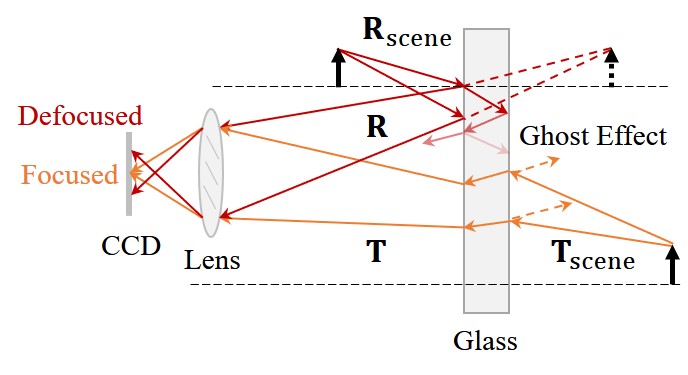}
        \caption{Reflection at a transparent surface}
    \end{subfigure}
    \hfill
    \begin{subfigure}[b]{0.28\linewidth}
        \centering
        \includegraphics[width=\linewidth,height=100pt]{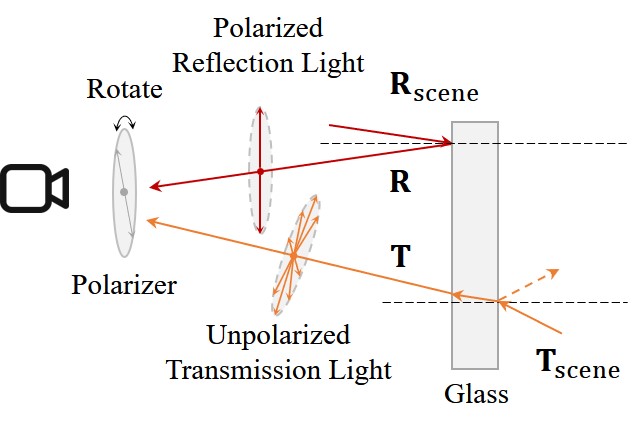}
        \caption{Polarization phenomenon}
    \end{subfigure}
    \caption{(a) Reflection of light from an opaque inhomogeneous surface. (b) Reflection superimposition in transparent media (such as glass and water surfaces) with relative smoothness and ghost effect. (c) Polarization effect of reflected light.}
    \label{fig:reflection_model}

    \centering
    \begin{subfigure}[b]{0.23\linewidth}
        \centering
        \includegraphics[width=\linewidth]{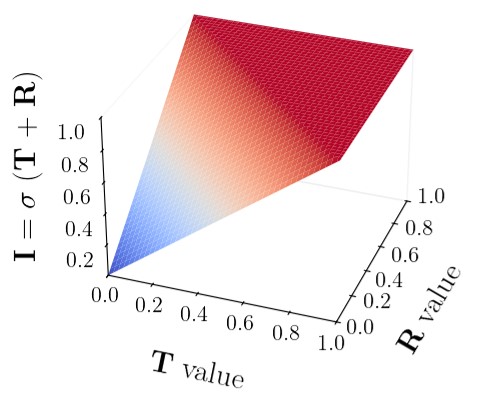}
        \caption{Linear model}
    \end{subfigure}
    \hfill
    \begin{subfigure}[b]{0.23\linewidth}
        \centering
        \includegraphics[width=\linewidth]{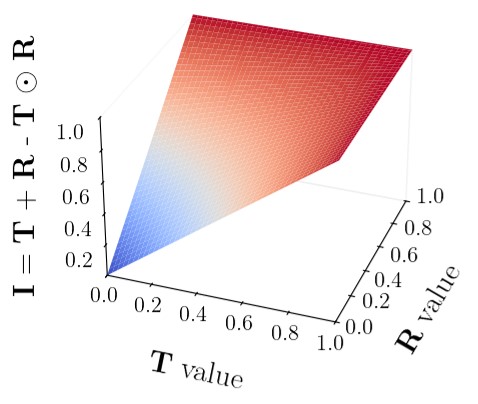}
        \caption{Screen model}
    \end{subfigure}
    \hfill
    \begin{subfigure}[b]{0.23\linewidth}
        \centering
        \includegraphics[width=\linewidth]{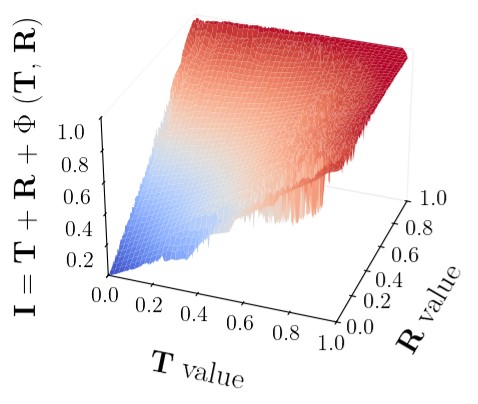}
        \caption{\makebox[100pt][l]{Learnable non-linear model}}
    \end{subfigure}
    \begin{subfigure}[b]{0.26\linewidth}
        \centering
        \includegraphics[width=\linewidth]{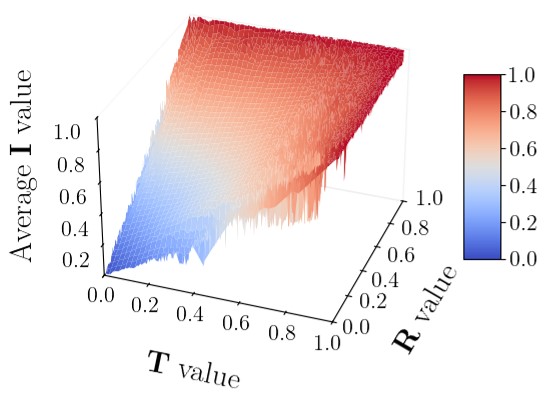}
        \caption{Ground-truth $\textbf{I}$}
    \end{subfigure}
    
    \caption{Illustration of how pixel values in $\mathbf{I}$ vary with $\mathbf{T}$ and $\mathbf{R}$ across different models: (a) a linear model with truncation $\sigma$~\cite{iccv/FanYHCW17}, (b) the screen blending model, (c) our high-order learnable non-linear model, and (d) real-world triplets from the $\text{SIR}^2$ dataset. As can be seen, reflective coupling demands higher-order inter-layer interactions beyond linear models.}
    \label{fig:prob}

    \centering
    \begin{subfigure}[b]{0.155\linewidth}
        \centering
        \includegraphics[width=\linewidth,height=75pt]{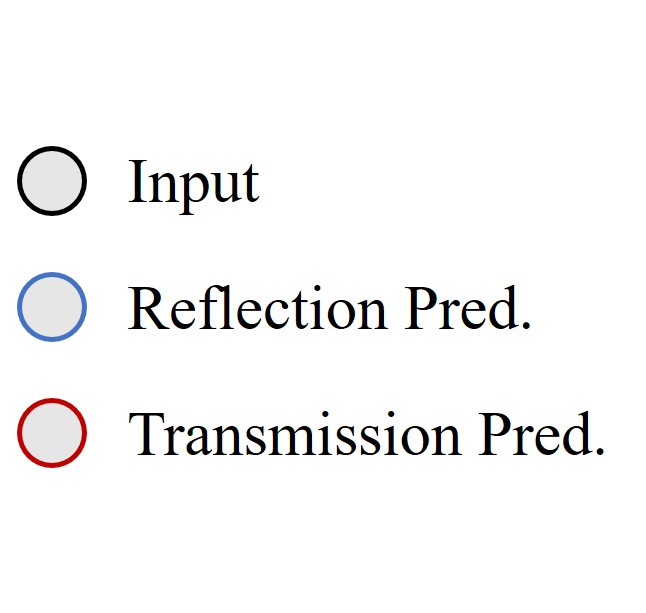}
        \caption*{}
    \end{subfigure}
    \hfill
    \begin{subfigure}[b]{0.19\linewidth}
        \centering
        \includegraphics[width=\linewidth,height=75pt]{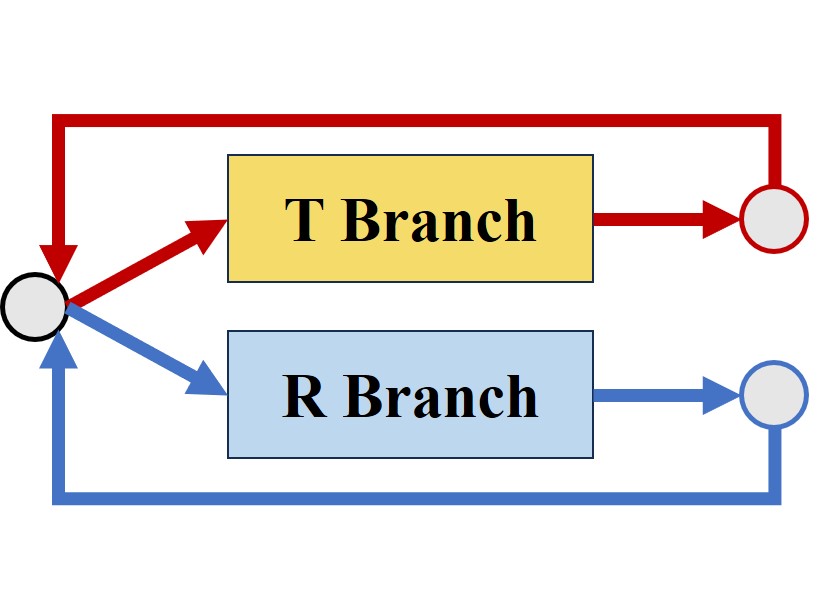}
        \caption{IBCLN \cite{cvpr/LiY0LH20}}
    \end{subfigure}
    \hfill
    \begin{subfigure}[b]{0.27\linewidth}
        \centering
        \includegraphics[width=\linewidth,height=75pt]{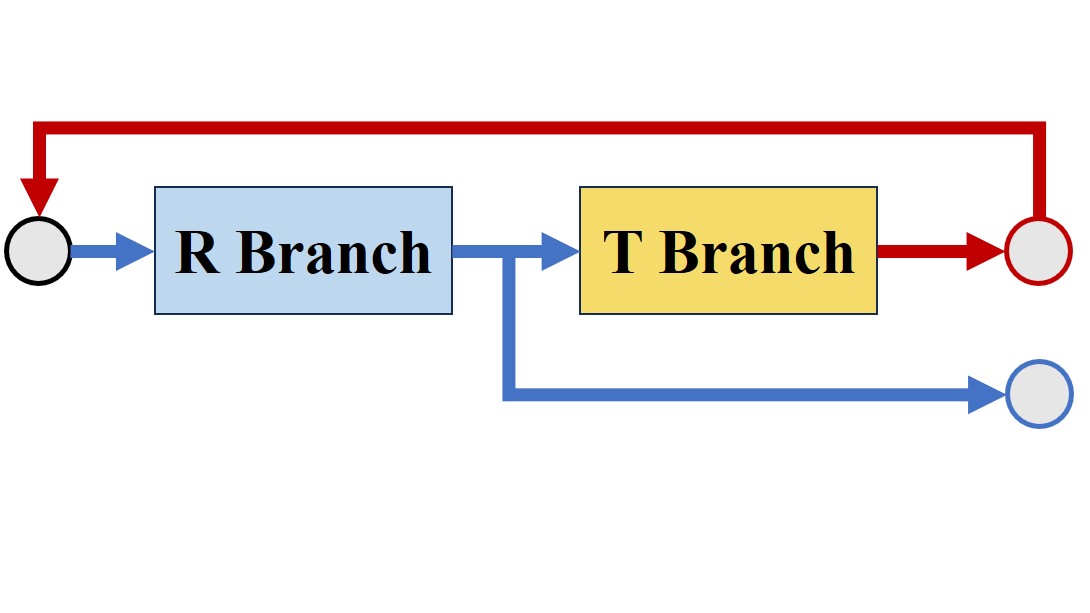}
        \caption{Dong \emph{et al.} \cite{iccv/Dong00BXL21}}
    \end{subfigure}
    \hfill
    \begin{subfigure}[b]{0.33\linewidth}
        \centering
        \includegraphics[width=\linewidth,height=75pt]{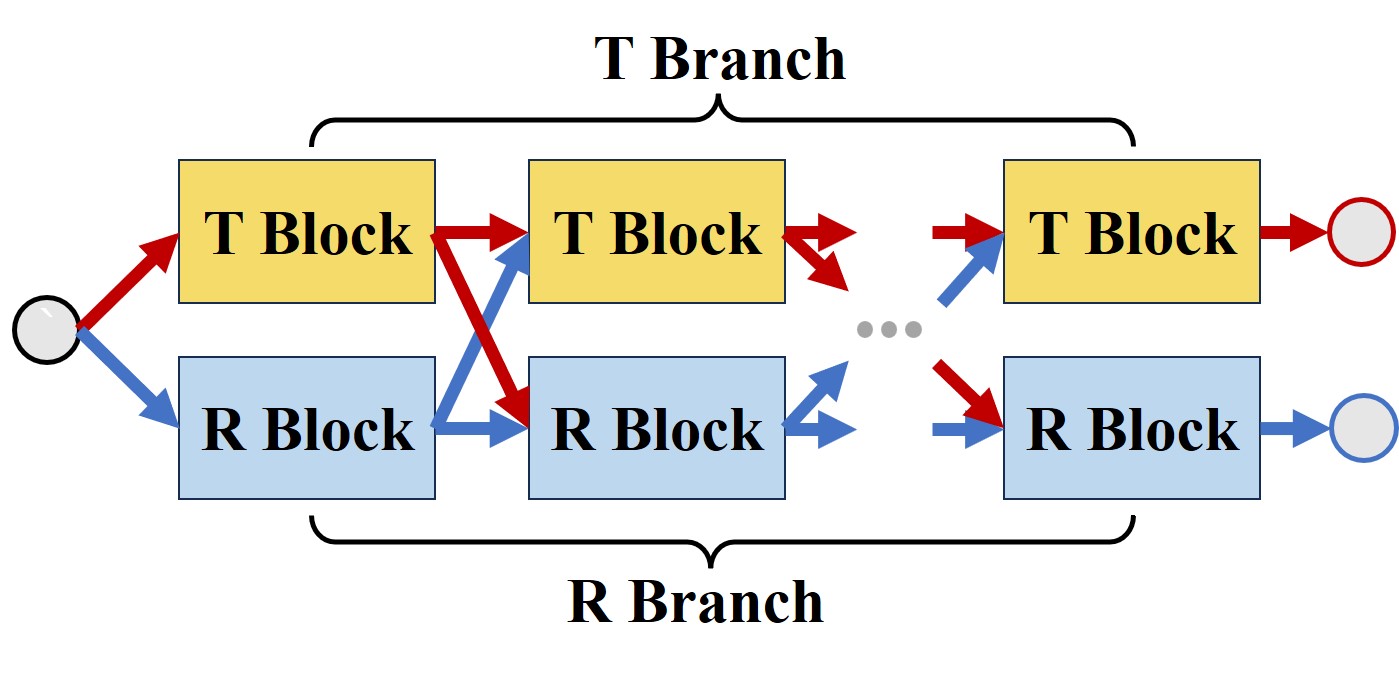}
        \caption{Ours}
    \end{subfigure}
   
    \caption{Comparison of our dual-stream interaction mechanism (c) with previous dual-branch approaches (a) and (b).} 
    \label{fig:head}
    \vspace{-5pt}
\end{figure*}

To further analyze this discrepancy, we examine the intensity distributions of ground-truth $\mathbf{I}$, $\mathbf{T}$, and $\mathbf{R}$ triplets from the real-world SIR$^2$ dataset \cite{iccv/WanSDTK17} in a 3D sRGB space (detailed in Sec.~\ref{sec:modeling}). As illustrated in Fig.~\ref{fig:prob}~(d), while $\mathbf{I}$ exhibits approximately linear behavior at low intensities, it undergoes pronounced saturation compression as layer intensities increase. This observation indicates that the interaction between transmission and reflection deviates significantly from linearity in practical imaging conditions. 
Consequently, the traditional linear truncation model (Fig.~\ref{fig:prob}~(a)) fails to capture this nonlinear roll-off behavior. The screen blending model (Fig.~\ref{fig:prob}~(b)) provides a closer approximation:
\begin{equation}
\mathbf{I} = \mathbf{T} + \mathbf{R} - \mathbf{T}\circ\mathbf{R}, \label{eq:screen}
\end{equation}
where $\circ$ denotes the Hadamard product. However, such fixed analytical formulations remain insufficient to model the spatially varying and context-dependent interactions observed in real-world data. 
To address this limitation, we introduce a {learnable nonlinear superposition model} that augments the linear formulation by:
\begin{equation}
\mathbf{I} = \mathbf{T} + \mathbf{R} + \Phi(\mathbf{T},\mathbf{R}) + \Psi. \label{eq:model_nonlinear}
\end{equation}
Here, $\Phi(\cdot)$ captures high-order, content-adaptive coupling between the two layers, while $\Psi$ accounts for global offsets induced by the ISP pipeline. As shown in Fig.~\ref{fig:prob}~(c), this formulation significantly narrows the gap between the modeled and real-world distributions.

Beyond modeling limitations, the ill-posed nature of reflection separation also necessitates effective utilization of structural priors. Existing methods typically impose priors \cite{nips/LevinZW02, cvpr/LiB14, cvpr/ZhangNC18a, cvpr/WeiYFW019}, such as mutual independence and relative smoothness, to constrain the solution space. However, these priors implicitly assume that the two layers can be recovered in a largely decoupled manner, overlooking their intrinsic dependencies. In practice, transmission and reflection exhibit strong structural and statistical correlations, making independent modeling insufficient for accurate disentanglement.
We argue that effective reflection separation requires jointly addressing two fundamental challenges, \emph{i.e.}, {accurate modeling of nonlinear layer superposition} and {explicit interaction between decomposed representations}. While the former ensures faithful reconstruction under realistic imaging conditions, the latter is essential for resolving the intrinsic ambiguity of the decomposition.
To this end, we propose a {generalized dual-stream interactive framework} that explicitly models bidirectional dependencies between transmission and reflection through multi-scale, high-dimensional feature exchange. Unlike prior dual-branch designs that either lack explicit interaction \cite{iccv/Dong00BXL21} or rely on limited post-hoc fusion \cite{cvpr/LiY0LH20} as shown in Fig.~\ref{fig:head}~(a) and (b), our framework (Fig.~\ref{fig:head}~(c)) enables multiple rounds of interaction within each forward pass, significantly enhancing the depth and semantic richness of information exchange. In this manuscript, we consolidate our previous interactive designs into a unified generalized mechanism that encompasses activation-based CNNs \cite{nips/HuG21}, gate-based CNNs \cite{iccv/Hu23}, and attention-based Transformers \cite{nips/hu2024single}. We conduct in-depth architectural explorations and comprehensive ablation studies to substantiate these design principles. 

The main contributions of this work are summarized as:
\begin{enumerate}
    \item We propose a learnable nonlinear reflection formulation that captures interlayer coupling in sRGB space, mitigating the limitations of conventional linear assumptions and improving layer separation fidelity;
    
    \item We design a generalized dual-stream interactive framework that enables explicit bidirectional information exchange between transmission and reflection via a family of feature interaction mechanisms; 
    
    \item We demonstrate that jointly modeling nonlinear superposition and feature interaction leads to state-of-the-art performance and strong generalization, offering new insights into principled image decomposition. 
\end{enumerate}

\begin{figure*}[t]
\centering
\begin{tikzpicture}[timeline/.style={draw, thick, -{Latex[length=3mm,width=2mm]}}, 
                    polarc/.style={fill=white, draw=color-blue,very thick},
                    polarl/.style={draw=color-blue,very thick},
                    motionc/.style={fill=white, draw=color-red,very thick},
                    motionl/.style={draw=color-red,very thick},
                    stereoc/.style={fill=white, draw=color-yellow,very thick},
                    stereol/.style={draw=color-yellow,very thick},
                    linearc/.style={fill=white, draw=color-green,very thick},
                    linearl/.style={draw=color-green,very thick},
                    flashc/.style={fill=white, draw=color-purple,very thick},
                    flashl/.style={draw=color-purple,very thick},
                    democ/.style={fill=white,draw=black,very thick},
                    oppoc/.style={fill=white, draw=color-cyan,very thick},
                    oppol/.style={draw=color-cyan,very thick},
                    cascaded/.style={diamond,minimum width=0.275cm,minimum height=0.275cm,inner sep=0},
                    single/.style={isosceles triangle,rotate=90,isosceles triangle apex angle=60, minimum size =0.2cm,inner sep=0},
                    year/.style={draw, black, thick},
                    item/.style={align=center, text width=3cm, inner sep=5pt, anchor=north},
                    every node/.style={font=\footnotesize}]

  \tikzmath{\hi = 5.0;};
  
  \draw[thick] (-3,\hi) -- (13.5,\hi);
  \draw[thick,dashed,-{Latex[length=3mm,width=2mm]}] (13.5,\hi) -- (14.5,\hi);

  \draw[year] (-2.75, \hi+0.1) -- (-2.75, \hi-0.1);
  \node[above=3pt] at (-2.75, \hi-0.1) {1989};

  \draw[polarl] (-2.25, \hi) -- (-2.25, \hi-0.25);
  \draw[polarc] (-2.25, \hi) circle (3pt);
  \node[item] at (-2.25, \hi-0.2) (1989wolff) {\textbf{Wolff} \cite{cvpr/Wolff89a} \\ Linear Cluster in Polarization Space};

  \draw[year] (-1.75, \hi+0.1) -- (-1.75, \hi-0.1);
  \node[above=3pt] at (-1.75, \hi-0.1) {1990};

  \draw[motionl] (-1.25, \hi) -- (-1.25, \hi+0.25);
  \draw[motionc] (-1.25, \hi) circle (3pt);
  \node[item] at (-1.25, \hi+1.5) (1990bergen) {\textbf{Bergen \emph{et al.}} \cite{eccv/BergenBHP90} \\ Dual Components Motion Estimation};
  
  \draw[year] (-0.75, \hi+0.1) -- (-0.75, \hi-0.1);
  \node[above=3pt] at (-0.75, \hi-0.1) {1991};

  \draw[year] (0, \hi+0.1) -- (0, \hi-0.1);
  \node[above=3pt] at (0, \hi-0.1) {1992};

  \draw[motionl] (0.5, \hi) -- (0.5, \hi-0.25);
  \draw[motionc] (0.5, \hi) circle (3pt);
  \node[item] at (0.5, \hi-0.2) (1992bergen) {\textbf{Bergen \emph{et al.}} \cite{pami/BergenBHP92} \\ Three-Frame Layer Motion Analysis};

  \draw[year] (1.0, \hi+0.1) -- (1.0, \hi-0.1);
  \node[above=3pt] at (1.0, \hi-0.1) {1993};
  
  \node at (1.55, \hi+0.2) {\ldots};

  \draw[year] (2, \hi+0.1) -- (2, \hi-0.1);
  \node[above=3pt] at (2, \hi-0.1) {1997};

  \draw[polarl] (2.5, \hi) -- (2.5, \hi+0.25);
  \draw[polarc] (2.5, \hi) circle (3pt);
  \node[item] at (2.25,\hi+1.5) (1997nayar) {\textbf{Nayar \emph{et al.}} \cite{ijcv/NayarFB97} \\ Color and \\ Polarization};
  
  \draw[year] (3, \hi+0.1) -- (3, \hi-0.1);
  \node[above=3pt] at (3, \hi-0.1) {1998};

  \draw[stereol] (3.5, \hi) -- (3.5, \hi-0.25);
  \draw[stereoc] (3.5, \hi) circle (3pt);
  \node[item] at (3.5, \hi-0.2) (1998schechner) {\textbf{Schechner \emph{et al.}} \cite{iccv/SchechnerKB98} \\ Extended Depth from Focus};
  
  \draw[year] (4, \hi+0.1) -- (4, \hi-0.1);
  \node[above=3pt] at (4, \hi-0.1) {1999};

  \draw[polarl] (5, \hi) -- (5, \hi+0.25);
  \draw[polarc] (5, \hi) circle (3pt);
  \node[item] at (4.9,\hi+1.5) (1999farid) {\textbf{Farid and Adelson} \cite{cvpr/FaridA99} \\ ICA on Polarized Linear Mixing};

  \draw[polarl] (6.25, \hi) -- (6.25, \hi-0.25);
  \draw[polarc] (6.25, \hi) circle (3pt);
  \node[item,text width=4cm] at (6.25, \hi-0.2) (1999farid) {\textbf{Schechner \emph{et al.}} \cite{schechner1999polarization} \\ Polarization-based Decorrelation Search};
  
  \draw[year] (7.25, \hi+0.1) -- (7.25, \hi-0.1);
  \node[above=3pt] at (7.25, \hi-0.1) {2000};

  \draw[motionl] (7.875, \hi) -- (7.875, \hi+0.25);
  \draw[motionc] (7.875, \hi) circle (3pt);
  \node[item,text width=4cm] at (7.75, \hi+1.5) (2000szeliski) {\textbf{Szeliski \emph{et al.}} \cite{cvpr/SzeliskiAA00} \\ Iterative Layer Estimation with Motions};

  \draw[year] (8.5, \hi+0.1) -- (8.5, \hi-0.1);
  \node[above=3pt] at (8.5, \hi-0.1) {2001};

  \draw[year] (8.875, \hi+0.1) -- (8.875, \hi-0.1);
  \draw[year] (9.25, \hi+0.1) -- (9.25, \hi-0.1);
  \node[above=3pt] at (9.25, \hi-0.1) {2003};

  \draw[stereol] (9.75, \hi) -- (9.75, \hi-0.25);
  \draw[stereoc] (9.75, \hi) circle (3pt);
  \node[item] at (9.5, \hi-0.2) (2003tsin) {\textbf{Tsin \emph{et al.}} \cite{cvpr/TsinKS03} \\ Iterative Component Color Estimation};

  \draw[year] (10.25,\hi+0.1) -- (10.25, \hi-0.1);
  \node[above=3pt] at (10.25, \hi-0.1) {2004};

  \draw[linearl] (10.75, \hi) -- (10.75, \hi+0.25);
  \draw[linearc] (10.75, \hi) circle (3pt);
  \node[item,text width=4cm] at (10.5, \hi+1.5) (2004sarel) {\textbf{Sarel and Irani} \cite{eccv/SarelI04} \\ Layer Exchange via Local Decorrelation};

  \draw[flashl] (12.25, \hi) -- (12.25,\hi-0.25);
  \draw[flashc] (12.25, \hi) circle (3pt);
  \node[item,text width=4cm] at (12.25, \hi-0.2) (2005agrawal) {\textbf{Agrawal \emph{et al.}} \cite{tog/AgrawalRNL05} \\ Gradient Projection of No-/Flash Pairs};

  \draw[year] (11.5,\hi+0.1) -- (11.5, \hi-0.1);
  \node[above=3pt] at (11.5, \hi-0.1) {2005};
  
  \draw[motionl] (13.25, \hi) -- (13.25, \hi+0.25);
  \draw[motionc] (13.25, \hi) circle (3pt);
  \node[item,text width=3cm] at (13.25, \hi+1.5) (2005sarel) {\textbf{Sarel and Irani} \cite{iccv/SarelI05} \\ Repetitive Dynamic Separation};
  
  \tikzmath{\hii = 2;};

  \draw[thick,dashed] (-3,\hii) -- (-2.5,\hii);
  \draw[thick] (-2.5,\hii) -- (13.5,\hii);
  \draw[thick,dashed,-{Latex[length=3mm,width=2mm]}] (13.5,\hii) -- (14.5,\hii);

  \draw[year] (-2.5, \hii+0.1) -- (-2.5, \hii-0.1);
  \node[above=3pt] at (-2.5, \hii-0.1) {2008};

  \draw[motionl] (-2, \hii) -- (-2, \hii+0.25);
  \draw[motionc] (-2, \hii) circle (3pt);
  \node[item,text width=3cm] at (-2, \hii+1.5) (2008gai) {\textbf{Gai \emph{et al.}} \cite{cvpr/GaiSZ08} \\ Separation via Spatial Shift Clues};

  \draw[year] (-1.5, 2.1) -- (-1.5, 1.9);
  \node[above=3pt] at (-1.5, 1.9) {2009};

  \draw[year] (-1, \hii+0.1) -- (-1, \hii-0.1);

  \draw[year] (-0.5, \hii+0.1) -- (-0.5, \hii-0.1);
  \node[above=3pt] at (-0.5, \hii-0.1) {2011};
  
  \draw[polarl] (0, \hii) -- (0, \hii-0.25);
  \draw[polarc] (0, \hii) circle (3pt);
  \node[item,text width=5cm] at (0, \hii-0.2) (2011kong) {\textbf{Kong \emph{et al.}} \cite{tip/KongTS11} \\ Gradient Variance Prior for Polarization};

  \draw[year] (0.5, \hii+0.1) -- (0.5, \hii-0.1);
  \node[above=3pt] at (0.5, \hii-0.1) {2012};
  
  \draw[stereol] (1, \hii) -- (1, \hii+0.25);
  \draw[stereoc] (1, \hii) circle (3pt);
  \node[item] at (1, \hii+1.5) (2011sinha) {\textbf{Sinha \emph{et al.}} \cite{tog/SinhaKGSS12} \\ Extended Semi-Global Matching};

  \draw[year] (1.5, \hii+0.1) -- (1.5, \hii-0.1);
  \node[above=3pt] at (1.5, \hii-0.1) {2013};
  
  \draw[motionl] (2.375, \hii) -- (2.375,\hii-0.25);
  \draw[motionc] (2.375, \hii) circle (3pt);
  \node[item] at (2.75, \hii-0.2) (2013li) {\textbf{Li and Brown} \cite{iccv/LiB13} \\ Gradient Magnitude Prior for Motions};

  \draw[year] (3.25, \hii+0.1) -- (3.25, \hii-0.1);
  \node[above=3pt] at (3.25, \hii-0.1) {2014};
  
  \draw[linearl] (4, \hii) -- (4, \hii+0.25);
  \draw[linearc] (4, \hii) circle (3pt);
  \node[item,text width=4cm] at (4, \hii+1.5) (2014guo) {\textbf{Guo \emph{et al.}} \cite{cvpr/GuoCM14} \\ Low Rank Prior for Motion Sequences};

  \draw[polarl] (5.75, \hii) -- (5.75, \hii-0.25);
  \draw[polarc] (5.75, \hii) circle (3pt);
  \node[item,text width=4cm] at (5.75,\hii-0.2) (2014kong) {\textbf{Kong \emph{et al.}} \cite{pami/KongTS14} \\ Error Suppression and Reflection Refinement};

  \draw[year] (6.5, \hii+0.1) -- (6.5, \hii-0.1);
  \node[above=3pt] at (6.5, \hii-0.1) {2015};

  \draw[motionl] (7,  \hii) -- (7, \hii+0.25);
  \draw[motionc] (7,  \hii) circle (3pt);
  \node[item,text width=3.5cm] at (7, \hii+1.5) (2015simon) {\textbf{Xue \emph{et al.}} \cite{cvpr/SimonP15} \\ Gradient-based Layer and Motion Estimation};

  \draw[motionl] (8.75, \hii) -- (8.75, \hii-0.25);
  \draw[motionc] (8.75, \hii) circle (3pt);
  \node[item] at (8.75, \hii-0.2) (2015simon) {\textbf{Simon and Park} \cite{cvpr/SimonP15} \\ Average Prior for Dashboard Cameras};
   
  \draw[year] (9.25, \hii+0.1) -- (9.25, \hii-0.1);
  \node[above=3pt] at (9.25, \hii-0.1) {2016};

  \draw[motionl] (10, \hii) -- (10, \hii+0.25);
  \draw[motionc] (10, \hii) circle (3pt);
  \node[item,text width=3cm] at (10, \hii+1.5) (2016sun) {\textbf{Sun \emph{et al.}} \cite{mm/SunLYZWL16} \\ SIFT-Flow Motion and Intensity Priors};

  \draw[motionl] (11.5, \hii) -- (11.5, \hii-0.25);
  \draw[motionc] (11.5, \hii) circle (3pt);
  \node[item,text width=3cm] at (11.5, \hii-0.2) (2016yang) {\textbf{Yang \emph{et al.}} \cite{cvpr/YangLDT16} \\ Double Layer Bright-ness Constancy};

  \draw[year] (12, \hii-0.1) -- (12, \hii+0.1);
  \node[above=3pt] at (12, \hii-0.1) {2017};

  \draw[motionl] (12.75, \hii) -- (12.75, \hii+0.25);
  \draw[motionc] (12.75, \hii) circle (3pt);
  \node[item] at (12.75, \hii+1.5) (2017han) {\textbf{Han and Sim} \cite{cvpr/HanS17} \\ Gradient Low Rank Matrix Completion};
  \tikzmath{\hiii = -1;};
  
  \draw[thick,dashed] (-3, \hiii) -- (-2.5, \hiii);
  \draw[thick,-{Latex[length=3mm,width=2mm]}] (-2.5, \hiii) -- (14.5, \hiii);

  \draw[year] (-2.5, \hiii-0.1) -- (-2.5, \hiii+0.1);
  \node[above=3pt] at (-2.5, \hiii-0.1) {2018};

  \draw[motionl] (-2, \hiii) -- (-2, \hiii+0.25);
  \draw[motionc] (-2, \hiii) circle (3pt);
  \node[item,text width=3cm] at (-2, \hiii+1.5) (2018han) {\textbf{Han and Sim} \cite{tip/HanS18} \\ Extension of \cite{cvpr/HanS17} with Co-Saliency};

  \draw[year] (-1.5, \hiii-0.1) -- (-1.5, \hiii+0.1);
  \node[above=3pt] at (-1.5, \hiii-0.1) {2019};
  
  \draw[polarl] (-1, \hiii) -- (-1, \hiii-0.25);
  \node[polarc,cascaded] at (-1, \hiii) {};
  \node[item,text width=3cm] at (-1, \hiii-0.2) (2019lyu) {\textbf{Lyu \emph{et al.}} \cite{nips/LyuCLPS19} \\ Polarization Param-eter Aware Network};

  \draw[motionl] (0.5, \hiii) -- (0.5, \hiii+0.25);
  \node[motionc,single] at (0.5, \hiii) {};
  \node[item] at (0.5, \hiii+1.5) (2019alayrac) {\textbf{Alayrac \emph{et al.}} \cite{cvpr/AlayracCZ19} \\ Video Separation via a 3D ConvNet};

  \draw[motionl] (1.75, \hiii) -- (1.75, \hiii-0.25);
  \draw[motionc] (1.75, \hiii) circle (3pt);
  \node[item,text width=3.5cm] at (1.75, \hiii-0.2) (2019punnappurath) {\textbf{Pun. and Brown} \cite{cvpr/PunnappurathB19} \\ Dual Pixel Sensor \\ of Mobile Devices};

  \draw[year] (2.25, \hiii+0.1) -- (2.25, \hiii-0.1);
  \node[above=3pt] at (2.25, \hiii-0.1) {2020};
  
  \draw[polarl] (3.25, \hiii) -- (3.25, \hiii+0.25);
  \node[polarc,cascaded] at (3.25, \hiii) {};
  \node[item,text width=3.5cm] at (3.25, \hiii+1.5) (2020li) {\textbf{Li \emph{et al.}} \cite{eccv/LiQZH20} \\ Polarization Simulation Engine};

  \draw[motionl] (4.25, \hiii) -- (4.25, \hiii-0.25);
  \node[motionc,cascaded] at (4.25, \hiii) {};
  \node[item,text width=4cm] at (4.25, \hiii-0.2) (2020liu) {\textbf{Liu \emph{et al.}} \cite{cvpr/LiuL0CH20} \\ Joint Flow and Layer Estimation};

  \draw[polarl] (5.5, \hiii) -- (5.5, \hiii+0.25);
  \node[polarc,cascaded] at (5.5, \hiii) {};
  \node[item,text width=3cm] at (5.5, \hiii+1.5) (2020lei) {\textbf{Lei \emph{et al.}} \cite{cvpr/LeiHZYSC20} \\ Polarization of RAW Image};
  
  \draw[year] (6, \hiii-0.1) -- (6, \hiii+0.1);
  \node[above=3pt] at (6, \hiii-0.1) {2021};

  \draw[flashl] (6.75, \hiii) -- (6.75, \hiii-0.25);
  \node[flashc,cascaded] at (6.75, \hiii) {};
  \node[item,text width=4cm] at (6.875, \hiii-0.2) (2021lei) {\textbf{Lei and Chen} \cite{cvpr/LeiC21} \\ Networks based on Flash/No-Flash Pairs};

  \draw[year] (7.5, \hiii+0.1) -- (7.5, \hiii-0.1);
  \node[above=3pt] at (7.5, \hiii-0.1) {2022};

  \draw[motionl] (8, \hiii) -- (8, \hiii+0.25);
  \node[motionc,cascaded] at (8, \hiii) {};
  \node[item,text width=3cm] at (8, \hiii+1.5) (2022liu) {\textbf{Liu \emph{et al.}} \cite{pami/LiuLYCH22} \\  Meta Learning and Upgrades of \cite{cvpr/LiuL0CH20}};

  \draw[year] (8.5, \hiii+0.1) -- (8.5, \hiii-0.1);
  \node[above=3pt] at (8.5, \hiii-0.1) {2023};
  
  \draw[flashl] (9.25, \hiii) -- (9.25, \hiii-0.25);
  \node[flashc,cascaded] at (9.25, \hiii) {};
  \node[item,text width=3cm] at (9.25, \hiii-0.2) (2023lei) {\textbf{Lei \emph{et al.}} \cite{pami/LeiJC23} \\  Journal Ver-\\sion of \cite{cvpr/LeiC21}};

  \draw[year] (10, \hiii+0.1) -- (10, \hiii-0.1);
  \node[above=3pt] at (10, \hiii-0.1) {2024};

  \draw[flashl] (10.5, \hiii) -- (10.5, \hiii+0.25);
  \node[flashc,cascaded] at (10.5, \hiii) {};
  \node[item,text width=3cm] at (10.5, \hiii+1.5) (2024hong) {\textbf{Hong \emph{et al.}} \cite{ijcv/hong2024light} \\ Light Flickering Guided Removal};
  
  \draw[year] (11, \hiii+0.1) -- (11, \hiii-0.1);
  \node[above=3pt] at (11, \hiii-0.1) {2025};

  \draw[polarl] (11.75, \hiii) -- (11.75, \hiii-0.25);
  \node[polarc,single] at (11.75, \hiii) {};
  \node[item,text width=3cm] at (11.75, \hiii-0.2) (2025yao) {\textbf{Yao \emph{et al.}} \cite{cvpr/yao2025polarfree} \\  PolaRGB Dataset and Diffusion-based Model};

  \draw[oppol] (13, \hiii) -- (13, \hiii+0.25);
  \node[oppoc,single] at (13, \hiii) {};
  \node[item,text width=3cm] at (13, \hiii+1.5) (2025kee) {\textbf{Kee \emph{et al.}} \cite{cvpr/kee2025removing} \\ RAW Domain with Opposite-View Input};
  
  \tikzmath{\hiv = -3;};
  \draw[democ] (1.75, \hiv) circle (3pt);
  \node[item] at (1.75, \hiv-0.1) () {Optimization-based Methods/Survey};
  
  \node[democ,single] at (5.75, \hiv-0.05) {};
  \node[item] at (5.75, \hiv-0.1) () {Single-Stream Network Structure};

  \node[democ,cascaded] at (9.75, \hiv-0.05) {};
  \node[item] at (9.75, \hiv-0.1) () {Cascaded/Iterative Network Structure};

\end{tikzpicture}
\caption{Roadmap of multiple image reflection removal/separation: \textcolor{color-blue}{blue} nodes represent polarization-based methods, \textcolor{color-red}{red} nodes are motion/shift-based methods, \textcolor{color-yellow}{yellow} nodes denote stereo-based methods, \textcolor{color-green}{green} nodes represent general multiple-image-based methods,  \textcolor{color-purple}{purple} nodes denote flash/no-flash-pair-based methods, and \textcolor{color-cyan}{cyan} nodes indicate the methods with paired reflection scene indicators. }
\label{fig:timeline1}
\end{figure*}

\section{A Roadmap-style Survey}
\label{sec:survey}

Reflection separation addresses the challenging task of decomposing an observed image into transmission and reflection layers under severe ill-posedness. Over the past two decades, the field has evolved significantly to mitigate this ambiguity, branching into multiple-image and single-image trajectories. As summarized by the comprehensive roadmaps in Fig.~\ref{fig:timeline1} and Fig.~\ref{fig:timeline2}, alongside an interactive version available on our project page, multiple-image methods typically leverage auxiliary physical cues to explicitly reduce layer entanglement. While these methods provide useful context, they often impose restrictive capture conditions. Consequently, the field has increasingly focused on the more practical single-image setting. This domain has witnessed a paradigm shift from optimization-based methods relying on hand-crafted visual priors to data-driven architectures that progressively emphasize structured modeling of layer interactions. In the following, we review representative works from both categories to contextualize the development of our proposed dual-stream interactive paradigm.

\subsection{Multiple Image Reflection Separation}
\label{sec:survey_mirs}

As categorized by the timeline in Fig.~\ref{fig:timeline1}, Multiple Image Reflection Separation (MIRS) methods exploit auxiliary observations to mitigate the fundamental ambiguity of layer decomposition. We review representative works from each primary category based on their employed physical or geometric cues in the following discussion.

\noindent\textbf{Polarization-based Methods.}
Polarization-based approaches leverage the differing polarization characteristics of the transmission and reflection layers. By rotating a linear polarizer in front of the camera, multiple images can be captured with varying reflection intensities, as discussed in the optical motivation of the main paper. This enables the disentanglement of layers based on their polarization responses, which is a principle rooted in physical optics. A series of methods have explored this strategy~\cite{cvpr/FaridA99,schechner1999polarization,tip/KongTS11,pami/KongTS14,nips/LyuCLPS19,eccv/LiQZH20,cvpr/LeiHZYSC20}, which are indicated by blue nodes in Fig.~\ref{fig:timeline1}. Early work by Farid and Adelson~\cite{cvpr/FaridA99} applied independent component analysis (ICA) to polarized image pairs to estimate the underlying layers. Kong \emph{et al.}~\cite{tip/KongTS11,pami/KongTS14} further introduced an optimization framework under the assumptions of mutual gradient exclusivity between polarization states. Wieschollek \emph{et al.}~\cite{eccv/WieschollekGGK18} enhanced the reconstruction fidelity using a canonical projection layer within a deep learning pipeline. Lyu \emph{et al.}~\cite{nips/LyuCLPS19} combined unpolarized and polarized images using semi-reflector orientation estimation. Li \emph{et al.}~\cite{eccv/LiQZH20} proposed a polarization-guided ray-tracing model to simulate physical reflection formation, while Lei \emph{et al.}~\cite{cvpr/LeiHZYSC20} addressed misalignment issues by designing a refined data acquisition process and a two-stage learning framework. More recently, Yao \emph{et al.}~\cite{cvpr/yao2025polarfree} proposed the PolaRGB dataset specifically curated for polarization-based reflection removal, and applied a diffusion-based technique to improve separation accuracy. 
{Despite their effectiveness, polarization-based methods require specialized hardware and are sensitive to viewing geometry, limiting their general applicability.}

\noindent\textbf{Motion-based Methods.}
The reflection layer typically exhibits geometric shifts with changes in the viewing angle due to refraction effects governed by Snell’s Law, whereas the transmission layer tends to vary more smoothly under stereo geometry.  Based on this observation, a large number of methods have explored the use of relative motion and stereo cues~\cite{eccv/BergenBHP90,pami/BergenBHP92,cvpr/SzeliskiAA00,iccv/SarelI05,pami/GaiSZ12,iccv/LiB13,tog/Freeman15,cvpr/SimonP15,mm/SunLYZWL16,cvpr/YangLDT16,cvpr/HanS17,nips/LyuCLPS19,cvpr/AlayracCZ19,cvpr/PunnappurathB19,cvpr/LiuL0CH20,pami/LiuLYCH22,iccv/SchechnerKB98,cvpr/TsinKS03,tog/SinhaKGSS12}, which are colored red in Fig.~\ref{fig:timeline1}. Early work by Bergen and Burt~\cite{eccv/BergenBHP90,pami/BergenBHP92} employed motion parallax to decouple layers, while Szeliski \emph{et al.}~\cite{cvpr/SzeliskiAA00} introduced video stabilization to isolate the transmission layer by suppressing temporally inconsistent reflections. Sarel and Irani~\cite{iccv/SarelI05} enhanced robustness using normalized cross-correlation, while Gai \emph{et al.}~\cite{pami/GaiSZ12} modeled motion fields for both layers explicitly. Li and Brown~\cite{iccv/LiB13} improved alignment via SIFT-flow. Xue \emph{et al.}~\cite{tog/Freeman15} further enforced sparsity priors to promote independence between layers. Later efforts by Simon and Pritch~\cite{cvpr/SimonP15} and Sun \emph{et al.}~\cite{mm/SunLYZWL16} incorporated temporal consistency cues, while Yang \emph{et al.}~\cite{cvpr/YangLDT16} and Han and Sim~\cite{cvpr/HanS17} improved optical flow estimation for transparent scenes.
With the emergence of deep learning, several works began addressing more complex and non-rigid motion patterns. Lyu \emph{et al.}~\cite{nips/LyuCLPS19} introduced a neural framework incorporating semi-reflector orientation estimation to guide layer decomposition. Alayrac \emph{et al.}~\cite{cvpr/AlayracCZ19} proposed a weakly-supervised learning strategy for generic video layer separation. Punnappurath and Brown~\cite{cvpr/PunnappurathB19} employed low-rank priors via matrix completion in conjunction with learning-based alignment. Liu \emph{et al.}~\cite{cvpr/LiuL0CH20,pami/LiuLYCH22} advanced the field by jointly learning motion estimation and layer separation through feature correlation, achieving improved robustness in real-world settings.
While motion-based methods avoid the need for specialized equipment, their effectiveness heavily relies on precise image alignment and degrades notably under non-rigid motion or large parallax, accompanied by substantial computational overhead.

\noindent\textbf{Stereo-based Methods.}
These approaches exploit depth differences captured by binocular or multi-view cameras to separate overlapping layers. Schechner \emph{et al.} \cite{iccv/SchechnerKB98} employed depth-from-focus by capturing images at varied focal lengths, using blur analysis to achieve initial separation and refinement. Tsin \emph{et al.} \cite{cvpr/TsinKS03} proposed a layered stereo matching model, introducing nested plane sweep and graph-cut optimization to jointly estimate depths and separate layers. Sinha \emph{et al.} \cite{tog/SinhaKGSS12} combined stereo disparity estimation with multi-depth reasoning, using boundary and gradient constraints to enhance separation in mirror-like and translucent scenes. These methods are represented by the blue markers in Fig.~\ref{fig:timeline1}.
{Although such methods alleviate the dependency on manual priors, their performance declines with low-texture regions, small disparities, or inaccurate depth estimation, often leading to unstable results.}

\noindent\textbf{Flash/No-flash Pair-based Methods.}
These methods capture both flash and no-flash images to exploit lighting and reflection differences for separation. Agrawal \emph{et al.} \cite{tog/AgrawalRNL05} introduced gradient projection and flash-exposure sampling to address flash reflection artifacts and imbalanced lighting, leveraging gradient coherence to remove reflections while maintaining image details. Lei and Chen \cite{cvpr/LeiC21} proposed subtracting the ambient image from the flash image to create a reflection-free flash-only image for improved reflection removal. Lei \emph{et al.} \cite{pami/LeiJC23} further developed a method to handle misalignment in handheld photography, using a misalignment synthesis pipeline and depth estimation to achieve state-of-the-art performance even with imperfect data. Hong \emph{et al.} \cite{ijcv/hong2024light} took a novel approach by leveraging the periodic flickering of artificial light to extract fluctuant and consistent components from reflective videos, aiding the separation of reflection and transmission scenes. {These methods face limitations, including reliance on two images, misalignment in dynamic scenes, flash artifacts, and reduced effectiveness in poorly lit or highly reflective environments.}

\noindent\textbf{General Multiple Image-based Methods.}
These methods exploit variations across sequences or videos to separate reflection and transmission layers. As marked by green symbols in Fig.~\ref{fig:timeline1}, Sarel and Irani \cite{iccv/SarelI05} proposed ``layer information exchange'', minimizing structural correlation across image mixtures at multiple scales to handle non-rigid transparent motion. Guo \emph{et al.} \cite{cvpr/GuoCM14} developed an Augmented Lagrangian Multiplier-based method, leveraging layer gradient sparsity and inter-image consistency to solve the decomposition task across both synthetic and real-world data.
{While general multiple-image-based methods perform well under dynamic conditions, they rely heavily on sufficient variation across frames and accurate alignment, limiting robustness when these assumptions are violated.}

\noindent\textbf{Summary.} Multiple image methods successfully mitigate the fundamental ambiguity of reflection separation by introducing physical or geometric constraints from auxiliary observations. However, requirements for specialized hardware or static scenes restrict their practical applicability, thereby motivating our focus on robust single image solutions in this work. Nevertheless, to demonstrate the versatility of our proposed dual stream paradigm, we further adapt our architecture to leverage auxiliary physical cues for polarized image reflection separation in Sec.~\ref{sec:extend_exps}.

\subsection{Single Image Reflection Separation}
\label{sec:survey_sirs}

\begin{figure*}[t]
\centering
\begin{tikzpicture}[timeline/.style={draw, thick, -{Latex[length=3mm,width=2mm]}}, 
                    gradc/.style={fill=white, draw=color-blue,very thick},
                    gradl/.style={draw=color-blue,very thick},
                    manualc/.style={fill=white, draw=color-purple,very thick},
                    manuall/.style={draw=color-purple,very thick},
                    smoothc/.style={fill=white, draw=color-yellow,very thick},
                    smoothl/.style={draw=color-yellow,very thick},
                    deepc/.style={fill=white, draw=color-red,very thick},
                    deepl/.style={draw=color-red,very thick},
                    panoc/.style={fill=white, draw=color-pink,very thick},
                    panol/.style={draw=color-pink,very thick},
                    gmmc/.style={fill=white, draw=color-orange,very thick},
                    gmml/.style={draw=color-orange,very thick},
                    datac/.style={fill=white, draw=color-green,very thick},
                    datal/.style={draw=color-green,very thick},
                    democ/.style={fill=white,draw=black,very thick},
                    cascaded/.style={diamond,minimum width=0.275cm,minimum height=0.275cm,inner sep=0},
                    single/.style={isosceles triangle,rotate=90,isosceles triangle apex angle=60, minimum size =0.2cm,inner sep=0},
                    dual-stream/.style={rectangle,minimum width=0.2cm,minimum height=0.2cm,inner sep=0},
                    dual-stream-i/.style={regular polygon,regular polygon sides=5,minimum width=0.275cm,minimum height=0.275cm,inner sep=0},
                    year/.style={draw, black, thick},
                    item/.style={align=center, text width=3cm, inner sep=5pt, anchor=north},
                    every node/.style={font=\footnotesize}]

  \draw[thick] (-2.5,5) -- (13.5,5);
  \draw[thick,dashed,-{Latex[length=3mm,width=2mm]}] (13.5,5) -- (14.5,5);

  \tikzmath{\hi = 5.0;};
  
  \draw[year] (-2, \hi+0.1) -- (-2, \hi-0.1);
  \node[above=3pt] at (-2, \hi-0.1) {2002};

  \draw[gradl] (-1.5, \hi) -- (-1.5, \hi-0.25);
  \draw[gradc] (-1.5, \hi) circle (3pt);
  \node[item,text width=3.5cm] at (-1.5, 4.8) (2002levin) {\textbf{Levin \emph{et al.}} \cite{nips/LevinZW02} \\ Most Probable Sepa-ration Search by Deri-vative and Corner};

  \draw[year] (-1, \hi+0.1) -- (-1, \hi-0.1);

  \draw[year] (-0.5, \hi+0.1) -- (-0.5, \hi-0.1);
  \node[above=3pt] at (-0.5, \hi-0.1) {2004};

  \draw[gradl] (0, \hi) -- (0, \hi+0.25);
  \draw[gradc] (0, \hi) circle (3pt);
  \node[item,text width=3.5cm] at (0, \hi+1.8) (2004levin) {\textbf{Levin \emph{et al.}} \cite{cvpr/LevinZW04} \\  Improved Cost Function and Patch Discretization Database of \cite{nips/LevinZW02}};
  
  \draw[manuall] (1.25, \hi) -- (1.25, \hi-0.25);
  \draw[manualc] (1.25, \hi) circle (3pt);
  \node[item,text width=3cm] at (1.25, 4.8) (2004levin) {\textbf{Levin and Weiss} \cite{eccv/LevinW04} \\ Separation Based on Manually Marked Gradients};

  \draw[year] (2, \hi+0.1) -- (2, \hi-0.1);
  \node[above=3pt] at (2, \hi-0.1) {2005};

  \node at (2.53, 5.2) {\ldots};
  \draw[year] (3, \hi+0.1) -- (3, \hi-0.1);
  \node[above=3pt] at (3, \hi-0.1) {2009};
  
  \draw[smoothl] (3.5, \hi) -- (3.5, \hi+0.25);
  \draw[smoothc] (3.5, \hi) circle (3pt);
  \node[item,text width=4cm] at (3.5, \hi+1.8) (2009chung) {\textbf{Chung \emph{et al.}} \cite{wacv/ChungCWC09} \\  Fuzzy Classification of Edges of Layers involving Blur Measurement};
  
  \draw[year] (4, \hi+0.1) -- (4, \hi-0.1);
  \node[above=3pt] at (4, \hi-0.1) {2010};
  \node at (4.53, 5.2) {\ldots};
  \draw[year] (5, \hi+0.1) -- (5, \hi-0.1);
  \node[above=3pt] at (5, \hi-0.1) {2013};
  
  \draw[smoothl] (5.5, \hi) -- (5.5, \hi-0.25);
  \draw[smoothc] (5.5, \hi) circle (3pt);
  \node[item,text width=4cm] at (5.5,\hi-0.2) (2013yan) {\textbf{Yan \emph{et al.}} \cite{iscas/YanXY13} \\ Edge Classification via Gradient Profile Sharpness Validated by NGC \cite{eccv/SarelI04}};
  
  \draw[year] (6, \hi+0.1) -- (6, \hi-0.1);
  \node[above=3pt] at (6, \hi-0.1) {2014};

  \draw[smoothl] (7, \hi) -- (7, \hi+0.25);
  \draw[smoothc] (7, \hi) circle (3pt);
  \node[item] at (7, \hi+1.8) (2014li) {\textbf{Li and Brown} \cite{cvpr/LiB14} \\ Smoother Reflection Separation via Stronger Gradient Penalty};
  
  \draw[year] (7.5, \hi+0.1) -- (7.5, \hi-0.1);
  \node[above=3pt] at (7.5, \hi-0.1) {2015};
  
  \draw[gmml] (8.75, \hi) -- (8.75, \hi-0.25);
  \draw[gmmc] (8.75, \hi) circle (3pt);
  \node[item,text width=3.5cm] at (8.75, \hi-0.2) (2015shih) {\textbf{Shih \emph{et al.}} \cite{cvpr/ShihKDF15} \\ Separation based on GMM and Ghosting Kernel Estimation};

  \draw[year] (9.25, \hi+0.1) -- (9.25, \hi-0.1);
  \node[above=3pt] at (9.25, \hi-0.1) {2016};

  \tikzmath{\hloc = 10;};
  \draw[smoothl] (\hloc, \hi) -- (\hloc, \hi+0.25);
  \draw[smoothc] (\hloc, \hi) circle (3pt);
  \node[item] at (\hloc, \hi+1.8) (2016wan) {\textbf{Wan \emph{et al.}} \cite{icip/WanSHK16} \\ Layer Edge Selection based on Depth of Field Pyramid};

  \tikzmath{\hloc = 10.75;};
  \draw[year] (\hloc, \hi+0.1) -- (\hloc, \hi-0.1);
  \node[above=3pt] at (\hloc, \hi-0.1) {2017};
  
  \tikzmath{\hloc = 11.5;};
  \draw[datal] (\hloc, \hi) -- (\hloc, \hi-0.25);
  \draw[datac] (\hloc, \hi) circle (3pt);
  \node[item] at (\hloc, \hi-0.2) (2017wan) {\textbf{Wan \emph{et al.}} \cite{iccv/WanSDTK17} \\ $\text{SIR}^2$ Benchmark Dataset with Real-world Paired Layers};

  \tikzmath{\hloc = 13;};
  \draw[gradl] (\hloc, \hi) -- (\hloc, \hi+0.25);
  \node[gradc,cascaded] at (\hloc, \hi) {};
  \node[item,text width=3.5cm] at (\hloc, \hi+1.8) (2017fan) {\textbf{Fan \emph{et al.}} \cite{iccv/FanYHCW17} \\ Cascaded Edge Detection and Guided Separation Networks};

  \tikzmath{\hii = 1.5;};
  
  \draw[thick,dashed] (-2.5, \hii) -- (-2, \hii);
  \draw[thick] (-2,\hii) -- (13.5,\hii);
  \draw[thick,dashed,-{Latex[length=3mm,width=2mm]}] (13.5,\hii) -- (14.5,\hii);

  \tikzmath{\hloc = -2;};
  \draw[year] (\hloc, \hii+0.1) -- (\hloc, \hii-0.1);
  \node[above=3pt] at (\hloc, \hii-0.1) {2018};

  \tikzmath{\hloc = -1.5;};
  \draw[gradl] (\hloc, \hii) -- (\hloc, \hii-0.25);
  \node[gradc,cascaded] at (\hloc, \hii) {};
  \node[item,text width=3cm] at (\hloc, \hii-0.1) (2018wan) {\textbf{Wan \emph{et al.}} \cite{cvpr/WanSDTK18} \\ Multi-Scale Edge-Feature-Guided Network};

  \tikzmath{\hloc = -1;};
  \draw[deepl] (\hloc, \hii) -- (\hloc, \hii+0.25);
  \node[deepc,single] at (\hloc, \hii)  {};
  \node[item,text width=4cm] at (\hloc, \hii+1.8) (2018zhang) {\textbf{Zhang \emph{et al.}} \cite{cvpr/ZhangNC18a} \\ Single Network with Perceptual, Adversarial and Exclusion Losses};

  \tikzmath{\hloc = 1.1;};
  \draw[deepl] (\hloc, \hii) -- (\hloc, \hii-0.25);
  \node[deepc,cascaded] at (\hloc, \hii) {};
  \node[item] at (\hloc, \hii-0.1) (2018yang) {\textbf{Yang \emph{et al.}} \cite{eccv/YangGLS18} \\ Network with Three Stages, Alternated to Estimate the Layers};

  \tikzmath{\hloc = 1.6;};
  \draw[year] (\hloc, \hii+0.1) -- (\hloc, \hii-0.1);
  \node[above=3pt] at (\hloc, \hii-0.1) {2019};

  \tikzmath{\hloc = 2;};
  \draw[deepl] (\hloc, \hii) -- (\hloc, \hii+0.25);
  \node[deepc,single] at (\hloc, \hii)  {};
  \node[item,text width=4cm] at (\hloc, \hii+1.8) (2019wei) {\textbf{Wei \emph{et al.}} \cite{cvpr/WeiYFW019} \\ Network with Con-textual Blocks Exploit-ing Misaligned Pairs};

  \tikzmath{\hloc = 2.5;};
  \draw[year] (\hloc, \hii+0.1) -- (\hloc, \hii-0.1);
  \node[above=3pt] at (\hloc, \hii-0.1) {2020};
  
  \tikzmath{\hloc = 4;};
  \draw[deepl] (\hloc, \hii) -- (\hloc, \hii-0.25);
  \node[deepc,cascaded] at (\hloc, \hii) {};
  \node[item,text width=4cm] at (\hloc, \hii-0.1) (2020wan) {\textbf{Wan \emph{et al.}} \cite{cvpr/WanSLDK20} \\ Cascaded Separation and Reflection Enhancement Networks};

  \tikzmath{\hloc = 5;};
  \draw[deepl] (\hloc, \hii) -- (\hloc, \hii+0.25);
  \node[deepc,dual-stream] at (\hloc, \hii) {};
  \node[item] at (\hloc, \hii+1.8) (2020li) {\textbf{Li \emph{et al.}} \cite{cvpr/LiY0LH20} \\ Dual-Branch Iterative Separation Network with LSTM Units};
  
  \tikzmath{\hloc = 6.85;};
  \draw[datal] (\hloc, \hii) -- (\hloc, \hii-0.25);
  \node[datac,single] at (\hloc, \hii) {};
  \node[item] at (\hloc, \hii-0.1) (2019wen) {\textbf{Wen \emph{et al.}} \cite{cvpr/WenT0LHH19} \\ GAN-based Reflection Superimposition Data Synthesis};

  \tikzmath{\hloc = 7.25;};
  \draw[year] (\hloc, \hii+0.1) -- (\hloc, \hii-0.1);
  \node[above=3pt] at (\hloc, \hii-0.1) {2021};

  \tikzmath{\hloc = 7.75;};
  \draw[deepl] (\hloc, \hii) -- (\hloc, \hii+0.25);
  \node[deepc,dual-stream] at (\hloc, \hii) {};
  \node[item] at (\hloc, \hii+1.8) (2021feng) {\textbf{Feng \emph{et al.}} \cite{tip/FengPJCZL21} \\ Dual-Branch Sepa-ration and Further Refinement Network};     

  \tikzmath{\hloc = 9.75;};
  \draw[deepl] (\hloc, \hii) -- (\hloc, \hii-0.25);
  \node[deepc,cascaded] at (\hloc, \hii) {};
  \node[item,text width=4cm] at (\hloc,\hii-0.1)  (2021li) {\textbf{Li \emph{et al.}} \cite{li2023two} \\ Multi-Scale Reflection Guided Transmission Estimation Network};

  \tikzmath{\hloc = 10.5;};
  \draw[deepl] (\hloc, \hii) -- (\hloc, \hii+0.25);
  \node[deepc,cascaded] at (\hloc, \hii) {};
  \node[item] at (\hloc, \hii+1.8) (2021zheng) {\textbf{Zheng \emph{et al.}} \cite{cvpr/ZhengSCJDK21} \\ Two-Stage Network with Absorption Factor Estimation};

  \tikzmath{\hloc = 12.75;};
  \draw[deepl] (\hloc, \hii) -- (\hloc, \hii-0.25);
  \node[deepc,dual-stream] at (\hloc, \hii) {};
  \node[item, text width = 4cm] at (\hloc, \hii-0.1) (2021feng2) {\textbf{Feng \emph{et al.}} \cite{icme/FengJJP0L21} \\ Two-Branch Network with Contrastive Fea-ture and Loss Designs};

  \tikzmath{\hloc = 13.25;};
  \draw[panol] (\hloc, \hii) -- (\hloc, \hii+0.25);
  \node[panoc,dual-stream] at (\hloc, \hii) {};
  \node[item, text width = 4cm] at (\hloc, \hii+1.8) (2021hong) {\textbf{Hong \emph{et al.}} \cite{cvpr/HongZZJKS21} \\ Reflection Retrieval and Refinement based on Panoramic Images};

  \tikzmath{\hiii = -2;};
  \draw[thick,dashed] (-2.5,\hiii) -- (-2,\hiii);
  \draw[thick,-{Latex[length=3mm,width=2mm]}] (-2,\hiii) -- (14.5,\hiii);

  \tikzmath{\hloc = -1.5;};
  \draw[deepl] (\hloc, \hiii) -- (\hloc, \hiii+0.25);
  \node[deepc,dual-stream-i] at (\hloc, \hiii) {};
  \node[item, text width = 3cm] at (\hloc, \hiii+1.8) (2021hu) {\textbf{Hu and Guo} \cite{nips/HuG21} \\  Dual-Stream Feature Interaction through Paired Activators};

  \tikzmath{\hloc = -1;};
  \draw[deepl] (\hloc, \hiii) -- (\hloc, \hiii-0.25);
  \node[deepc,cascaded] at (\hloc, \hiii) {};
  \node[item, text width = 4cm] at (\hloc, \hiii-0.1) (2021dong) {\textbf{Dong \emph{et al.}} \cite{iccv/Dong00BXL21} \\ Recurrent Separation Network with Multi-Scale Laplacian Prior};

  \tikzmath{\hloc = -0.5;};
  \draw[year] (\hloc, \hiii-0.1) -- (\hloc, \hiii+0.1);
  \node[above=3pt] at (\hloc, \hiii-0.1) {2022};

  \tikzmath{\hloc = 1.2;};
  \draw[deepl] (\hloc, \hiii) -- (\hloc, \hiii+0.25);
  \node[deepc,cascaded] at (\hloc, \hiii) {};
  \node[item,text width=3cm] at (\hloc, \hiii+1.8) (2022zhang) {\textbf{Zhang \emph{et al.}} \cite{mm/ZhangSL22} \\ Deep Unfolding Network with Multi-Scale Dictionary};

  \tikzmath{\hloc = 1.7;};
  \draw[year] (\hloc, \hiii+0.1) -- (\hloc, \hiii-0.1);
  \node[above=3pt] at (\hloc, \hiii-0.1) {2023};

  \tikzmath{\hloc = 2.2;};
  \draw[deepl] (\hloc, \hiii) -- (\hloc, \hiii-0.25);
  \node[deepc,single] at (\hloc, \hiii) {};
  \node[item,text width=3cm] at (\hloc, \hiii+-0.1) (2023song) {\textbf{Song \emph{et al.}} \cite{cvpr/song2023robust} \\ Robust Structure and Training against Adversarial Attacks};

  \tikzmath{\hloc = 4;};
  \draw[deepl] (\hloc, \hiii) -- (\hloc, \hiii+0.25);
  \node[deepc,dual-stream-i] at (\hloc, \hiii) {};
  \node[item,text width=3cm] at (\hloc, \hiii+1.8) (2023hu) {\textbf{Hu and Guo} \cite{iccv/Hu23} \\Dual-Stream Net-\\work with Mutually-Gated Interactions};

  \tikzmath{\hloc = 4.5;};
  \draw[year] (\hloc, \hiii-0.1) -- (\hloc, \hiii+0.1);
  \node[above=3pt] at (\hloc, \hiii-0.1) {2024};

  \tikzmath{\hloc = 5;};
  \draw[datal] (\hloc, \hiii) -- (\hloc, \hiii-0.25);
  \node[datac,cascaded] at (\hloc, \hiii) {};
  \node[item,text width=3cm] at (\hloc, \hiii-0.1) (2024zhu) {\textbf{Zhu \emph{et al.}} \cite{cvpr/zhu2024revisiting} \\ Two-Stage model with Enhanced Data Pipeline};

  \tikzmath{\hloc = 6.8;};
  \draw[manuall] (\hloc, \hiii) -- (\hloc, \hiii+0.25);
  \node[manualc,cascaded] at (\hloc, \hiii) {};
  \node[item,text width=3.5cm] at (\hloc, \hiii+1.8) (2024zhong) {\textbf{Zhong \emph{et al.}} \cite{cvpr/ZhongHWLS24} \\ Language Driven Separation Based on Cross Attention};

  \tikzmath{\hloc = 7.3;};
  \draw[manuall] (\hloc, \hiii) -- (\hloc, \hiii-0.25);
  \node[manualc,single] at (\hloc, \hiii) {};
  \node[item,text width=2.8cm] at (\hloc, \hiii-0.1) (2024hong) {\textbf{Hong \emph{et al.}} \cite{eccv/hong2024differ} \\ Iterative Language-Based Diffusion Model};

  \tikzmath{\hloc = 9.8;};
  \draw[deepl] (\hloc, \hiii) -- (\hloc, \hiii+0.25);
  \node[deepc,dual-stream-i] at (\hloc, \hiii) {};
  \node[item,text width=3.2cm] at (\hloc, \hiii+1.8) (2024hu) {\textbf{Hu and Guo} \cite{nips/hu2024single} \\ Dual-Stream Trans-former with Dual-Attention Interactions};

 \tikzmath{\hloc = 10.3;};
  \draw[year] (\hloc, \hiii-0.1) -- (\hloc, \hiii+0.1);
  \node[above=3pt] at (\hloc, \hiii-0.1) {2025};

  \tikzmath{\hloc = 10.8;};
  \draw[deepl] (\hloc, \hiii) -- (\hloc, \hiii-0.25);
  \node[deepc,dual-stream-i] at (\hloc, \hiii) {};
  \node[item,text width=3.2cm] at (\hloc-0.65, \hiii-0.1) (2025huang) {\textbf{Huang \emph{et al.}} \cite{pami/huang2025lightweight} \\  Dual-Stream Net-\\work with Unfold-\\ing Exclusion Blocks};

  \tikzmath{\hloc = 12.5;};
  \draw[deepl] (\hloc, \hiii) -- (\hloc, \hiii+0.25);
  \node[deepc,dual-stream-i] at (\hloc, \hiii) {};
  \node[item,text width=3.0cm] at (\hloc, \hiii+1.8) (2025zhao) {\textbf{Zhao \emph{et al.}} \cite{cvpr/zhao2025reversible} \\ Dual-Stream Reversible De-\\coupling Network};

  \tikzmath{\hloc = 13;};
  \draw[year] (\hloc, \hiii-0.1) -- (\hloc, \hiii+0.1);
  \node[above=3pt] at (\hloc, \hiii-0.1) {2026};

  \tikzmath{\hloc = 13.5;}; 
  \draw[deepl] (\hloc, \hiii) -- (\hloc, \hiii-0.25); \node[deepc,single] at (\hloc, \hiii) {}; 
  \node[item] at (\hloc-0.5, \hiii-0.1) {\textbf{Hu \emph{et al.}} \cite{aaai/hu2026dereflection} \\ One-Step Diffu-\\sion Priors with Larger Dataset};

  \tikzmath{\hiv = -4;};
  \draw[democ] (0.25, \hiv) circle (3pt);
  \node[item] at (0.25, \hiv-0.1) () {Optimization-based Methods/Survey};
  
  \node[democ,single] at (3, \hiv-0.05) {};
  \node[item] at (3, \hiv-0.1) () {Single-Stream Network Structure};

  \node[democ,cascaded] at (5.75, \hiv-0.05) {};
  \node[item] at (5.75, \hiv-0.1) () {Cascaded/Iterative Network Structure};
   
  \node[democ,dual-stream] at (8.5, \hiv-0.05) {};
  \node[item] at (8.5, \hiv-0.1) () {Dual-Stream Network Structure};

  \node[democ,dual-stream-i] at (12, \hiv-0.05) {};
  \node[item,text width=4cm] at (12, \hiv-0.1) () {Dual-Stream Network Structure with Feature Interactions};

  
\end{tikzpicture}
\caption{Roadmap of single-image reflection removal/separation: \textcolor{color-blue}{blue} nodes represent methods mainly based on gradient/edge/corner priors, \textcolor{color-purple}{purple} nodes denote manual annotation/language guided methods, \textcolor{color-yellow}{yellow} nodes are methods using blur/relative smoothness/depth of field priors, \textcolor{color-red}{red} nodes denote perceptual information aided methods with pretrained backbone/perceptual loss/adversarial training, \textcolor{color-orange}{orange} nodes show methods with ghosting clues, \textcolor{color-green}{green} nodes denote dataset contributions, and \textcolor{color-pink}{pink} ones represent dependency on panoramic images}
\label{fig:timeline2}
\end{figure*}

As summarized in Fig.~\ref{fig:timeline2}, the single-image setting has undergone a continuous evolution. Early approaches were dominated by optimization-based methods relying on strong visual priors, such as gradient sparsity \cite{nips/LevinZW02}. Later, researchers explored diverse physical assumptions, including relative smoothness \cite{cvpr/LiB14} and ghosting effects \cite{cvpr/ShihKDF15}. With the advent of deep learning, a paradigm shift toward data-driven approaches emerged, where single-stream and cascaded architectures leveraged large-scale datasets for improved performance. More recently, the field has transitioned toward structured modeling of layer interactions. Dual-stream architectures \cite{cvpr/LiY0LH20} explicitly model transmission and reflection in parallel, evolving into interaction-driven designs \cite{nips/HuG21, nips/hu2024single} that enable deeper cross-layer information exchange. In parallel, multimodal and language-guided approaches \cite{cvpr/ZhongHWLS24} have emerged for resolving semantic ambiguity. Based on this evolution, we categorize existing single-image methods according to their underlying priors and architectural designs as follows:

\noindent\textbf{Gradient/Edge Prior-based Methods.}
Emerging in the early 2000s, as marked by the blue nodes in the roadmap, these approaches exploit the sparsity of image gradients. Levin \emph{et al.} \cite{nips/LevinZW02,cvpr/LevinZW04} pioneered this era using sparse gradient priors and iterative optimization. Advancing into the late 2010s, researchers integrated these priors into deep neural architectures. CEILNet \cite{iccv/FanYHCW17} and Wan \emph{et al.} \cite{cvpr/WanSDTK18} developed edge-guided networks to capture structural boundaries. However, these edge-driven methods frequently struggled when both layers exhibited dense, overlapping textures, which eventually catalyzed the field's shift toward more comprehensive perceptual models.

\noindent\textbf{Blur/Smoothness Prior-based Methods.}
Flourishing in the mid-2010s (yellow nodes), these methods assume reflection layers are inherently blurrier due to depth-of-field discrepancies. Early implementations utilized total variation \cite{wacv/ChungCWC09} or measured gradient profile sharpness \cite{iscas/YanXY13}. Li and Brown \cite{cvpr/LiB14} formally introduced the relative smoothness prior by penalizing reflection gradients, while Wan \emph{et al.} \cite{icip/WanSHK16} leveraged depth-of-field pyramids. Although occasionally misclassifying layers when the background was also defocused, this relative smoothness assumption profoundly influenced the field by pioneering a foundational data-synthesis strategy widely adopted in subsequent deep learning models.

\noindent\textbf{Ghosting and Panoramic Methods.}
To tackle specific physical phenomena, mid-2010s methods explored ghosting effects (orange nodes). Shih \emph{et al.} \cite{cvpr/ShihKDF15} explicitly modeled multi-surface glass reflections using a double-impulse kernel and a Gaussian Mixture Model. Yet, its spatially invariant assumption restricted its application in wide-angle scenes. To resolve extreme ambiguities, the early 2020s saw the rise of panoramic methods (pink nodes). Approaches like PAR2Net \cite{cvpr/HongZZJKS21,pami/HongZZJKS23} aligned reflection scenes with the contaminated image to guide neural recovery. Despite their high fidelity, heavy reliance on precise geometric alignment and substantial computational overhead limited their practical deployment in real-world environments.

\noindent\textbf{Annotation and Language Guidance-based Methods.}
Represented by purple nodes, user-assisted separation first appeared in the early 2000s \cite{eccv/LevinW04,pami/LevinW07} relying on manual edge annotations. Following a long hiatus, the mid-2020s witnessed a strong revival of this category driven by multimodal learning. Textual prompts \cite{cvpr/ZhongHWLS24}, language-driven diffusion \cite{eccv/hong2024differ}, and contrastive masks unifying visual-textual cues \cite{aaai/chen2025firm} emerged as powerful alternatives for resolving overlapping semantics. While achieving remarkable accuracy aligned with human intent, their inherent reliance on external inputs restricts automation and scalability.

\noindent\textbf{Dataset Contributions.}
High-quality benchmarks (green nodes) have been pivotal since the deep learning boom. The SIR$^2$ dataset \cite{iccv/WanSDTK17} established a standard in 2017 by capturing diverse real-world scenes, while Wen \emph{et al.} \cite{cvpr/WenT0LHH19} reduced the synthetic-to-real gap via nonlinear alpha blending. More recently, Zhu \emph{et al.} \cite{cvpr/zhu2024revisiting} released the large-scale RRW dataset with over 14,950 high-resolution pairs. Hu \emph{et al.} \cite{aaai/hu2026dereflection} further introduced DRR, a diversified 4K dataset collected by rotating reflective media to vary reflection angles and intensities. In addition, synthetic and real-world subsets \cite{iccv/FanYHCW17,cvpr/ZhangNC18a,cvpr/WanSLDK20} remain widely used for evaluating generalization.

\noindent\textbf{Perceptual Information-aided Methods.}
Marked by the red nodes in Fig.~\ref{fig:timeline2}, deep learning methods have become a major force since the late 2010s by learning separation cues beyond hand-crafted assumptions. As summarized by the node shapes, these methods can be roughly grouped into triangular single-stream networks, diamond cascaded/iterative structures, and rectangular or pentagonal dual-stream designs. We next review them from this structural perspective.

\noindent\textbullet \;\; \textbf{Single-stream Structures.}
As marked by the triangular nodes, single-stream CNNs became representative around 2018--2019, when supervised learning began to replace purely hand-crafted optimization pipelines. Zhang \emph{et al.}~\cite{cvpr/ZhangNC18a} introduced perceptual, adversarial, and gradient exclusion losses to suppress reflection edges while recovering plausible transmission content. ERRNet~\cite{cvpr/WeiYFW019} further exploited misaligned real pairs and enlarged contextual perception, reducing the dependence on perfectly registered supervision. Later, Transformer-based designs such as Song \emph{et al.}~\cite{cvpr/song2023robust} improved long-range modeling and robustness to severe reflection patterns. These triangular-node methods established the basic learning-based paradigm, but they mostly allocate the entire network to transmission recovery alone. Without explicit reflection prediction, they cannot fully exploit the mutual constraints between $\mathbf{T}$ and $\mathbf{R}$ in highly ambiguous regions. Recent diffusion-based methods revisit this single-stream paradigm with stronger generative priors. Hu \emph{et al.}~\cite{aaai/hu2026dereflection} proposed DAI, using the diversified DRR dataset and a one-step diffusion framework with progressive training and reflection-invariant fine-tuning to improve in-the-wild robustness. Such a method improves semantic plausibility and restoration quality in difficult scenes, yet it still mainly focus on transmission output and often introduce higher computational cost and larger datasets, motivating more layer-aware architectures.

\noindent\textbullet \;\; \textbf{Cascaded/Iterative Structures.} Peaking around 2018--2022, as indicated by the diamond nodes, cascaded designs explicitly model layer interdependence by decomposing the problem into sequential subproblems. They typically estimate intermediate reflection cues, such as coarse layers, absorption maps, or Laplacian confidence maps, to condition subsequent transmission recovery. For example, BDN~\cite{eccv/YangGLS18} alternates between transmission and reflection estimation, Zheng \emph{et al.}~\cite{cvpr/ZhengSCJDK21} uses absorption-effect prediction to guide the second-stage recovery, Dong \emph{et al.}~\cite{iccv/Dong00BXL21} recurrently estimates reflection-related confidence with Laplacian priors before refining transmission, and Zhang \emph{et al.}~\cite{mm/ZhangSL22} formulates the process as an optimization-unfolding pipeline. While capturing dependencies better than single branch networks, these designs face an optimization dilemma. Reusing one stage for iterations yields suboptimal results since a solitary module struggles to accommodate varying refinement states. Conversely, jointly training multiple stages introduces massive encoding and decoding overhead. Furthermore, exchanging only decoded images prevents effective signal communication in high-dimensional spaces. This bottleneck motivates replacing cascaded subnetworks with stacked interaction modules, allowing both layers to refine and exchange information within the feature space.

\noindent\textbullet \;\; \textbf{Dual-stream Structures.}
From 2020 onwards, dual-stream structures, shown as rectangular nodes in the roadmap, began to estimate transmission and reflection in parallel. IBCLN~\cite{cvpr/LiY0LH20} introduced a dual-branch LSTM framework for separate layer reconstruction. Feng \emph{et al.}~\cite{tip/FengPJCZL21,icme/FengJJP0L21} further explored dual-branch refinement with reflection-guided transmission recovery and contrastive feature supervision. These designs alleviate the single-output limitation and reduce cascaded error accumulation, but the two branches are still mainly maintained side by side, with limited explicit feature-level communication.

\noindent\textbullet \;\; \textbf{Dual-stream Interactive Structures.}
Recognizing the need for deeper information exchange, dual-stream interactive structures, marked by pentagonal nodes, shift the paradigm from isolated parallel pathways to explicit inter-stream communication. YTMT~\cite{nips/HuG21} first routes complementary cues through paired activations, DSRNet~\cite{iccv/Hu23} strengthens this paradigm with mutually gated interaction and a learnable non-linear term, and DSIT~\cite{nips/hu2024single} extends it to Transformers via parallel attention interaction. Subsequent methods further validate this direction from different perspectives: Huang \emph{et al.}~\cite{pami/huang2025lightweight} introduce optimization-based mutual exclusion between streams and Zhao \emph{et al.}~\cite{cvpr/zhao2025reversible} enhance information preservation with reversible encoders. 

\noindent\textbf{Summary.} The past two decades of reflection separation have shown a clear shift from isolated physical priors toward data-driven and increasingly interactive deep architectures. Collectively, this trajectory underscores the importance of explicitly communicating and disentangling deep features to more effectively tackle real-world layer entanglement, motivating our subsequent investigation into learnable reflection modeling and dual-stream interaction.

\begin{table*}[t]
\centering
\caption{Quantitative analysis of reflection models in sRGB domain. We performed patch-based regression on real-world triplets. MSE, AIC, and the coefficient of determination $R^2$ measure the fitting quality. Max Coeff $\sigma$ denotes the maximum standard deviation among learned coefficients, serving as an indicator of numerical stability.}
\label{tab:math_modeling}
\resizebox{\textwidth}{!}{%
\begin{tabular}{llccccc}
\toprule
\textbf{Model Category} & \textbf{Mathematical Formulation} & \textbf{Params} & \textbf{MSE} ($\downarrow$) & \textbf{$R^2$} ($\uparrow$) & \textbf{AIC} ($\downarrow$) & \textbf{Max Coeff $\sigma$} \\ \midrule
\multicolumn{7}{l}{\textit{\textbf{1. Linear Blending}}} \\
Standard Linear & $\mathbf{I} = \mathbf{T} + \mathbf{R}$ & 0 & $1.84 \times 10^{-2}$ & -0.693 & -57,237 & - \\
Alpha Blending & $\mathbf{I} = (1-a)\mathbf{T} + a\mathbf{R}$ & 1 & $4.64 \times 10^{-3}$ & 0.615 & -76,853 & 0.29 \\
Weighted Linear & $\mathbf{I} = a\mathbf{T} + b\mathbf{R}$ & 2 & $3.47 \times 10^{-4}$ & 0.978 & -104,668 & 0.22 \\
Weighted Bias & $\mathbf{I} = a\mathbf{T} + b\mathbf{R} + z$ & 3 & $2.54 \times 10^{-4}$ & 0.984 & -107,562 & 0.23 \\ \midrule
\multicolumn{7}{l}{\textit{\textbf{2. Multiplicative Modeling}}} \\
Coupled (w/o Bias) & $\mathbf{I} = a\mathbf{T} + b\mathbf{R} + ab(\mathbf{T} \circ \mathbf{R})$ & 3 & $5.72 \times 10^{-4}$ & 0.968 & -100,236 & 0.22 \\
Coupled (w/ Bias) & $\mathbf{I} = a\mathbf{T} + b\mathbf{R} + ab(\mathbf{T} \circ \mathbf{R}) + z$ & 3 & $3.04 \times 10^{-4}$ & 0.982 & -105,949 & 0.20 \\
Independent & $\mathbf{I} = a\mathbf{T} + b\mathbf{R} + c(\mathbf{T} \circ \mathbf{R}) + z$ & 4 & $1.79 \times 10^{-4}$ & 0.987 & -110,773 & 1.40 \\ \midrule
\multicolumn{7}{l}{\textit{\textbf{3. High-Order Polynomials}}} \\
Second-Order & $\mathbf{I} = a\mathbf{T} + b\mathbf{R} + c(\mathbf{T} \circ \mathbf{R}) + d\mathbf{T}^2 + e\mathbf{R}^2 + z$ & 6 & $1.54 \times 10^{-4}$ & 0.988 & -112,658 & 21.2 \\
Third-Order & $\mathbf{I} = \text{2nd-Order} + f\mathbf{T}^3 + g\mathbf{T}^2\circ\mathbf{R} + \dots + z$ & 10 & $1.46 \times 10^{-4}$ & 0.989 & -113,415 & $2.9 \times 10^3$ \\
Fourth-Order & $\mathbf{I} = \text{3rd-Order} + j\mathbf{T}^4 + k\mathbf{T}^3\circ\mathbf{R} + \dots + z$ & 15 & $1.42 \times 10^{-4}$ & 0.989 & -113,874 & $1.8 \times 10^6$ \\ \midrule
\multicolumn{7}{l}{\textit{\textbf{4. Physical Approximation}}} \\
Gamma Approx. & $\mathbf{I} = (a\mathbf{T}^{2.2} + b\mathbf{R}^{2.2} + z)^{1/2.2}$ & 3 & $3.98 \times 10^{-4}$ & 0.970 & -102,541 & 0.18 \\ 
sRGB Physical & $\mathbf{I} = \mathcal{P}_{\text{sRGB}}( a \mathcal{P}^{-1}(\mathbf{T}) + b \mathcal{P}^{-1}(\mathbf{R}) + z)$ & 3 & $3.68 \times 10^{-4}$ & 0.974 & -103,674 & 0.18 \\
\bottomrule
\end{tabular}%
}
\vspace{-10pt}
\end{table*}

\section{Learnable Nonlinear Modeling}
\label{sec:modeling}

Unlike the RAW domain where photon accumulation is (approximately) linear, sRGB images undergo a complex, non-linear ISP processing (\emph{e.g.}, gamma correction, tone mapping, and clipping). Consequently, the superimposed image $\textbf{I}$ is no longer a simple summation of the transmission layer $\textbf{T}$ and the reflection layer $\textbf{R}$, but further incorporates a bias term and higher-order non-linear residuals.

\begin{proposition}[Linear Superposition Bias in sRGB Space]
Let the RAW-domain reflection formation be additive:
\begin{equation}
    \mathbf{I}_\textup{raw} = \mathbf{T}_\textup{raw} + \mathbf{R}_\textup{raw}, \nonumber
\end{equation}
and let the observed sRGB images be produced by an ISP mapping function $\mathcal{F}_{\textup{ISP}}$:
\begin{equation}
   \mathbf{I} = \mathcal{F}_{\textup{ISP}}(\mathbf{T}_\textup{raw} + \mathbf{R}_\textup{raw}),\ \
\mathbf{T} = \mathcal{F}_{\textup{ISP}}(\mathbf{T}_\textup{raw}),\ \
\mathbf{R} = \mathcal{F}_{\textup{ISP}}(\mathbf{R}_\textup{raw}).
\nonumber
\end{equation}
If $\mathcal{F}_{\textup{ISP}}$ is nonlinear, then, in general, there do not exist fixed constants \(a,b,z\) such that
$\mathbf{I} = a\mathbf{T}+b\mathbf{R}+z$
holds for all admissible pairs $(\mathbf{T}_\textup{raw}, \mathbf{R}_\textup{raw})$.
Therefore, a fixed linear superposition model in sRGB space is intrinsically biased.
\label{prop1}
\end{proposition}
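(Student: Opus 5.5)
Work pointwise (per pixel and per channel), so that $\mathcal{F}_{\text{ISP}}$ is a scalar function $f:[0,\infty)\to\mathbb{R}$, with $t=\mathbf{T}_\text{raw}$ and $r=\mathbf{R}_\text{raw}$ ranging over an admissible domain $D$ that contains a neighbourhood of some point in its interior. The claim to refute is that there exist $a,b,z$ with
\begin{equation}
f(t+r) = a f(t) + b f(r) + z \quad \text{for all } (t,r)\in D. \label{eq:pexider_plan}
\end{equation}
I read ``nonlinear'' in the natural sense for this setting: $f$ is not affine, i.e.\ not of the form $f(x)=\alpha x+\beta$. This covers gamma correction, tone curves and clipping. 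The plan is to show that \eqref{eq:pexider_plan} forces $f$ to be affine, which gives a contradiction.

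\textbf{Step 1: determine the constants.} Set $r=0$ in \eqref{eq:pexider_plan} (taking $0$ admissible; otherwise use any fixed base point $r_0$). This gives $f(t)(1-a)=bf(0)+z$ for all $t$. Because $f$ is not constant, we get $a=1$ and $z=-bf(0)$. By symmetry, setting $t=0$ gives $b=1$, and hence $z=-f(0)$.

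\textbf{Step 2: reduce to Cauchy's equation.} Define $g(x)=f(x)-f(0)$. Substituting the constants into \eqref{eq:pexider_plan} yields $g(t+r)=g(t)+g(r)$ on $D$. Since $f$ is monotone (an ISP curve) or continuous, the standard argument shows that the only solutions of Cauchy's equation on an interval are linear. Concretely, additivity gives $g(qx)=qg(x)$ for rational $q$, and monotonicity or continuity extends this to the reals, so $g(x)=cx$. Then $f(x)=cx+f(0)$ is affine, contradicting the hypothesis. This establishes the ``in general, no fixed $a,b,z$'' claim.

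\textbf{Step 3: give a concrete witness.} Under the paper's assumptions one can alternatively, or additionally, exhibit an explicit counterexample: take the gamma map $f(x)=x^{1/2.2}$. Evaluating \eqref{eq:pexider_plan} at $(0,0)$, $(x,0)$, $(0,x)$ and $(x,x)$ gives $f(2x)=2f(x)$. But $f(2x)=2^{1/2.2}f(x)\neq 2f(x)$ for $x>0$. The contradiction uses only four sample pairs.

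The main obstacle is \textbf{Step 2} in full generality. It needs a regularity assumption on $f$ (continuity or monotonicity), and also a domain issue: clipping makes $f$ affine on a subinterval, so $D$ must contain pairs that reach into the nonlinear region. I would handle this by stating explicitly that $f$ is continuous and non-affine on the range swept by $t+r$. The phrase ``in general'' in the proposition absorbs this caveat.
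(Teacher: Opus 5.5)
Your proof is correct, but it takes a different route from the paper. The paper assumes $\mathcal{F}_{\textup{ISP}}$ is differentiable and differentiates $f(t+r)=af(t)+bf(r)+z$ once in $t$ and once in $r$. Both left-hand sides equal $f'(t+r)$, so $af'(t)=bf'(r)$ for independent $t,r$; hence $f'$ is constant and $f$ is affine. You instead evaluate on the slices $r=0$ and $t=0$ to pin down $a=b=1$ and $z=-f(0)$, and then reduce to Cauchy's equation for $g=f-f(0)$.

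Your route has three advantages. It needs only continuity or monotonicity, so it covers hard clipping and piecewise-linear or LUT-based tone curves; the paper lists clipping among the ISP nonlinearities, but its differentiation step does not directly handle it. It also shows that the only fixed linear model that could ever fit is $\mathbf{I}=\mathbf{T}+\mathbf{R}-f(0)$. Finally, your Step 3 is a regularity-free, four-sample witness that already settles the ``in general'' claim for gamma. The paper's route, in exchange, is local: it needs only an open set of admissible $(t,r)$ and never uses zero RAW irradiance.

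On that last point, your parenthetical fallback ``otherwise use any fixed base point $r_0$'' does not work as written. The relation $f(t+r_0)=af(t)+\text{const}$ does not force $a=1$: for $f(x)=e^{x}$ it holds with $a=e^{r_0}$. If $0$ is not admissible, swap $t$ and $r$ to get $(a-b)\bigl(f(t)-f(r)\bigr)=0$, hence $a=b$. For $a\neq 0$ this makes $f(t)+f(r)$ a function of $t+r$ alone, i.e.\ Jensen's equation, whose continuous solutions are affine. Alternatively, fall back on the paper's differentiation argument.
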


\begin{proof}
The proof is given in Appendix~\ref{sec:append_proofs}.
\end{proof}

\subsection{Revisiting Reflection Superimposition Models}
To mathematically formulate the sRGB superimposition relationship, we analyze a collection of localized, spatially aligned patch triplets $\{(\mathbf{T}_k, \mathbf{R}_k, \mathbf{I}_k)\}_{k=1}^N$, with normalized intensities in $[0,1]$. In real-world scenarios, the physical blending factors (\emph{e.g.}, reflectance and transmittance) vary spatially across the image due to changing incident angles and complex scene geometries~\cite{pami/KongTS14}. To render the modeling tractable, we introduce a local stationarity assumption: within a sufficiently small spatial window, these coefficients can be approximated as constants. Under this premise, the regional superimposition can be abstracted as a parameterized bivariate mapping $\hat{\mathbf{I}} = \mathcal{I}(\mathbf{T}, \mathbf{R}; \Theta)$, where $\Theta$ encapsulates the locally stationary coefficients. To identify the optimal analytical form of $\mathcal{I}$, we systematically categorize and evaluate four families of mathematical models:

\noindent\textbf{Linear Blending.} This category assumes the superposition remains linear despite ISP processing. It investigates three variants: the vanilla Standard Linear summation $\mathbf{I} = \mathbf{T} + \mathbf{R}$, the constraint-based Alpha Blending $\mathbf{I} = (1-a)\mathbf{T} + a\mathbf{R}$, and the generalized Weighted Linear model with an optional bias  $\mathbf{I} = a\mathbf{T} + b\mathbf{R} + z$.
    
\noindent\textbf{Multiplicative Modeling.} To address non-linearity, we introduce a second-order multiplicative term $\mathbf{T} \circ \mathbf{R}$. We evaluate a Coupled form where the multiplicative coefficient is tied to linear weights $ab$, and an Independent form $\mathbf{I} = a\mathbf{T} + b\mathbf{R} + c(\mathbf{T} \circ \mathbf{R}) + z$ with an unconstrained coefficient $c$ for the multiplicative term.
    
\noindent\textbf{High-Order Polynomials.} We extend the expansion to the full 2nd, 3rd, and 4th orders (\emph{e.g.}, including quadratic terms $\mathbf{T}^2, \mathbf{R}^2$, cubic interactions  like $\mathbf{T}^2\circ\mathbf{R}$, and quartic  $\mathbf{T}^2\circ\mathbf{R}^2$), serving as a theoretical upper bound to probe polynomial fitting and numerical stability.
    
\noindent\textbf{Physical Approximation.} These models attempt to reverse the non-linearity using Gamma curves. We test both a simplified power-law approximation ($\mathbf{I} = (a\mathbf{T}^{\gamma} + b\mathbf{R}^{\gamma} + z)^{1/\gamma}$) and the standard sRGB transfer function ($\mathcal{P}_{\text{sRGB}}$) to verify whether such explicit inverse mappings can sufficiently restore the linear superposition property.

\subsection{Empirical Regression Analysis}
To validate these hypotheses, we conducted the patch-based regression analysis on the $\textrm{SIR}^2$ dataset, containing real I-T-R triplets. We evaluated the fitting quality using Mean Squared Error (MSE), Akaike Information Criterion (AIC), the Coefficient of Determination $R^2$. The Max Coefficient Standard Deviation $\sigma$ is also monitored to detect numerical instability. The quantitative results are presented in Table~\ref{tab:math_modeling}.

\noindent\textbf{Observation 1: Limitations of Linear Models.} 
As reported in Table~\ref{tab:math_modeling}, the {Standard Linear} model exhibits the highest Mean Squared Error (MSE) of $1.84 \times 10^{-2}$, revealing a clear discrepancy between the idealistic additive assumption and actual sRGB observations. 
Notably, this model yields a negative $R^2$ value (-0.693), which measures the proportion of variance in the data that is predictable from the model, indicating  a significant deviation from the sRGB data distribution.
Even with learnable scaling and offsets, the {Weighted Bias} model remains inferior to non-linear alternatives.
\textit{These observations imply that ISP transformations break the linear assumption, prompting us to incorporate higher-order interactions to represent the complex layer coupling effects.}

\noindent\textbf{Observation 2: The Necessity of Interaction.}
As reported in Table~\ref{tab:math_modeling}, the {Independent} model (MSE $1.79 \times 10^{-4}$) not only surpasses linear baselines but also significantly outperforms the {sRGB Physical} model ($3.68 \times 10^{-4}$). This superiority suggests that the sRGB non-linearity is driven by the synergistic coupling term $\mathbf{T} \circ \mathbf{R}$ rather than a global Gamma mapping. 
Furthermore, the superiority of the independent coefficient $c$ over the coupled $ab$ indicates that this multiplicative interaction behaves as a distinct physical factor rather than a derivative of the linear base.
\textit{These findings reveal the inherent non-linearity of the reflection superimposition model in the sRGB space and justify the independent relationship between the multiplicative coefficient and the linear weights.}

\begin{figure}[t]
    \begin{subfigure}[b]{0.2424\linewidth}
        \centering
        \includegraphics[width=\linewidth]{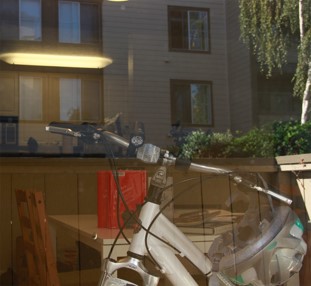}
        \\[2pt]
        \includegraphics[width=\linewidth]{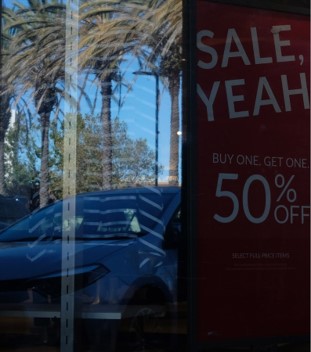}
        \caption*{$\textbf{I}$}
    \end{subfigure}
    \hfill
    \begin{subfigure}[b]{0.2424\linewidth}
        \centering
        \includegraphics[width=\linewidth]{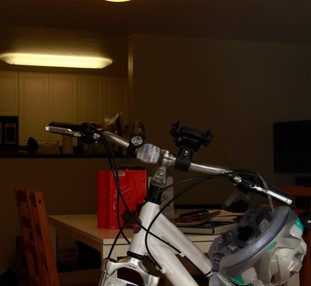}
        \\[2pt]
        \includegraphics[width=\linewidth]{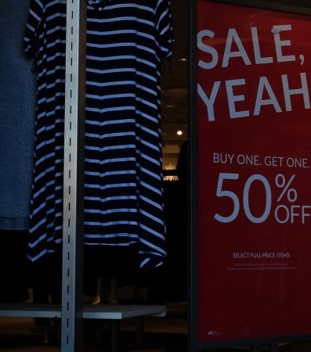}
        \caption*{$\textbf{T}$}
    \end{subfigure}
    \hfill
    \begin{subfigure}[b]{0.2424\linewidth}
        \centering
        \includegraphics[width=\linewidth]{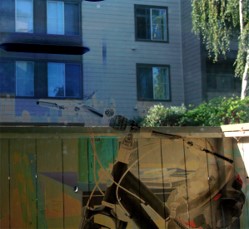}
        \\[2pt]
        \includegraphics[width=\linewidth]{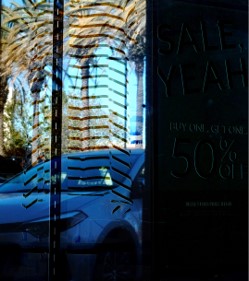}
        \caption*{$\textbf{I}-\textbf{T}$}
    \end{subfigure}
    \hfill
    \begin{subfigure}[b]{0.2424\linewidth}
        \centering
        \includegraphics[width=\linewidth]{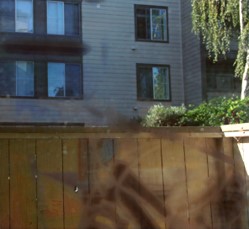}
        \\[2pt]
        \includegraphics[width=\linewidth]{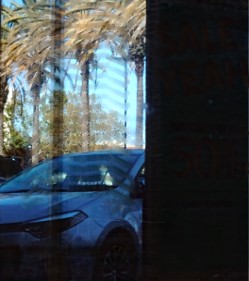}
        \caption*{\smash{$\mathcal{F}_C(\mathbf{I}, \mathbf{T})$}}
    \end{subfigure}
 
  \caption{Qualitative results of reflection ground truth completion. The direct subtraction $\mathbf{I} - \mathbf{T}$ introduces severe structural degradation. By decoupling non-linearities and biases, our method $\mathcal{F}_C(\mathbf{I}, \mathbf{T})$ yields a purer reflection layer.}
  \label{fig:refcomp}
  \vspace{-10pt}
\end{figure}

\noindent\textbf{Observation 3: The Instability of High-Order Polynomials.}
Extending the analytical expansion to higher orders in Table~\ref{tab:math_modeling} reveals a severe trade-off between fitting fidelity and numerical robustness. Specifically, while fidelity metrics such as MSE and AIC exhibit marginal improvements up to the 4th-order, these minor gains are accompanied by an astronomical explosion in coefficient variance ($\sigma = 1.8 \times 10^6$). This demonstrates that simply escalating the polynomial order to achieve more accurate fitting is unviable, as it inevitably leads to catastrophic numerical instability. By comparison, the second-order \textit{Independent} model serves as a conservative and stable baseline, validating its use for reliable data synthesis.
\textit{These results indicate that static high-order templates are insufficient for the spatially varying complexity of sRGB images, necessitating a learnable, adaptive approach to harness high-order dynamics without numerical collapse.}

\subsection{From Analysis to Modeling}
\label{sec:anal2model}

The above analysis highlights a critical challenge, \emph{i.e.}, existing linear synthesis pipelines \cite{iccv/FanYHCW17, cvpr/WenT0LHH19} fail to capture inter-layer coupling, and thus suffer from severe synthetic-to-real domain gaps. To address this, we propose a unified Synthesis–Network–Real (S–N–R) framework that bridges analytical modeling and data-driven learning:
\begin{equation}
\renewcommand{\arraystretch}{2.2} 
\begin{array}{r@{\hskip 4pt}c@{\hskip 2pt}c@{\hskip 4pt}c@{\hskip 4pt}c@{\hskip 4pt}c@{\hskip 4pt}c@{\hskip 4pt}c}
    \textbf{Syn:} & \mathbf{I}_{\textrm{syn}} & = & \overbrace{a\mathbf{T} + b\mathbf{R}}^{\text{Weighted Base}} & + & \underbrace{c(\mathbf{T} \circ \mathbf{R})}_{\text{Interaction}} & + & \underbrace{z}_{\text{Bias}}\,, \\
    \textbf{Net:} & \hat{\mathbf{I}}_{\textrm{net}} & = & \underbrace{\mathbf{T} + \mathbf{R}}_{\text{Unit Base}} & + & \boldsymbol{\Phi}(\mathbf{T}, \mathbf{R}) & + & \boldsymbol{\Psi}\,, \\
    \textbf{Real:} & \mathbf{I}_\textrm{real} & \approx & \underbrace{w_{10}\mathbf{T} + w_{01}\mathbf{R}}_{\text{Weighted Base}} & + & \overbrace{\sum_{i,j} w_{ij}\mathbf{T}^i \circ \mathbf{R}^j}^{\text{High-order Terms}} & + & \overbrace{\epsilon}^{\text{Offset}} \,.
\end{array}
\label{eq:unified_model}
\end{equation}

\noindent\textbf{Synthesis Strategy (Syn):} We adopt the \textit{Independent} multiplicative model for data synthesis, enabling stable generation of plausible triplets without numerical collapse.

\noindent\textbf{Real-world Complexity (Real):} In practice, ISP pipelines introduce high-order nonlinearities that can be expressed as a series expansion $\sum w_{ij} \mathbf{T}^i \circ \mathbf{R}^j$. As indicated by the regression analysis, directly modeling such expansions is infeasible due to instability and variability, leading to a notable synthetic-to-real gap.

\noindent\textbf{Network Architecture (Net):} To bridge this gap within a single unified model, we introduce the Learnable Offset-Residual Superposition (LORS) model. Built upon a linear unit base, LORS incorporates two learnable terms, say the Learnable Non-linear Residual (LNR) $\boldsymbol{\Phi}(\mathbf{T}, \mathbf{R})$ to absorb complex synthetic and real-world non-linearities, and the Learnable Zero-order Offset (LZO) $\boldsymbol{\Psi}$ to explicitly model ambient biases. By isolating these couplings, LORS mitigates the ISP-induced domain gap and improves generalization from synthetic training data to real-world observations.

\begin{figure*}[t]
  \includegraphics[width=\textwidth]{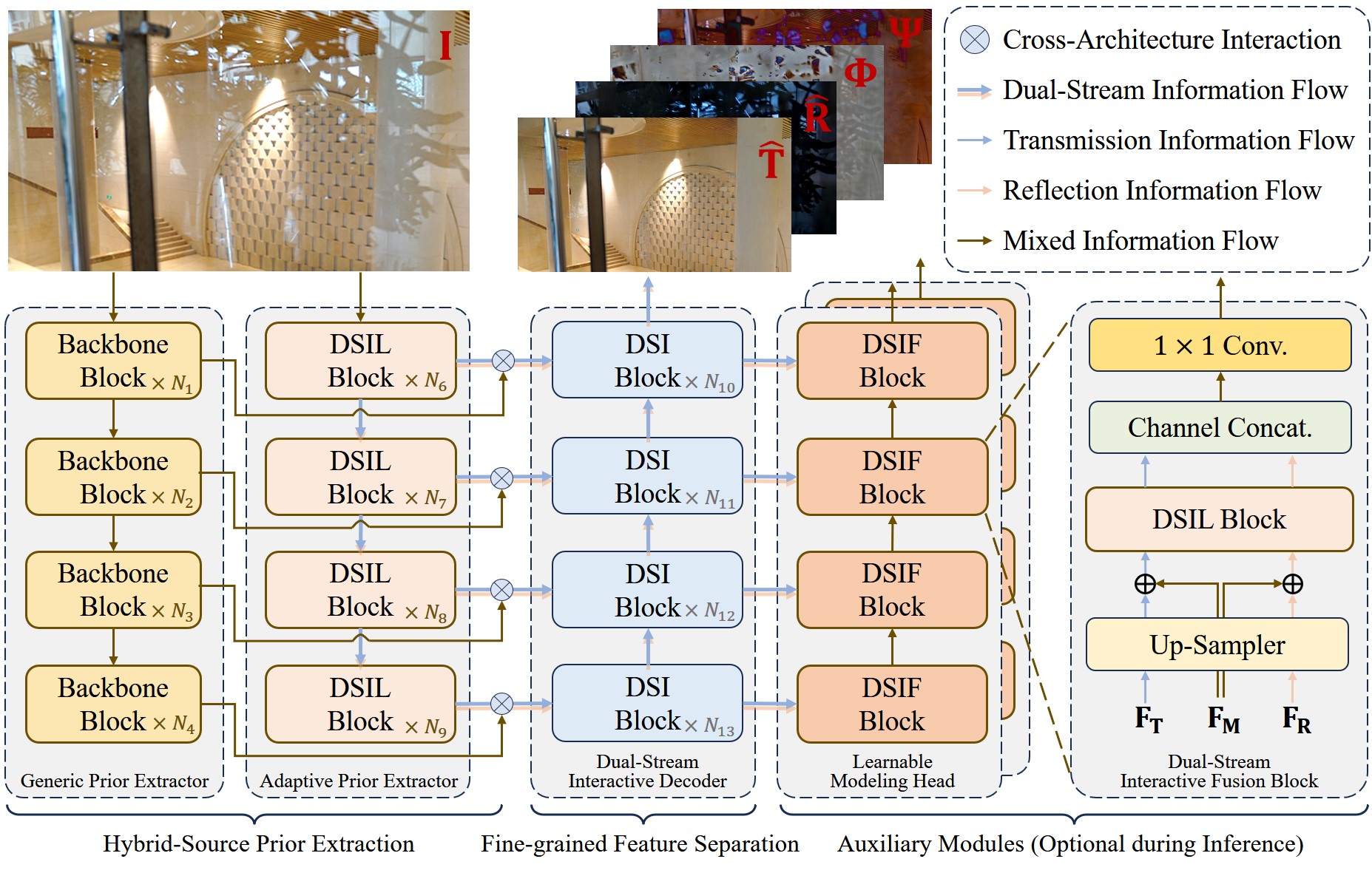}
  \caption{The proposed unified Dual-stream Interactive Reflection Separation (DIRS) architecture, embracing (a) Hybrid-Source Prior Extraction via a pretrained Generic Prior Extractor (GPE) and a task-specific Adaptive Prior Extractor (APE); (b) Fine-grained Feature Separation using a Dual-Stream Interactive Decoder equipped with customizable DSI Blocks; and (c) Auxiliary Modules comprising DSIF Blocks for high-order residual $\boldsymbol{\Phi}$ and zero-order offset $\boldsymbol{\Psi}$ modeling.}
  \vspace{-10pt}
  \label{fig:arch}
\end{figure*}

\begin{figure*}[t]
  \includegraphics[width=\textwidth]{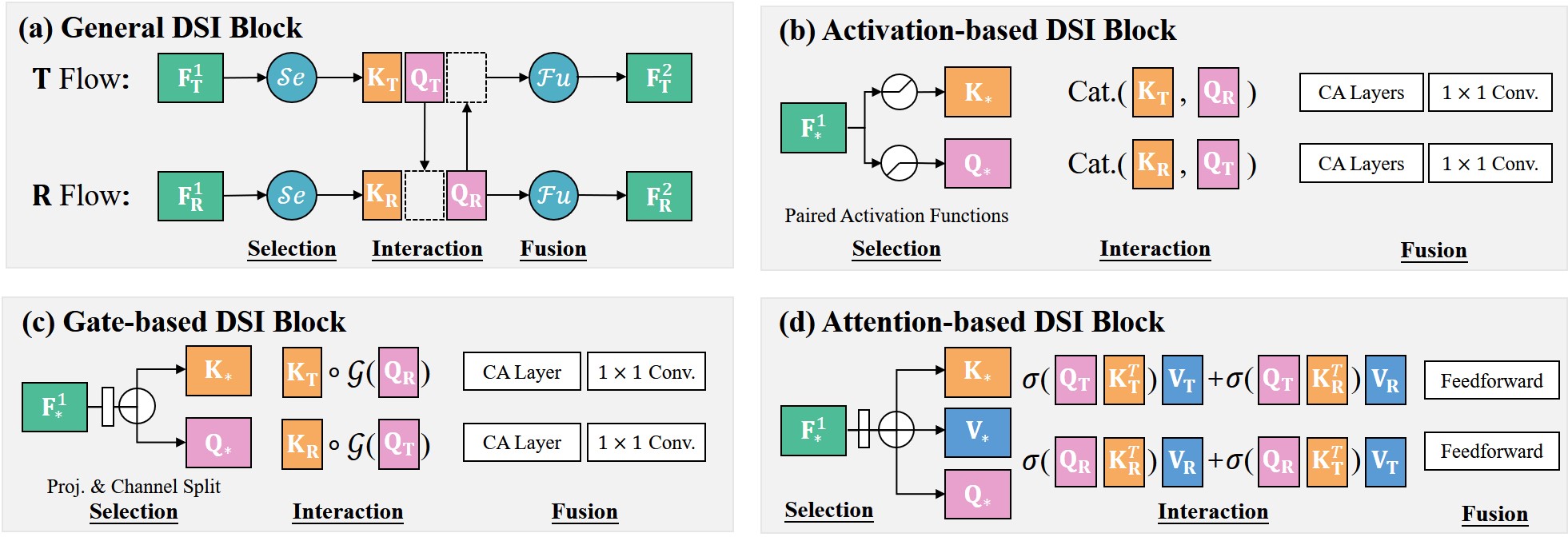}
  \caption{Our proposed Dual-Stream Interactive Blocks (DSI Blocks). (a) General formulation abstracting interactive modules into a unified Selection, Interaction, and Fusion (SIF) paradigm to update the dual-stream features. (b)-(d) Three specific variants based on activation, gate, and attention mechanisms, all adhering to this SIF paradigm.}
  \label{fig:fib}
\end{figure*}

\subsection{Reflection Ground Truth Completion}
\label{sec:ref_comp}

Due to the immense difficulty of isolating reflections during physical acquisition, mainstream real-world datasets~\cite{cvpr/ZhangNC18a,cvpr/LiY0LH20,cvpr/zhu2024revisiting,aaai/hu2026dereflection} provide only bipartite pairs $(\mathbf{I}_\textrm{real}, \mathbf{T})$ rather than complete triplets. To circumvent this, previous dual-layer methods conventionally adopt the residual $\mathbf{I}_\textrm{real} - \mathbf{T}$ as pseudo-supervision~\cite{cvpr/ZhangNC18a,cvpr/LiY0LH20,cvpr/song2023robust}. However, this forces the predicted reflection layer to implicitly absorb high-order couplings and biases. Such entanglement severely corrupts the structural integrity of the reflection, leading to local degradation and gradient artifacts.

To resolve this dilemma, an independent pre-training phase is required to learn and predict the pseudo-reflection layers before formal training. Let $\mathcal{F}_C: (\mathbf{I}, \mathbf{T}) \rightarrow (\hat{\mathbf{T}}, \hat{\mathbf{R}}_\textrm{comp})$ denote the reflection completion mapping, where $\hat{\mathbf{T}}$ is jointly predicted strictly to leverage dual-stream constraints. We utilize our unified \textbf{Net} model as a physics-informed bridge to elegantly resolve this entanglement by minimizing the following reconstruction objective:
\begin{equation}
    \mathcal{L}_\textrm{comp} = \left\| \mathbf{I}_\textrm{real} - \Big(\hat{\mathbf{T}} + \hat{\mathbf{R}}_\textrm{comp} + \boldsymbol{\Phi}(\hat{\mathbf{T}}, \hat{\mathbf{R}}_\textrm{comp}) + \boldsymbol{\Psi}\Big) \right\|_1.
\end{equation}
Since the ground-truth $\mathbf{T}$ is provided as input for both synthetic and real data, the network anchors $\hat{\mathbf{T}} \rightarrow \mathbf{T}$. Crucially, because the synthetic dataset provides structurally complete reflection supervision, jointly learning the mappings $(\mathbf{I}_{\textrm{syn}}, \mathbf{T}) \rightarrow (\mathbf{T}, \mathbf{R})$ and $(\mathbf{I}_\textrm{real}, \mathbf{T}) \rightarrow (\mathbf{T}, \mathbf{I}_\textrm{real} - \mathbf{T})$ establishes a strong linear prior. Driven by this prior, the LNR module $\boldsymbol{\Phi}$ and LZO module $\boldsymbol{\Psi}$ actively absorb the non-linearities and biases. Consequently, the mathematical expectation of the completed reflection layer shifts to:
\begin{equation}
    \mathbb{E}[\hat{\mathbf{R}}_\textrm{comp}] \approx \mathbf{R}^* + \frac{1}{\eta}\Big(\sum_{i,j} w_{ij}\mathbf{T}^i \circ (\mathbf{R}^*)^j + \epsilon\Big)
\end{equation}
where $\mathbf{R}^*$ is the latent true reflection, and $\eta > 1$ serves as an attenuation factor governed by the ratio of synthetic to real images during training. This mechanism ensures that we obtain a purer reflection prediction on real-world data, as visually compared in Fig.~\ref{fig:refcomp}. By attenuating the non-linear terms and biases, this physics-guided disentanglement generates structurally plausible and cleaner pseudo-triplets $(\mathbf{I}_\textrm{real}, \mathbf{T}, \hat{\mathbf{R}}_\textrm{comp})$ for real-world datasets, establishing a consistent supervisory foundation for the subsequent formal training phase of the proposed architecture.

\section{Dual-stream Interactive Designs} 
Based on the physical formulation established in Sec.~\ref{sec:modeling}, we here formally motivate our dual-stream network design for reflection separation, denoted by $\mathcal{F}_S$.

\begin{proposition}[Nonlinear Formation Necessitates Coupled Estimation]
Let the observed sRGB image be formed through a real-world Image Signal Processing pipeline $\mathcal{F}$ applied to raw physical irradiances: $\mathbf{I} = \mathcal{F}(\mathbf{T}_{\textup{raw}} + \mathbf{R}_{\textup{raw}})$. Modeling this formation equivalently in the sRGB domain yields:
\begin{equation}
    \mathbf{I} =\mathbf{T} +\mathbf{R} + \boldsymbol{\Phi}(\mathbf{T},\mathbf{R}) + \boldsymbol{\Psi}, \nonumber
\end{equation}
where $\mathbf{T}$ and $\mathbf{R}$ denote the transmission and reflection layers, $\boldsymbol{\Phi}(\mathbf{T},\mathbf{R})$ is a nonlinear interlayer residual term, and $\boldsymbol{\Psi}$ is a zero-order offset. For typical nonlinear ISP functions with non-zero mixed responses, the term $\boldsymbol{\Phi}$ is generally nonseparable (namely, it cannot be strictly decomposed into independent functions $\boldsymbol{\Phi}_\mathbf{T}$ and $\boldsymbol{\Phi}_\mathbf{R}$ such that $\boldsymbol{\Phi}(\mathbf{T},\mathbf{R})=\boldsymbol{\Phi}_\mathbf{T}(\mathbf{T})+\boldsymbol{\Phi}_{\mathbf{R}}(\mathbf{R})$). From an optimization perspective, this nonseparability creates a coupled loss landscape. Consequently, the estimation of $\mathbf{T}$ and $\mathbf{R}$ cannot be reduced to independent subproblems; explicit interlayer feature interaction is required to approximate the joint inference process.
\end{proposition}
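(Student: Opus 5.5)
The plan is to make the statement precise pixelwise and argue in three steps: nonseparability of $\boldsymbol{\Phi}$, a non-block-diagonal Hessian of the loss, and the resulting failure of independent estimation.

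\textbf{Step 1: reduce to a single pixel.} Work pixelwise, treating $\mathcal{F}$ as a scalar $C^2$ function on irradiances and assuming it is invertible on its range, so that $\mathbf{T}_\textup{raw}=\mathcal{F}^{-1}(\mathbf{T})$ and $\mathbf{R}_\textup{raw}=\mathcal{F}^{-1}(\mathbf{R})$. Define
\begin{equation}
G(t,r)=\mathcal{F}\big(\mathcal{F}^{-1}(t)+\mathcal{F}^{-1}(r)\big),\qquad
\boldsymbol{\Phi}(t,r)=G(t,r)-t-r-\boldsymbol{\Psi}. \nonumber
\end{equation}
Here $\boldsymbol{\Psi}$ is a constant; one may take $\boldsymbol{\Psi}=G(0,0)$ if $\mathcal{F}$ has an offset. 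This shows the representation exists. It also connects to Proposition~\ref{prop1}: there, no affine map $a t+b r+z$ reproduces $G$, so a nonzero residual is needed. The present claim is stronger, namely that the residual is not even additively separable.

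\textbf{Step 2: nonseparability via the mixed partial.} A $C^2$ function $h$ on a connected open rectangle can be written as $h_1(t)+h_2(r)$ if and only if $\partial_t\partial_r h\equiv 0$. The linear and constant parts of $\boldsymbol{\Phi}$ have zero mixed partial, so $\partial_t\partial_r\boldsymbol{\Phi}=\partial_t\partial_r G$. With $u=\mathcal{F}^{-1}(t)$, $v=\mathcal{F}^{-1}(r)$, a direct computation gives
\begin{equation}
\partial_t\partial_r G=\frac{\mathcal{F}''(u+v)}{\mathcal{F}'(u)\,\mathcal{F}'(v)}. \nonumber
\end{equation}
This is nonzero wherever $\mathcal{F}''\neq 0$, which is exactly the "non-zero mixed response" hypothesis. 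It holds for gamma curves, sRGB transfer functions and tone curves. For example, with $\mathcal{F}(x)=x^{1/\gamma}$ and $\gamma>1$, $\mathcal{F}''<0$ on $(0,\infty)$. The screen model \eqref{eq:screen} gives a concrete check: there $\partial_t\partial_r\boldsymbol{\Phi}=-1$. Hence no decomposition $\boldsymbol{\Phi}_\mathbf{T}+\boldsymbol{\Phi}_\mathbf{R}$ exists.

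\textbf{Step 3: from nonseparability to a coupled loss.} Take the reconstruction loss $\mathcal{L}(t,r)=\tfrac12\big(\mathbf{I}-t-r-\boldsymbol{\Phi}(t,r)-\boldsymbol{\Psi}\big)^2$ and write $e$ for the residual inside the square. Then
\begin{equation}
\partial_t\partial_r\mathcal{L}=(1+\partial_t\boldsymbol{\Phi})(1+\partial_r\boldsymbol{\Phi})-e\,\partial_t\partial_r\boldsymbol{\Phi}. \nonumber
\end{equation}
The Hessian is therefore not block diagonal. The stationarity condition for $t$ depends on $r$ through $\partial_t G(t,r)$, which varies with $r$ precisely because $\partial_r\partial_t G\neq 0$. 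So the optimal $t$ is a genuine function of $r$ and vice versa. The argmin cannot be obtained from two decoupled problems $\min_t \ell_1(t)$ and $\min_r \ell_2(r)$ with $\ell_1$ independent of $r$ and $\ell_2$ independent of $t$. If it could, the loss would be additively separable near the optimum, and its mixed partial would vanish there.

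\textbf{Step 4: interpretive conclusion.} Any estimator $(\hat{\mathbf{T}},\hat{\mathbf{R}})=(f_T(\mathbf{I}),f_R(\mathbf{I}))$ whose branches never exchange information cannot realize the coupled update in general. Joint inference, such as alternating or gradient updates on $(t,r)$, requires each branch to read the other's current state. Interlayer feature interaction is the architectural analogue of this.

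\textbf{Main obstacle.} The hardest part is Step 3: making "cannot be reduced to independent subproblems" rigorous without overclaiming, because the final implication about networks is heuristic. I would prove only the precise statements, namely the nonvanishing mixed Hessian and the $r$-dependence of the $t$-stationarity condition. The step to "interaction is required" would then be presented as motivated by these, not strictly implied.
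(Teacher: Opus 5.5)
Your Steps 1--2 are correct and reproduce the paper's argument in different coordinates. The paper substitutes $\boldsymbol{\Phi}_\mathbf{T}+\boldsymbol{\Phi}_\mathbf{R}$, differentiates in raw coordinates, and obtains $\mathcal{F}''(\mathbf{T}_\textup{raw}+\mathbf{R}_\textup{raw})=0$. You instead compute $\partial_t\partial_r G=\mathcal{F}''(u+v)/(\mathcal{F}'(u)\mathcal{F}'(v))$ directly in sRGB coordinates.

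The gap is in Step 3. The paper's Hessian argument has the same flaw, since it simply asserts that nonseparability forces $\mathbf{H}_{\mathbf{T}\mathbf{R}}\neq\mathbf{0}$, so following the paper will not close it. The core problem is that nothing in Step 3 actually uses nonseparability:
\begin{itemize}
\item \textbf{The linear model already has both properties.} With $\boldsymbol{\Phi}\equiv0$, the loss $\mathcal{L}=\tfrac12(\mathbf{I}-t-r)^2$ has Hessian $\left[\begin{smallmatrix}1&1\\1&1\end{smallmatrix}\right]$, and the optimal $t$ given $r$ is $\mathbf{I}-r$. So the two ``precise'' facts you plan to fall back on, a nonzero mixed Hessian and an $r$-dependent $t$-stationarity condition, hold without any nonlinearity.
\item \textbf{Your own formula shows this.} The only term that vanishes for separable $\boldsymbol{\Phi}$ is $-e\,\partial_t\partial_r\boldsymbol{\Phi}$. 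That term is zero at every exact-fit point, where the cross-curvature equals $\partial_tG\,\partial_rG$ for every model.
\item \textbf{The attribution is backward.} Since $\partial_tG=\mathcal{F}'(u+v)/\mathcal{F}'(u)\neq0$, the condition $e\,\partial_tG=0$ reduces to $G(t,r)=\mathbf{I}$. Its $r$-dependence therefore comes from $e$, not from $\partial_r\partial_tG$.
\item \textbf{The separability inference is false.} The claim ``decoupled argmin $\Rightarrow$ loss separable near the optimum'' fails: $(t-1)^2+(r-2)^2+(t-1)(r-2)$ has the same unique minimizer as the decoupled problems, yet its mixed partial is $1$ everywhere. Moreover, the per-pixel loss has a whole curve $\{G=\mathbf{I}\}$ of minimizers, so there is no unique argmin to decouple in the first place.
\end{itemize}

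What nonseparability genuinely adds is coupling beyond the shared residual. If $\boldsymbol{\Phi}$ were separable, then $G=h_1(t)+h_2(r)+\boldsymbol{\Psi}$ with $h_1=\mathrm{id}+\boldsymbol{\Phi}_\mathbf{T}$ and $h_2=\mathrm{id}+\boldsymbol{\Phi}_\mathbf{R}$. When $h_1$ and $h_2$ are invertible, per-layer reparametrizations would reduce the problem to the linear additive one. Each layer's gain $\partial_tG=h_1'(t)$ would then depend on that layer alone. Your Step 2 shows $\partial_r(\partial_tG)=\partial_t\partial_r\boldsymbol{\Phi}\neq0$, which rules out exactly this reduction: the gain through which $t$ affects $\mathbf{I}$ depends on $r$. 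That comparison against the linear/separable baseline, rather than against ``zero coupling'', is the precise statement Step 3 should establish. The final jump to ``interaction is required'' remains heuristic, as you already acknowledge.
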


\begin{proof}
The proof is given in Appendix~\ref{sec:append_proofs}.
\end{proof}

\subsection{Dual-Stream Interactive Architecture}
\label{sec:ds_arch}
Figure~\ref{fig:arch} depicts the generalized Dual-stream Interactive Reflection Separation (DIRS) architecture, which consolidates our previous designs (YTMT~\cite{nips/HuG21}, DSRNet~\cite{iccv/Hu23}, and DSIT~\cite{nips/hu2024single}) into a unified framework. The overall architecture is structured into three main stages:

\noindent\textbf{Hybrid-source Prior Extraction.} To capture both deep semantic priors and fine-grained spatial details, we employ an asymmetric dual-branch encoder mechanism. The first branch is a Generic Prior Extractor (GPE), which utilizes a frozen, high-capacity backbone pretrained on large-scale datasets (\emph{e.g.}, ImageNet or Object365~\cite{iccv/shao2019objects365}) to extract universal, semantically rich mixed information flows, denoted as $\{\mathbf{F}_{\mathbf{M}}^l\}_{l=1}^L$. Simultaneously, the second branch acts as a trainable Adaptive Prior Extractor (APE), which maintains separate transmission and reflection information flows optimized specifically for the separation task. 
To enrich the task-specific representations with deep semantics, we introduce a Cross-Architecture Interaction mechanism ($\otimes$ in Fig.~\ref{fig:arch}). At each level $l$, the generic mixed feature $\mathbf{F}_{\mathbf{M}}^l$ is explicitly injected into the APE, yielding the semantically enriched, dual-stream encoded features: $\{\mathbf{F}_{\mathbf{T}, \textrm{Enc}}^l, \mathbf{F}_{\mathbf{R}, \textrm{Enc}}^l\}_{l=1}^L$.

\noindent\textbf{Fine-grained Feature Separation.} In the second stage, the encoded dual-stream features are fed into a Dual-Stream Interactive Decoder ($\mathcal{F}_{\textrm{Dec}}$). This decoder performs progressive spatial reconstruction and component disentanglement from the coarsest to the finest level. Specifically, at each level $l$, we first fuse the encoded features with the upsampled decoded features from the previous level, denoted as $\tilde{\mathbf{F}}_{\mathbf{T}}^l = \mathbf{F}_{\mathbf{T}, \textrm{Enc}}^l \oplus \textrm{Up}(\mathbf{F}_{\mathbf{T}, \textrm{Dec}}^{l-1})$ and $\tilde{\mathbf{F}}_{\mathbf{R}}^l = \mathbf{F}_{\mathbf{R}, \textrm{Enc}}^l \oplus \textrm{Up}(\mathbf{F}_{\mathbf{R}, \textrm{Dec}}^{l-1})$. Then, the component disentanglement is formulated as:
\begin{equation}
    \mathbf{F}^{l}_{\mathbf{T}, \textrm{Dec}}, \mathbf{F}^{l}_{\mathbf{R}, \textrm{Dec}} = \textrm{DSI-Block}(\tilde{\mathbf{F}}_{\mathbf{T}}^l, \tilde{\mathbf{F}}_{\mathbf{R}}^l),
\end{equation}
where $\textrm{Up}(\cdot)$ denotes upsampling, $\oplus$ denotes element-wise addition, and DSI-Block represents the task-specific Dual-Stream Interactive Block (detailed in Sec.~\ref{sec:ds_blocks}). For the coarsest level ($l=1$), the separation is performed directly using the encoded features without cross-level fusion. At the highest resolution ($l=L$), simple projection layers are applied to generate the clean predictions, $\hat{\mathbf{T}}$ and $\hat{\mathbf{R}}$.

\noindent\textbf{Auxiliary Modules for LORS Modeling.} To explicitly model the non-linear residual $\boldsymbol{\Phi}$ and zero-order offset $\boldsymbol{\Psi}$, two auxiliary Learnable Modeling Heads ($\mathcal{F}_{\textrm{Aux}}$) are placed parallel to the main decoder. These heads are constructed by stacking Dual-Stream Interactive Fusion (DSIF) Blocks, as detailed in the rightmost panel of Fig.~\ref{fig:arch}. At every scale, each head integrates the accumulated auxiliary features from the previous scale ($\mathbf{F}_{\mathbf{M}}$) with the current-scale transmission ($\mathbf{F}_{\mathbf{T}}$) and reflection ($\mathbf{F}_{\mathbf{R}}$) representations. Through upsampling, interactive fusion via DSIL blocks, channel-wise concatenation, and $1\times1$ convolutional projection, this module continuously extracts and fuses the target information to generate the respective predictions for $\boldsymbol{\Phi}$ and $\boldsymbol{\Psi}$.  Notably, because these auxiliary modules are designed to act as physical absorbers during training, they are \textit{entirely optional and can be safely discarded during inference without performance loss}, ensuring an efficient deployment.

Depending on deployment requirements and hardware constraints, the DIRS framework is flexibly configured into three network designs: (1) an \textit{activation-based CNN}, which employs a VGGNet-based GPE alongside YTMT blocks for both encoding (DSIL) and decoding (DSI) to fuse pretrained priors with task-specific representations; (2) a \textit{gate-based CNN}, which inherits the CNN GPE but upgrades to MuGI blocks throughout, enabling explicit multiplicative interactions and unimpeded information flow; and (3) an \textit{attention-based Transformer}, which upgrades the GPE to a Swin Transformer for global priors, integrating MuGI blocks during encoding to extract local details and PAIR blocks during decoding for non-local interactions.

In summary, the DIRS framework provides a structural template for different interaction mechanisms, allowing one to flexibly select the architectural configuration based on target performance and hardware bottlenecks. The formulations of these mechanisms are detailed in Sec.~\ref{sec:ds_blocks}.

\subsection{Dual-Stream Interactive Blocks}
\label{sec:ds_blocks}

To formally unify the foundational paradigms, we introduce the General Dual-Stream Interactive Block (General DSI Block). As illustrated in Fig.~\ref{fig:fib}~(a), it abstracts feature interaction into three standardized operations: Selection, Interaction, and Fusion (SIF). Specifically, the Selection operator explicitly decouples input features $\mathbf{F}_{*}^{1}$ ($* \in \{\mathbf{T}, \mathbf{R}\}$) into intra-stream retained features $\mathbf{K}_{*}$ and cross-stream interactive features $\mathbf{Q}_{*}$. Subsequently, the $\mathbf{K}$ components are preserved locally, while the $\mathbf{Q}$ components are exchanged and processed via interaction operators to model inter-stream dependencies. Finally, a Fusion operator integrates these recomposited features to yield the updated outputs $\mathbf{F}_{*}^{2}$. We detail three specific variants below.

\begin{figure*}[ht]
\centering
\begin{subfigure}[b]{0.32\textwidth}
    \centering
    \begin{minipage}[c][3.8cm][c]{\linewidth}
        \centering
        \definecolor{pafcolor}{HTML}{CC0000}
        \definecolor{nafcolor}{HTML}{4472C4}
        
        \begin{tikzpicture}
            \begin{axis}[
                width=2.4cm, height=2.4cm, scale only axis, 
                xlabel={$x$}, ylabel={$y$}, axis lines=middle, 
                domain=-8:8, samples=100,
                xtick=\empty, ytick=\empty,
                xmin=-9, xmax=9, ymin=-9, ymax=9, 
                clip=false 
            ]
                \addplot[line width=0.5mm, color=pafcolor] {max(0,x)};
            \end{axis}
            \node[below] at (1.2,-0.5) {\footnotesize PAF: $\text{ReLU}(x)$}; 
        \end{tikzpicture}
        \hspace{0.1cm}
        \begin{tikzpicture}
            \begin{axis}[
                width=2.4cm, height=2.4cm, scale only axis,
                xlabel={$x$}, ylabel={$y$}, axis lines=middle, 
                domain=-8:8, samples=100,
                xtick=\empty, ytick=\empty,
                xmin=-9, xmax=9, ymin=-9, ymax=9, 
                clip=false 
            ]
                \addplot[line width=0.5mm, color=nafcolor] {x-max(0,x)};
            \end{axis}
            \node[below] at (1.2,-0.5) {\footnotesize NAF: $x - \text{ReLU}(x)$};
        \end{tikzpicture}
    \end{minipage}
    \caption{Visualizations of Paired Activations}
    \label{fig:ytmt_vis}
\end{subfigure}
\hfill
\begin{subfigure}[b]{0.32\textwidth}
    \centering
    \begin{minipage}[c][3.8cm][c]{\linewidth}
        \centering
        \includegraphics[width=0.85\linewidth]{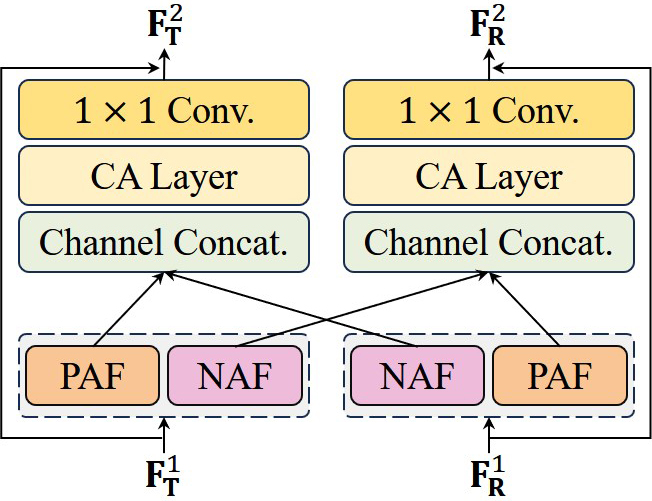}
    \end{minipage}
    \caption{Architecture of the YTMT Block}
    \label{fig:ytmt_arch}
\end{subfigure}
\hfill
\begin{subfigure}[b]{0.32\textwidth}
    \centering
    \begin{minipage}[c][3.8cm][c]{\linewidth}
        \centering
        \begin{tabular}{lcc} 
            \toprule
            PAF ($\delta^+$) & PSNR & SSIM \\
            \midrule
            ReLU   & \textbf{24.94} & \textbf{0.902} \\
            PReLU  & 24.66 & 0.892  \\
            SiLU   & 24.86 & 0.897  \\
            GeLU   & 24.74 & 0.900 \\
            Hswish & 24.88 & 0.902 \\
            \bottomrule
        \end{tabular}
    \end{minipage}
    \caption{Ablation on PAF choices}
    \label{fig:ytmt_table}
\end{subfigure}
\caption{(a) The ReLU pair (PAF and NAF). (b) Illustration of the YTMT Block. (c) Ablation on various PAF ($\delta^+$) choices evaluated on Real20 and SIR$^2$ datasets.} 
\label{fig:activation_results}
\end{figure*}

\begin{figure*}[ht]
\centering

\begin{subfigure}[b]{0.32\textwidth}
    \centering
    \begin{minipage}[c][4.8cm][c]{\linewidth}
        \centering
        \includegraphics[width=0.95\linewidth]{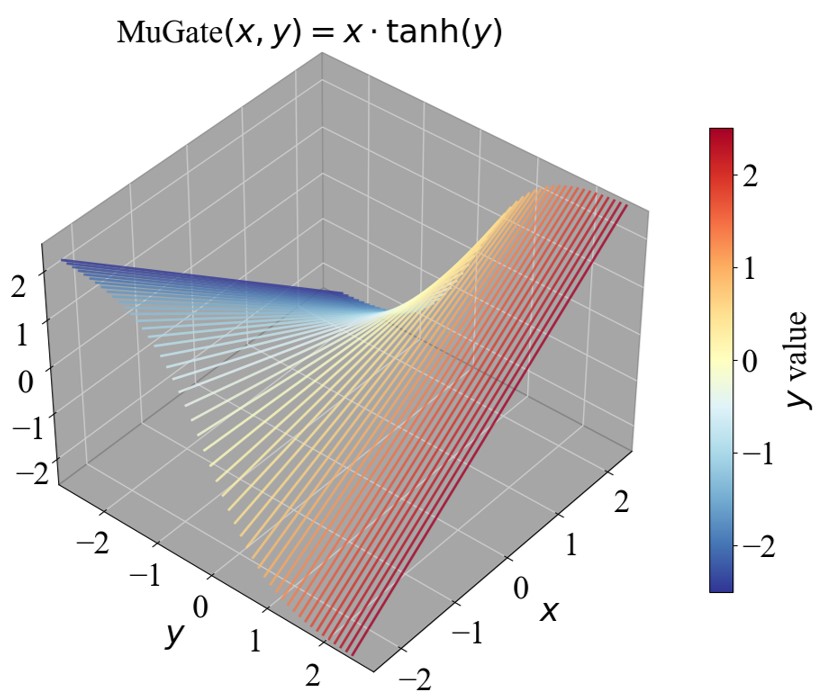}
    \end{minipage}
    \caption{Visualizations of gating functions}
    \label{fig:mugi_vis}
\end{subfigure}
\hfill
\begin{subfigure}[b]{0.32\textwidth}
    \centering
    \begin{minipage}[c][4.8cm][c]{\linewidth}
        \centering
        \includegraphics[height=4.8cm, keepaspectratio]{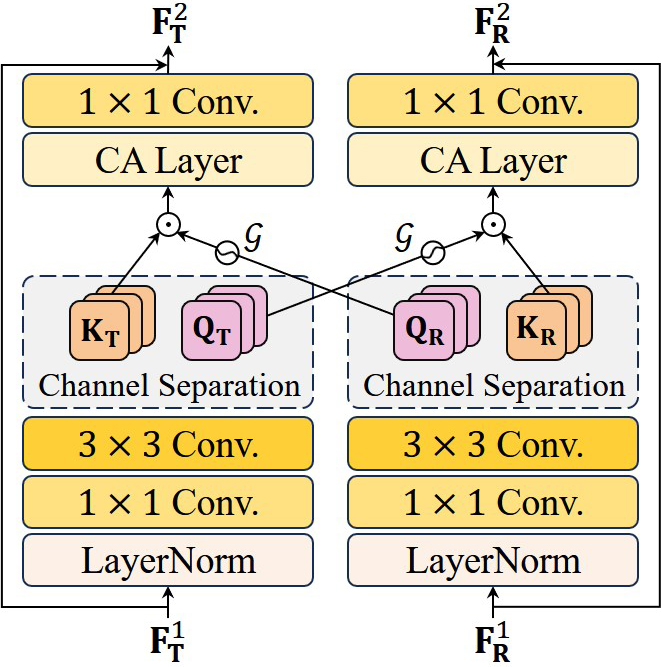} 
    \end{minipage}
    \caption{Architecture of the MuGI Block}
    \label{fig:mugi_arch}
\end{subfigure}
\hfill
\begin{subfigure}[b]{0.32\textwidth}
    \centering
    \begin{minipage}[c][4.8cm][c]{\linewidth}
        \centering
        \begin{tabular}{lcc}
            \toprule
            Gate ($\mathcal{G}$) & PSNR & SSIM \\
            \midrule
            Identity & 25.40 & 0.905 \\
            Sign      & 25.06 & 0.902  \\
            Tanh     & 25.50 & 0.907 \\
            GLU      & 25.19 & 0.905  \\
            GeLU     & 25.41 & 0.912  \\
            ReGLU     & \textbf{25.67} & 0.907 \\
            GeGLU     & 25.46 & 0.908 \\
            SwiGLU    & 25.63 & \textbf{0.913} \\
            \bottomrule
        \end{tabular}
    \end{minipage}
    \caption{Ablation on gate choices}
    \label{fig:mugi_table}
\end{subfigure}

\caption{(a) The gating functions in 3D feature space. (b) Illustration of the MuGI Block. (c) Ablation on various Gate ($\mathcal{G}$) choices evaluated on Real20 and SIR$^2$ datasets.}
\label{fig:mugi_gates}
\end{figure*}

\vspace{1mm}\noindent\textbf{Activation-based DSI Block.}
Standard activations suppress negative feature domains, causing severe information loss for the inherently weak reflection branch~\cite{icmlw/maas2013rectifier}. Since our LORS formulation safely bypasses complex high-order semantics and biases outside the decoder, we can abstract layer decomposition within DSI Blocks as an approximate dichotomous separation. Thus, we introduce a paired activation strategy. As illustrated in Fig.~\ref{fig:fib}~(b), the Selection operator utilizes a Positively Activated Function (PAF, $\delta^+$) to retain components $\mathbf{K}_* = \delta^+(\mathbf{F}^1_*)$, and an origin-symmetric Negatively Activated Function (NAF, $\delta^-$) to extract suppressed components $\mathbf{Q}_* = \delta^-(\mathbf{F}^1_*)$ as visualized in Fig.~\ref{fig:activation_results}~(a). Following the SIF paradigm, the $\mathbf{Q}$ components are exchanged and concatenated with the preserved $\mathbf{K}$ components of the opposite stream. The fusion operator, comprising Channel Attention (CA) and a $1\times 1$ convolution, integrates these recomposited features. Formally, the Activation-based DSI Block is expressed as: 
\begin{equation}
\begin{aligned}
    \mathbf{F}^2_{\mathbf{T}} &= \text{Conv}_{1\times 1}(\text{CA}([\delta^+(\mathbf{F}^1_{\mathbf{T}}), \delta^-(\mathbf{F}^1_{\mathbf{R}})])), \\
    \mathbf{F}^2_{\mathbf{R}} &= \text{Conv}_{1\times 1}(\text{CA}([\delta^+(\mathbf{F}^1_{\mathbf{R}}), \delta^-(\mathbf{F}^1_{\mathbf{T}})])), 
\end{aligned}
\end{equation}
where $[\cdot, \cdot]$ denotes channel-wise concatenation. This architecture of the block is visualized in Fig.~\ref{fig:activation_results}~(b).
Unlike vanilla CNNs that permanently discard negative-domain features, this block ensures that information suppressed by one branch is fully recycled by the other. We term this the ``Your Trash is My Treasure'' (YTMT) strategy, which maximizes feature utilization efficiency. While various activation functions can instantiate the YTMT block, our comparison in Fig.~\ref{fig:activation_results}~(c) reveals that the basic ReLU pairing (\emph{e.g.}, $\max(0, x)$ and $x - \max(0, x)$) demonstrates the most effective performance. This indicates that feature separation with a strict, hard boundary best aligns with the mutually complementary nature of the dual-stream features in activation-based networks.

\vspace{1mm}\noindent\textbf{Gate-based DSI Block.}
While YTMT establishes a clean dichotomous boundary, real-world sRGB reflection superimposition exhibits spatially non-uniform coupling and multiplicative ISP interactions, as established in Section~\ref{sec:modeling}. Facing these non-linearities, naively stacking activation blocks falls short. Gating mechanisms~\cite{dauphin2017language} inherently offer the multiplicative modeling required here. In standard single-stream models, a gating mechanism acts as a self-modulator, typically formulated as $\text{Gate}(x) = x \circ \mathcal{G}(x)$, where $x$ represents the controlled component and $\mathcal{G}(\cdot)$ assigns spatial attention-like weights to $x$ (\emph{e.g.}, Sigmoid in GLU  or ReLU in ReGLU~\cite{dauphin2017language}).

Motivated by this, we extend the self-gating concept into a cross-stream paradigm tailored for dual-stream architectures. We introduce the Mutual Gate (MuGate), defined as:
\begin{equation}
    \text{MuGate}(x, y) = x \circ \mathcal{G}(y).
\end{equation}
Here, the activation of one feature stream $x$ is explicitly modulated by the semantics of the other stream $y$. In the context of layer decomposition, this effectively models the photometric interaction: a high-intensity semantic region in the transmission stream can actively suppress the corresponding activation in the reflection stream, and vice versa.

\begin{figure*}[t]
\centering
\begin{subfigure}[b]{0.35\textwidth}
    \centering
    \begin{minipage}[c][4.8cm][c]{\linewidth}
        \centering
        \begin{minipage}[b]{0.246\linewidth}
            \centering
            \includegraphics[width=\linewidth]{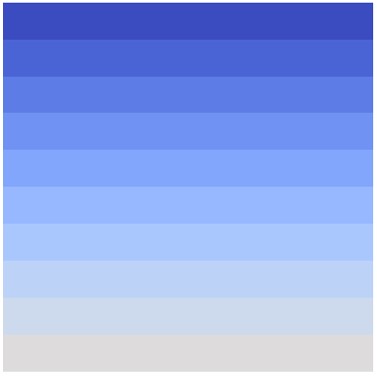}\\[0.5mm]
            {\footnotesize \textbf{T}}\\[2mm]
            \includegraphics[width=\linewidth]{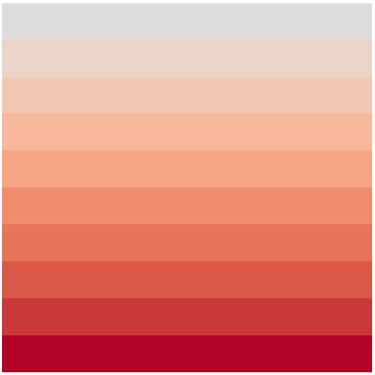}\\[0.5mm]
            {\footnotesize \textbf{R}}
        \end{minipage}
        \hfill
        \begin{minipage}[b]{0.68\linewidth}
            \centering
            \includegraphics[width=\linewidth]{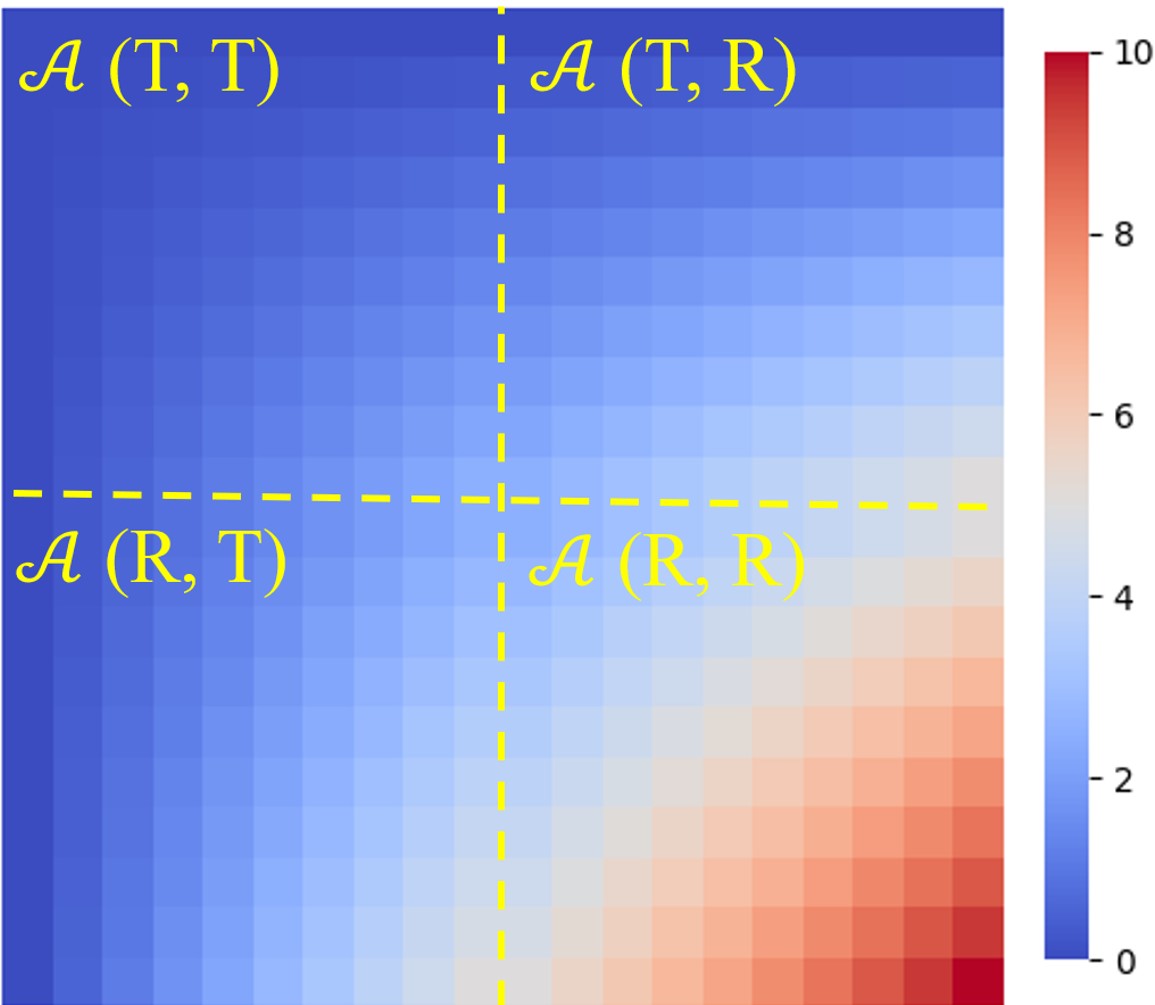}\\[1mm]
            {\footnotesize Attention map of DS-JA(T, R)}
        \end{minipage}
    \end{minipage}
    \caption{Visualizations of DS-JA}
    \label{fig:attn_vis_merged}
\end{subfigure}
\hfill
\begin{subfigure}[b]{0.28\textwidth}
    \centering
    \begin{minipage}[c][4.8cm][c]{\linewidth}
        \centering
        \includegraphics[height=4.8cm, keepaspectratio]{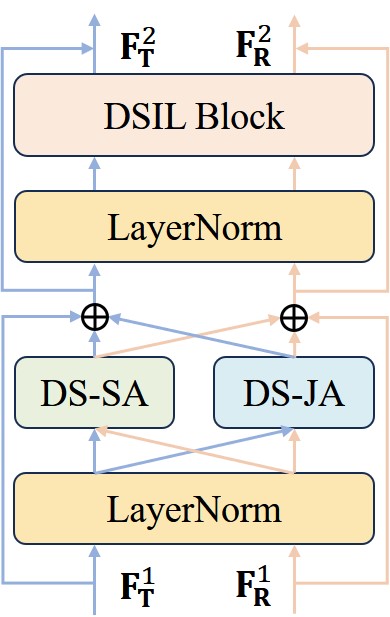}
    \end{minipage}
    \caption{Architecture of the PAIR Block}
    \label{fig:attn_arch}
\end{subfigure}
\hfill
\begin{subfigure}[b]{0.32\textwidth}
    \centering
    \begin{minipage}[c][4.8cm][c]{\linewidth}
        \centering
        \begin{tabular}{lcc}
            \toprule
            Attention ($\mathcal{A}$) & PSNR & SSIM \\
            \midrule
            Mamba2             & 25.05 & 0.904 \\
            PoolingAttn        & 24.41 & 0.892 \\
            TransposedAttn      & 25.60 & 0.912 \\
            NeighborhoodAttn   & 25.74 & 0.914 \\
            WindowAttn         & \textbf{26.37} & \textbf{0.918} \\
            \bottomrule
        \end{tabular}
    \end{minipage}
    \caption{Ablation on attention choices}
    \label{fig:attn_table}
\end{subfigure}

\caption{(a) The inputs (\textbf{T}, \textbf{R}) and the Dual-Stream Joint-Attention (DS-JA) mechanism. (b) Illustration of the PAIR Block. (c) Ablation on various attention mechanisms ($\mathcal{A}$) for DS-SA and DS-JA evaluated on Real20 and SIR$^2$ datasets.}
\label{fig:attention_compare}
\end{figure*}

As illustrated in Fig.~\ref{fig:mugi_gates}~(b), the Mutually-Gated Interaction (MuGI) block realizes the Gate-based DSI variant within the SIF paradigm (Fig.~\ref{fig:fib}~(c)) by leveraging the proposed MuGate operator and depthwise separable convolutions \cite{arxiv/howard2017mobilenets}. Specifically, the Selection operator processes the input features $\mathbf{F}^1_*$ ($* \in \{\mathbf{T}, \mathbf{R}\}$) through a sequence of Layer Normalization (LN), a $1 \times 1$ convolution, and a $3 \times 3$ convolution, followed by a channel-wise split: 
\begin{equation}
    \mathbf{K}_*, \mathbf{Q}_* = \text{Split}\left(\text{Conv}_{3 \times 3}\left(\text{Conv}_{1 \times 1}\left(\text{LN}(\mathbf{F}^1_*)\right)\right)\right),
\end{equation}
where $\text{LN}(\cdot)$ normalizes the feature distributions, which guarantees numerical comparability between the dual streams. The $\text{Conv}_{1 \times 1}(\cdot)$ then elevates the feature dimension by doubling the channels to provide sufficient capacity for the  feature separation. The following $\text{Conv}_{3 \times 3}(\cdot)$ guarantees the information diversity between the intra-stream controlled component $\mathbf{K}_*$ and the cross-stream controlling component $\mathbf{Q}_*$, which are subsequently generated by the $\text{Split}(\cdot)$ operator. During the cross-stream interaction, the controlling components are passed through the gating function to generate the modulation maps $\mathcal{G}(\mathbf{Q}_{\mathbf{R}})$ and $\mathcal{G}(\mathbf{Q}_{\mathbf{T}})$. These maps are subsequently applied to the retained components $\mathbf{K}_{\mathbf{T}}$ and $\mathbf{K}_{\mathbf{R}}$ via the MuGate operator.
Finally, the Fusion operator, comprising Channel Attention (CA) and a $1\times 1$ convolution, integrates these mutually interacted features. The above operations are expressed as: 
\begin{equation}
\begin{aligned}
    \mathbf{F}^2_{\mathbf{T}} &= \text{Conv}_{1\times 1}(\text{CA}(\mathbf{K}_{\mathbf{T}} \circ \mathcal{G}(\mathbf{Q}_{\mathbf{R}}))), \\
    \mathbf{F}^2_{\mathbf{R}} &= \text{Conv}_{1\times 1}(\text{CA}(\mathbf{K}_{\mathbf{R}} \circ \mathcal{G}(\mathbf{Q}_{\mathbf{T}}))). 
\end{aligned}
\end{equation}

As visualized in Fig.~\ref{fig:mugi_gates}~(a), the cross-stream control signal $y$ effectively modulates the functional mapping between the input $x$ and the gate output, explicitly illustrating the dynamic mutual modulation mechanism between the two streams. Furthermore, the quantitative evaluation in Fig.~\ref{fig:mugi_gates}~(c) demonstrates that the specific choice of the gating function significantly impacts the efficiency of this cross-stream interaction. Among the evaluated variants, employing the ReGLU gate ($\mathcal{G}(x) = \max(0, x)$) yields the optimal separation performance, effectively capturing the mutually inhibitive nature of the transmission and reflection layers. 

\vspace{1mm}\noindent\textbf{Attention Based DSI Block.} 
While lightweight, the convolutional designs (YTMT and MuGI) lack nonlocal receptive fields and explicit interstream similarity assessment. They passively aggregate all routed features, risking information scrambling when contextual interaction is unnecessary. 

To overcome this, we propose the Parallel Attention Interaction and Refinement (PAIR) block (Fig.~\ref{fig:fib}~(d)), leveraging dot product attention for nonlocal explicit assessment~\cite{nips/XieWYAAL21,cvpr/Liu0LYXWN000WG22,iccv/ZongS023}. Naively applying standard Cross Attention forces the query to compute Softmax weights exclusively over the opposing stream. This inherently lacks the ability to reject irrelevant information, forcibly introducing noisy features even when interstream similarity is low.

We resolve this via Dual Stream Joint Attention (DS-JA). Given features $\textbf{T}, \textbf{R} \in \mathbb{R}^{N \times C}$, DS-JA concatenates them along the sequence dimension into $\textbf{X} = [\textbf{T}^T, \textbf{R}^T]^T \in \mathbb{R}^{2N \times C}$ to compute joint queries $\textbf{Q}$, keys $\textbf{K}$, and values $\textbf{V}$. The joint attention map $\textbf{A}$ (Fig.~\ref{fig:attention_compare}~(a)) naturally partitions into four quadrants:
\begin{equation}
\begin{aligned}
    \textbf{A} &= \text{Softmax}(\frac{\textbf{Q}\textbf{K}^T}{\sqrt{D}}) = \text{Softmax}(\frac{1}{\sqrt{D}}\begin{bmatrix}
    \textbf{T} \\
    \textbf{R}
\end{bmatrix}\textbf{W}_q\textbf{W}_k^T
\begin{bmatrix}
    \textbf{T}^T \, \textbf{R}^T
\end{bmatrix}) \\
&= \text{Softmax}(\frac{1}{\sqrt{D}} \begin{bmatrix}
    \textbf{T}\textbf{W}_q\textbf{W}_k^T\textbf{T}^T & \textbf{T}\textbf{W}_q\textbf{W}_k^T\textbf{R}^T \\
    \textbf{R}\textbf{W}_q\textbf{W}_k^T\textbf{T}^T & \textbf{R}\textbf{W}_q\textbf{W}_k^T\textbf{R}^T
\end{bmatrix} ).
\end{aligned}
\end{equation}

Defining partial attention $\mathcal{A}^p(\textbf{B}_1,\textbf{B}_2) = \text{Softmax}(\frac{1}{\sqrt{D}}\textbf{B}_1\textbf{W}_q\textbf{W}_k^T\textbf{B}_2^T)\textbf{B}_2\textbf{W}_v$, the updated representation $\textbf{Y} = \textbf{A}\textbf{V}$ decomposes into:
\begin{equation}
\textbf{Y} =  \begin{bmatrix}
\textbf{T}_o \\
\textbf{R}_o
\end{bmatrix}
= \begin{bmatrix}
\mathcal{A}^1(\textbf{T},\textbf{T}) + \mathcal{A}^1(\textbf{T},\textbf{R}) \\
\mathcal{A}^2(\textbf{R},\textbf{T}) + \mathcal{A}^2(\textbf{R},\textbf{R})
\end{bmatrix},
\end{equation}
where terms sharing superscript $p$ use identical Softmax denominators. If interstream correlations are weak, the probability mass naturally falls back to intrastream components (\emph{e.g.}, $\mathcal{A}^1(\textbf{T},\textbf{T})$). This prevents forced aggregation of noisy features, safeguarding information purity.

To complement this, PAIR concurrently employs parallel Dual Stream Self Attention (DS-SA) (Fig.~\ref{fig:attention_compare}~(b)). While DS-JA modulates cross stream features, it risks diluting structural focus if queries are dominated by opposing high response tokens. DS-SA, executing independent self attention, establishes an undiluted baseline for intrastream coherence. Their elementwise addition yields features balancing pure spatial context with dynamic modulation. Notably, PAIR is compatible with various operators $\mathcal{A}(\cdot)$; our ablation (Fig.~\ref{fig:attention_compare}~(c)) shows Window Attention (WindowAttn) optimal for balancing receptive fields and spatial fidelity.

In summary, the three DIRS architectures introduced in Sec.~\ref{sec:ds_arch}, which are instantiated by the interactive blocks detailed above, offer an architectural spectrum tailored to diverse deployment requirements. Specifically, DIRS-YTMT and DIRS-MuGI serve as highly efficient, CNN-based variants suitable for resource-constrained hardware, maintaining low computational footprints while delivering competitive results. Conversely, DIRS-PAIR leverages non-local Transformer modeling to maximize restoration quality despite higher computational demands. To thoroughly push the performance upper bound, we select DIRS-PAIR as the primary configuration to benchmark against current state-of-the-art methods in the subsequent experimental evaluations (Sec.~\ref{sec:performance}), while deferring a detailed complexity and trade-off analysis of all variants to Sec.~\ref{sec:variants_analysis}.

\begin{table*}[t]
  \caption{Quantitative results on four real-world datasets. The best results are in \textbf{bold}, while the second-best are \underline{underlined}. $\dagger$ means training under data setting II. * represents additional prompts are used. $\triangle$ reflects extra data pairs are involved.}
  \label{tab:qcomp}
  \centering
  \begin{tabular}{lccccccccccc}
  \toprule[1pt]
  \multirow{2}{*}{Methods} & \multirow{2}{*}{Venue} & \multicolumn{2}{c}{Real20 (20)} & \multicolumn{2}{c}{Objects (200)} & \multicolumn{2}{c}{Postcard (199)} &\multicolumn{2}{c}{Wild (55)} & \multicolumn{2}{c}{Average} \\ 
  \addlinespace[1mm]
  \cline{3-12}
  \addlinespace[1mm]
  & & PSNR & SSIM & PSNR  & SSIM & PSNR & SSIM & PSNR & SSIM & PSNR & SSIM \\ \midrule
  Zhang \emph{et al.} \cite{cvpr/ZhangNC18a} & CVPR, 2018 & 22.55 & 0.788 & 22.68 & 0.879 & 16.81 & 0.797 & 21.52 & 0.832 & 20.08 & 0.835 \\
  BDN \cite{eccv/YangGLS18} & ECCV, 2018  & 18.41 & 0.726 & 22.72 & 0.856 & 20.71 & 0.859 & 22.36 & 0.830 & 21.65 & 0.849 \\
  ERRNet \cite{cvpr/WeiYFW019} & CVPR, 2019 & 22.89 & 0.803 & 24.87 & 0.896 & 22.04 & 0.876 & 24.25 & 0.853 & 23.53 & 0.879 \\
  IBCLN \cite{cvpr/LiY0LH20} & CVPR, 2020& 21.86 & 0.762 & 24.87 & 0.893 & 23.39 & 0.875 & 24.71 & 0.886 & 24.10 & 0.879 \\
  DMGN \cite{tip/FengPJCZL21} & TIP, 2021 & 20.71 & 0.770 & 24.98 & 0.899 & 22.92 & 0.877 & 23.81 & 0.835 & 23.80 & 0.877 \\
  MoG-SIRR \cite{mma/shao2021model} & MMAsia, 2021 & 21.63 & 0.814 & 24.57 & 0.911 & 22.78 & 0.892 & 24.13 & 0.890 & 23.64 & 0.896 \\
  Zheng \emph{et al.} \cite{cvpr/ZhengSCJDK21} & CVPR, 2021 & 20.17 & 0.755 & 25.20 & 0.880 & 23.26 & 0.905 & 25.39 & 0.878 & 24.20 & 0.885 \\
  RobustSIRR \cite{cvpr/song2023robust} & CVPR, 2023  & 23.30 & 0.827 & 24.90 & \underline{0.917} & 19.91 & 0.868 & 23.67 & 0.884 & 22.59 & 0.889 \\ 
  SRNet \cite{chen2024closer} & TIP, 2024  & 23.58 & 0.803 & \underline{26.96} & 0.912 & 23.85 & 0.892 & 25.63 & 0.894 & 25.36 & 0.897 \\ 
  RDNet \cite{cvpr/zhao2025reversible} & CVPR, 2025 & \underline{24.43} & \underline{0.835} & 25.76 & 0.905 & \textbf{25.95} & \textbf{0.920} & \textbf{27.20} & \underline{0.910} & \underline{25.95} & \underline{0.909} \\
  DIRS-PAIR (Ours) & - & \textbf{25.02} & \textbf{0.836} & \textbf{27.40} & \textbf{0.927} & \underline{25.47} & \underline{0.919} & \underline{26.41} & \textbf{0.911} & \textbf{26.37} & \textbf{0.918} \\
  \midrule
  ${\text{Dong \emph{et al.}}}^\dagger$ \cite{iccv/Dong00BXL21} & ICCV, 2021 & 23.34 & 0.812 & 24.36 & 0.898 & 23.72 & 0.903 & 25.73 & 0.902 & 24.21 & 0.897 \\
  $\text{PNACR}^\dagger$ \cite{mm/wang2023personalized} & MM, 2023 & 22.57 & 0.806 & 24.73 & 0.897 & 23.11 & 0.890 &25.69 & 0.903 & 24.07 & 0.891 \\
  $\text{RRW}^{\triangle}$ \cite{cvpr/zhu2024revisiting} & CVPR, 2024 & 21.83 & 0.801 & 26.67 & \textbf{0.931} & 24.04 & 0.903 & 26.49 & 0.915 & 25.34 & 0.912 \\
  $\text{Zhong \emph{et al.}}^{\dagger\triangle\ast}$ \cite{cvpr/ZhongHWLS24} & CVPR, 2024 & 24.05 & 0.824 & 26.51 & 0.927 & 25.02 & 0.915 & 26.23 & \textbf{0.925} & 25.75 & 0.917 \\
  $\text{L-DiffER}^{\dagger\triangle\ast}$ \cite{eccv/hong2024differ} & ECCV, 2024 & 23.77 & 0.821 & 25.75 & 0.918 & 24.35 & 0.905 & 26.11 & 0.909 & 25.12 & 0.907 \\
  $\text{DExNet}^\dagger$ \cite{pami/huang2025lightweight} & TPAMI, 2025  & 23.50 & 0.817 & 26.38 & 0.916 & 25.52 & 0.918 & 26.95 & 0.908 & 25.96 & 0.912 \\ 
  $\text{RDNet}^\dagger$ \cite{cvpr/zhao2025reversible} & CVPR, 2025 & \textbf{25.58} & \textbf{0.846} & \underline{26.78} & 0.921 & \underline{26.33} & \underline{0.922} & \underline{27.70} & 0.915 & \underline{26.65} & \underline{0.918} \\
  $\text{DIRS-PAIR}^\dagger$ (Ours) & - & \underline{25.19} & \underline{0.834} & \textbf{26.87} & \underline{0.926} & \textbf{26.38} & \textbf{0.925} & \textbf{27.90} & \underline{0.920} & \textbf{26.71} & \textbf{0.921} \\
  \bottomrule[1pt]
  \end{tabular}
\end{table*}

\begin{table*}[t]
  \centering
  \caption{Quantitative results on the Nature test set for models trained under Data Setting II.}
  \label{tab:nature}
  \setlength{\tabcolsep}{8pt}
  \begin{tabular}{ccccccc}
  \toprule[1pt]
  Metrics &  Zhang \emph{et al.} \cite{cvpr/ZhangNC18a} & BDN-F \cite{eccv/YangGLS18}  & ERRNet-F \cite{cvpr/WeiYFW019} & IBCLN \cite{cvpr/LiY0LH20} & Dong \emph{et al.} \cite{iccv/Dong00BXL21} & PNACR \cite{mm/wang2023personalized} \\ 
  \midrule
  PSNR    & 19.56 & 18.92 & 22.18 & 23.57 & 23.45 & 23.92  \\
  SSIM    & 0.736 & 0.737 & 0.756 & 0.783 & 0.808 & 0.807  \\ 
  \midrule
  Metrics & RRW \cite{cvpr/zhu2024revisiting} &  Zhong \emph{et al.} \cite{cvpr/ZhongHWLS24} & L-DiffER \cite{eccv/hong2024differ} & DExNet \cite{pami/huang2025lightweight} & RDNet \cite{cvpr/zhao2025reversible} & $\text{DIRS-PAIR}^\dagger$ (Ours)\\ 
  \midrule
  PSNR   & 26.04 & 23.87 & 23.95 & 24.69 & \underline{26.21} & \textbf{26.67} \\
  SSIM   & \underline{0.846} & 0.812 & 0.831 & 0.841 & 0.842 & \textbf{0.847} \\ 
  \bottomrule[1pt]
  \end{tabular}
\end{table*}

\subsection{Dual-Stream Learning Objective}
\label{sec:ds_obj}


\noindent\textbf{Pixel reconstruction loss.} To enforce consistency between predicted layers $(\hat{\mathbf{T}}, \hat{\mathbf{R}})$ and ground truths $(\mathbf{T}, \mathbf{R})$, while constraining nonlinear superposition under the LORS formulation, we define:
\begin{equation}
\begin{aligned}
    \mathcal{L}_{\text{pix}} &:= \| \hat{\textbf{T}} - \textbf{T} \|^2_2 + \|\hat{\textbf{R}} - \textbf{R}\|^2_2 \\
    &\quad + \alpha\|\textbf{I} - (\hat{\textbf{T}} + \hat{\textbf{R}} + \boldsymbol{\Phi}(\hat{\textbf{T}},\hat{\textbf{R}}) + \boldsymbol{\Psi})\|_1 + \beta\|\nabla \boldsymbol{\Psi}\|_1,
\end{aligned}
\end{equation}
where $\|\cdot\|_2$ and $\|\cdot\|_1$ denote the $\ell_2$ and $\ell_1$ norms, respectively, and $\nabla$ represents the spatial gradient operator. The total variation penalty $\|\nabla \boldsymbol{\Psi}\|_1$ encourages the zero-order offset to be spatially smooth, preventing it from absorbing high-frequency details. The coefficients $\alpha$ and $\beta$ are hyperparameters balancing the LORS and smoothness penalties, empirically set to $0.2$ and $10^{-3}$, respectively.


\noindent\textbf{Gradient disentanglement loss.} To preserve high-frequency details and enforce structural independence between layers~\cite{pami/LevinW07,iccv/FanYHCW17,cvpr/WanSDTK18,cvpr/WeiYFW019}, we adopt:
\begin{equation}
\begin{aligned}
     \mathcal{L}_{\text{grad}} := \|\nabla \hat{\textbf{T}}-\nabla \textbf{T}\|_{1}&+\|\nabla \hat{\textbf{R}}-\nabla \textbf{R}\|_{1} \\
     &+ \frac{1}{N} \sum_{n=0}^{N-1} \| \mathcal{D}(\hat{\textbf{T}}^{\downarrow n}, \hat{\textbf{R}}^{\downarrow n})\|_2^2, \\
    \text{with} \ \;\mathcal{D}(\hat{\textbf{T}}, \hat{\textbf{R}}) :=  \tanh & \left(\xi_{1}|\nabla \hat{\textbf{T}}|\right) \circ  \tanh \left(\xi_{2}|\nabla \hat{\textbf{R}}|\right),
\end{aligned}
\end{equation}
where $\nabla$ denotes the first-order derivative operator. $\hat{\textbf{T}}^{\downarrow n}$ and $\hat{\textbf{R}}^{\downarrow n}$ are the $2^n$ down-sampled versions of $\hat{\textbf{T}}$ and $\hat{\textbf{R}}$, and $\xi_1$ and $\xi_2$ act as normalization factors. The exclusion term $\mathcal{D}(\hat{\textbf{T}}, \hat{\textbf{R}})$ ensures the multi-scale mutual exclusion of the two layers in the gradient domain~\cite{cvpr/ZhangNC18a}.

\noindent\textbf{Feature reconstruction loss.} To promote the perceptual visual quality of the decoupled layers, we harness the standard feature reconstruction loss:
\begin{equation}
    \mathcal{L}_{\text{fea}} := \sum_{i} \omega_{i}\|\phi_{i}(\hat{\textbf{T}})-\phi_{i}(\textbf{T})\|_{1},
\end{equation}
where $\phi_i(\cdot)$ extracts the intermediate feature maps from a pre-trained VGG-19 network, with $i\in\{2, 7, 12, 21, 30\}$ denoting the specific layer indices. Furthermore, $\omega_i$ represents the weight balancing different hierarchical levels. 

\noindent\textbf{Total loss.} The final training objective is a weighted combination of the aforementioned three domains of losses:
\begin{equation}
    \mathcal{L}_{\text{tot}} := \mathcal{L}_{\text{pix}} + \lambda_1\mathcal{L}_{\text{grad}} + \lambda_2\mathcal{L}_{\text{fea}},
\end{equation}
where the coefficients $\lambda_1 = 1$ and $\lambda_2 = 0.01$ are empirically set to balance the respective loss terms.

\begin{figure*}[t]
     \centering{
     \begin{subfigure}{0.19\linewidth}
          \includegraphics[width=1\linewidth,height=65pt]{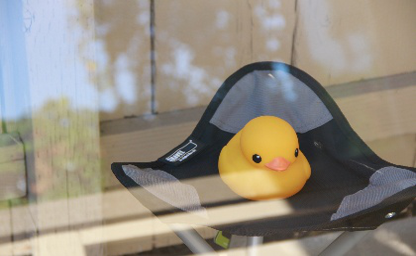}
          \subcaption*{Input}
          \vspace{2pt}
     \end{subfigure}
     \begin{subfigure}{0.19\linewidth}
          \includegraphics[width=1\linewidth,height=65pt]{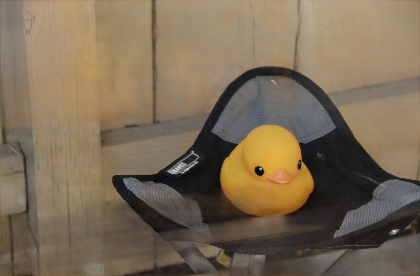}
          \subcaption*{Zhang \emph{et al.} \cite{cvpr/ZhangNC18a}}
          \vspace{2pt}
    \end{subfigure}
     \begin{subfigure}{0.19\linewidth}
          \includegraphics[width=1\linewidth,height=65pt]{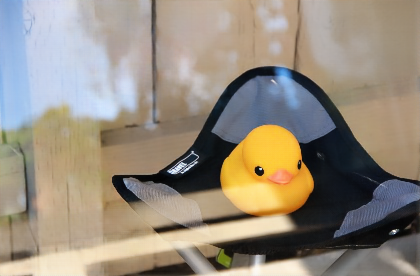}
          \subcaption*{BDN \cite{eccv/YangGLS18}}
          \vspace{2pt}
    \end{subfigure}
     \begin{subfigure}{0.19\linewidth}
          \includegraphics[width=1\linewidth,height=65pt]{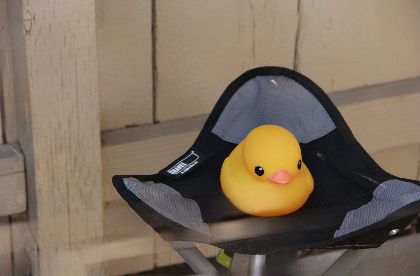}
          \subcaption*{ERRNet \cite{cvpr/WeiYFW019}}
          \vspace{2pt}
    \end{subfigure}
     \begin{subfigure}{0.19\linewidth}
          \includegraphics[width=1\linewidth,height=65pt]{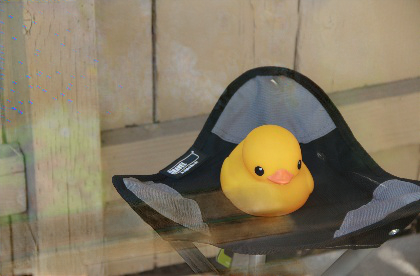}
          \subcaption*{IBCLN \cite{cvpr/LiY0LH20}}
          \vspace{2pt}
    \end{subfigure}
    \begin{subfigure}{0.19\linewidth}
          \includegraphics[width=1\linewidth,height=65pt]{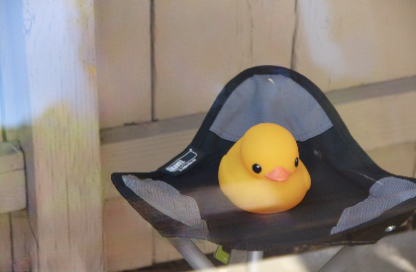}
          \subcaption*{Dong \emph{et al.} \cite{iccv/Dong00BXL21}}
    \end{subfigure}
    \begin{subfigure}{0.19\linewidth}
          \includegraphics[width=1\linewidth,height=65pt]{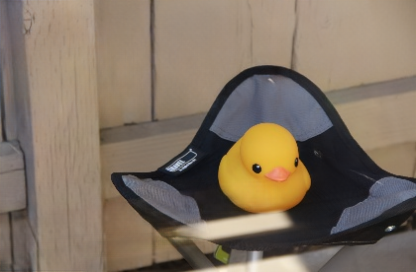}
          \subcaption*{RRW \cite{cvpr/zhu2024revisiting} }
    \end{subfigure}
    \begin{subfigure}{0.19\linewidth}
          \includegraphics[width=1\linewidth,height=65pt]{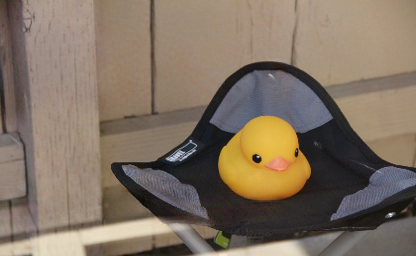}
          \subcaption*{RDNet \cite{cvpr/zhao2025reversible} }
    \end{subfigure}
    \begin{subfigure}{0.19\linewidth}
          \includegraphics[width=1\linewidth,height=65pt]{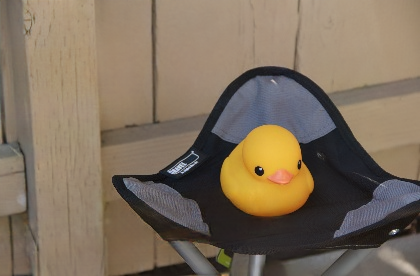}
          \subcaption*{DIRS-PAIR (Ours)}
    \end{subfigure}
     \begin{subfigure}{0.19\linewidth}
          \includegraphics[width=1\linewidth,height=65pt]{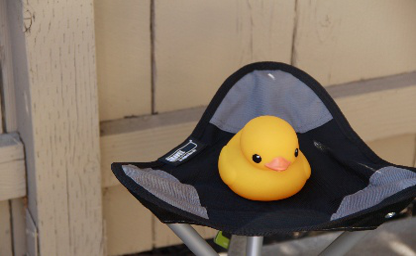}
          \subcaption*{GT}
    \end{subfigure}}
     \caption{Visual comparison of transmission layer predictions on a sample from the Real20 dataset.}
     \label{fig:visual_comp}
     \vspace{-5pt}
\end{figure*}
\begin{figure*}[t]
     \centering{
     \begin{subfigure}{0.16\linewidth}
          \includegraphics[width=1\linewidth,height=115pt]{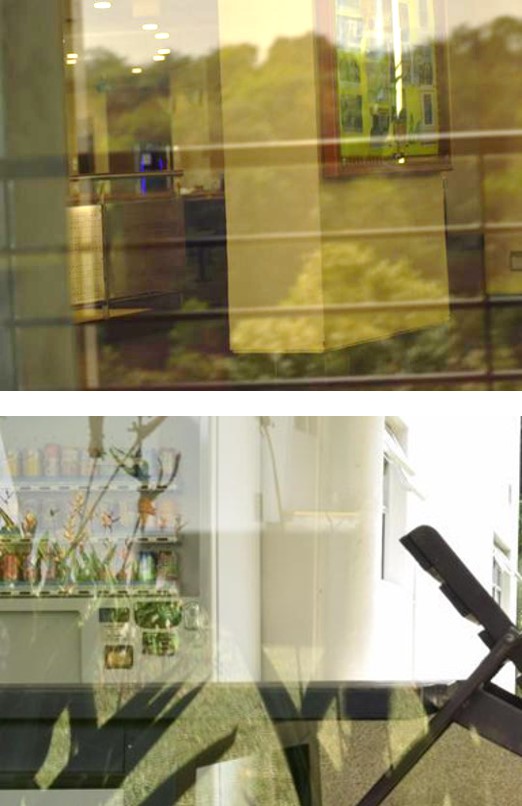}
          \subcaption*{Input}
     \end{subfigure}
     \begin{subfigure}{0.16\linewidth}
          \includegraphics[width=1\linewidth,height=115pt]{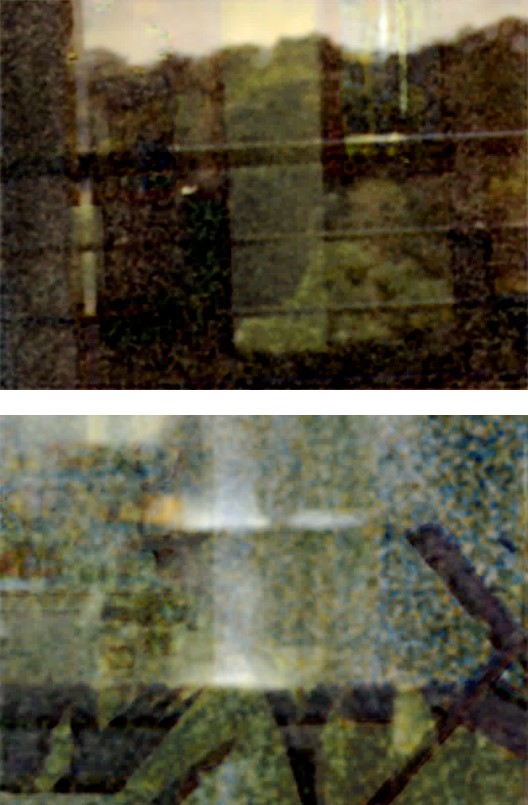}
          \subcaption*{IBCLN~\cite{cvpr/LiY0LH20}}
    \end{subfigure}
    \begin{subfigure}{0.16\linewidth}
          \includegraphics[width=1\linewidth,height=115pt]{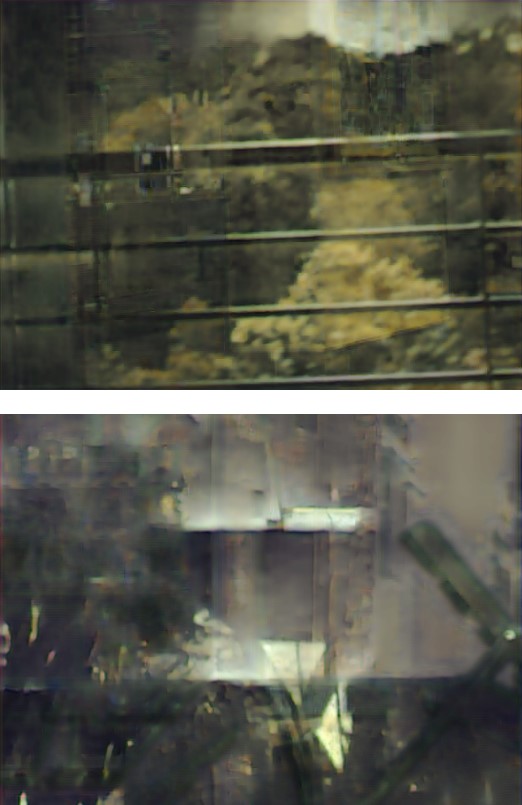}
          \subcaption*{Dong \emph{et al.}~\cite{iccv/Dong00BXL21}}
    \end{subfigure}
    \begin{subfigure}{0.16\linewidth}
          \includegraphics[width=1\linewidth,height=115pt]{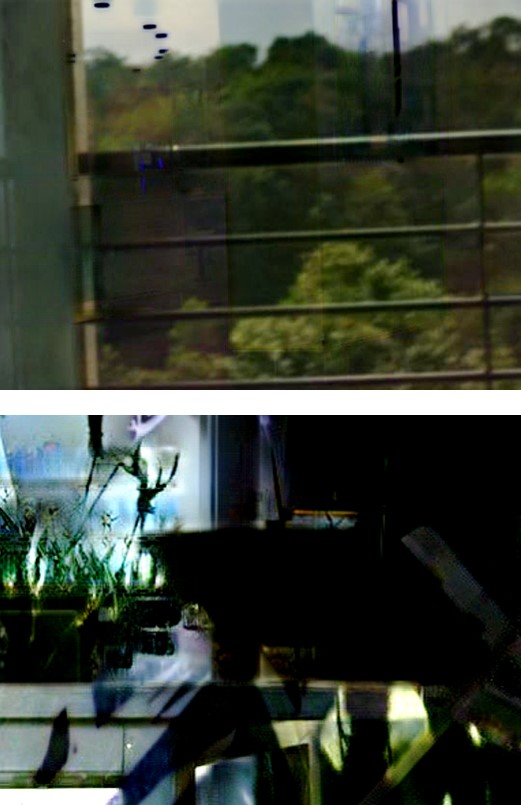}
          \subcaption*{RDNet~\cite{cvpr/zhao2025reversible}}
    \end{subfigure}
    \begin{subfigure}{0.16\linewidth}
          \includegraphics[width=1\linewidth,height=115pt]{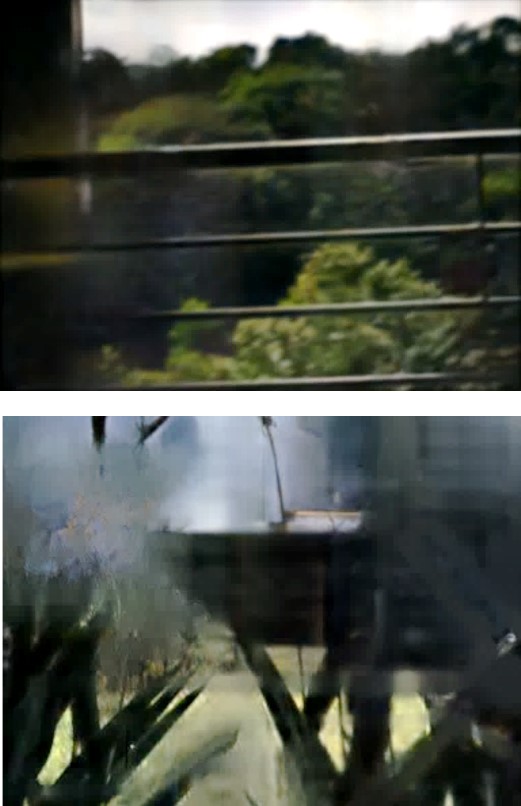}
          \subcaption*{DIRS-PAIR (Ours)}
    \end{subfigure}
     \begin{subfigure}{0.16\linewidth}
          \includegraphics[width=1\linewidth,height=115pt]{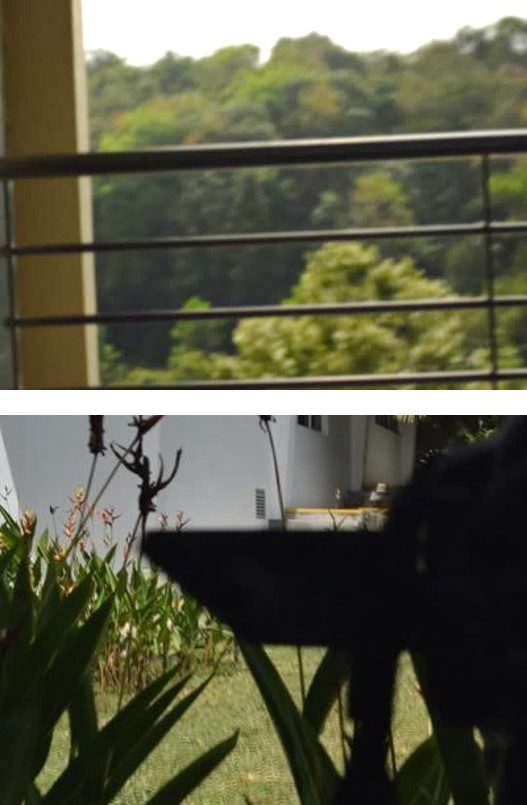}
          \subcaption*{GT}
    \end{subfigure}}
     \caption{Visual comparison of reflection layer predictions on samples from the $\text{SIR}^2$ dataset. 
     }
     \vspace{-10pt}
     \label{fig:visual_comp_r}
\end{figure*}

\begin{figure*}[t]
    \centering{
    \begin{subfigure}{0.19\linewidth}
        \includegraphics[width=1\linewidth,height=65pt]{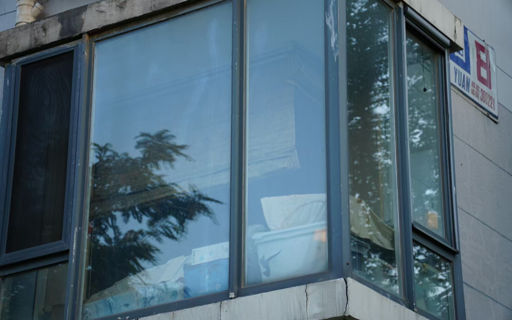}\vspace{2pt}
        \includegraphics[width=1\linewidth,height=65pt]{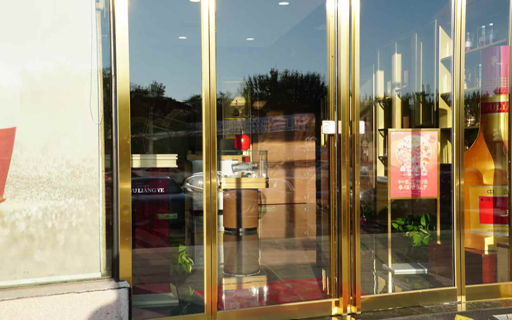}
        \subcaption*{Input}
        \vspace{2pt}
    \end{subfigure}
    \begin{subfigure}{0.19\linewidth}
        \includegraphics[width=1\linewidth,height=65pt]{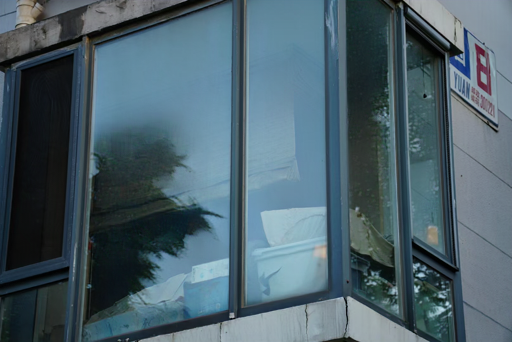}\vspace{2pt}
        \includegraphics[width=1\linewidth,height=65pt]{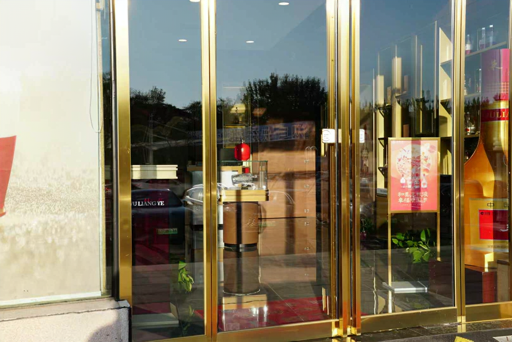}
        \subcaption*{ERRNet~\cite{cvpr/WeiYFW019}}
        \vspace{2pt}
    \end{subfigure}
    \begin{subfigure}{0.19\linewidth}
        \includegraphics[width=1\linewidth,height=65pt]{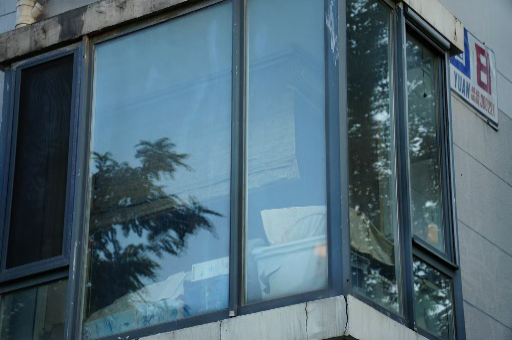}\vspace{2pt}
        \includegraphics[width=1\linewidth,height=65pt]{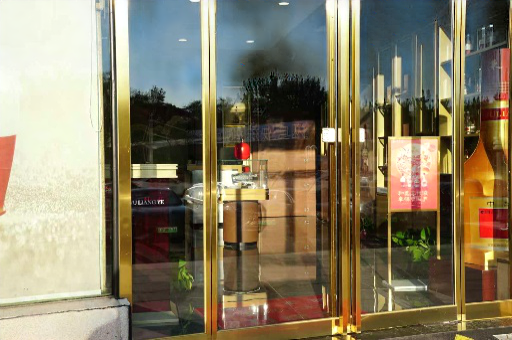}
        \subcaption*{IBCLN~\cite{cvpr/LiY0LH20}}
        \vspace{2pt}
    \end{subfigure}
    \begin{subfigure}{0.19\linewidth}
        \includegraphics[width=1\linewidth,height=65pt]{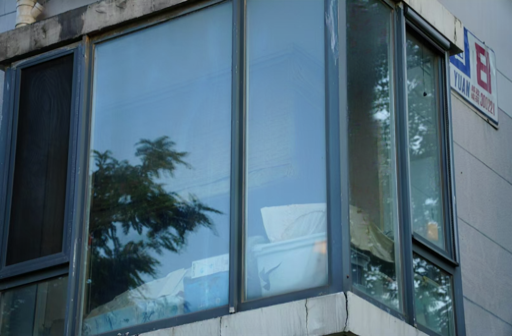}\vspace{2pt}
        \includegraphics[width=1\linewidth,height=65pt]{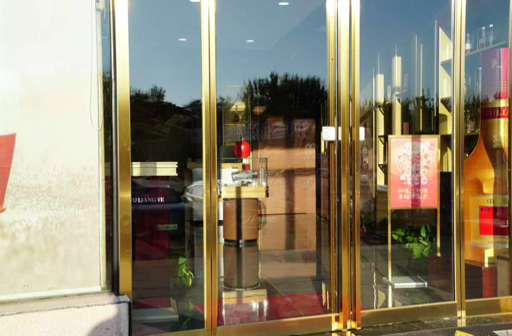}
        \subcaption*{Dong~\emph{et al.} \cite{iccv/Dong00BXL21}}
        \vspace{2pt}
    \end{subfigure}
    \begin{subfigure}{0.19\linewidth}
        \includegraphics[width=1\linewidth,height=65pt]{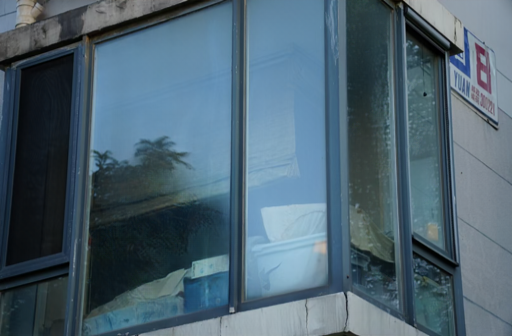}\vspace{2pt}
        \includegraphics[width=1\linewidth,height=65pt]{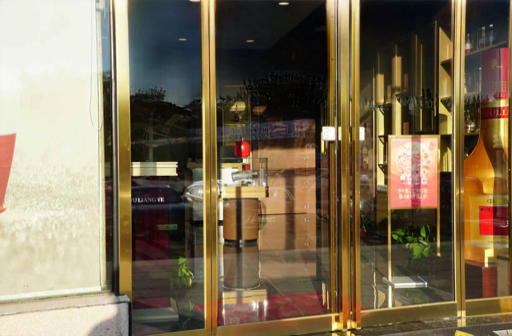}
        \subcaption*{RRW~\cite{cvpr/zhu2024revisiting}}
        \vspace{2pt}
    \end{subfigure}
    \begin{subfigure}{0.19\linewidth}
        \includegraphics[width=1\linewidth,height=65pt]{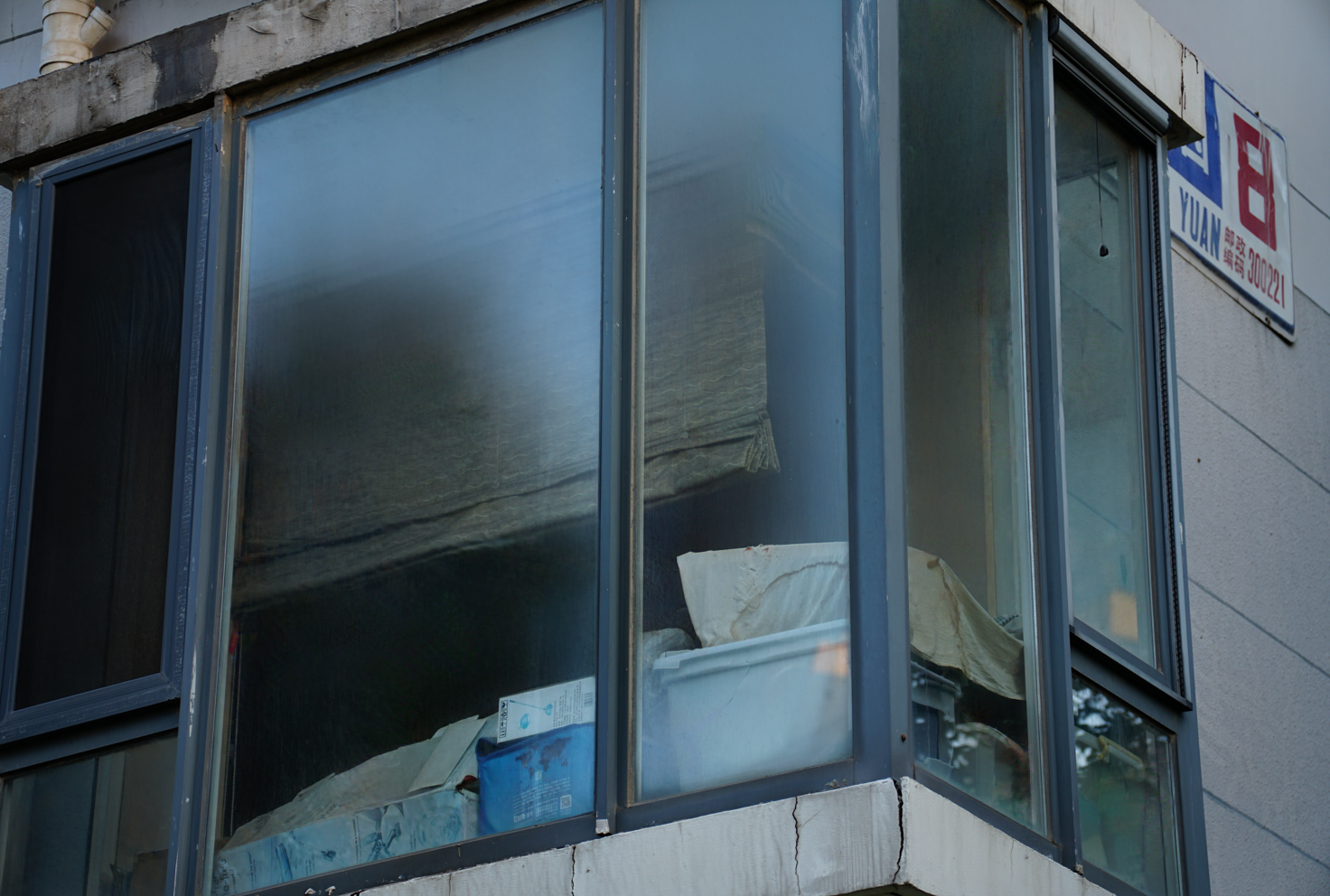}\vspace{2pt}
        \includegraphics[width=1\linewidth,height=65pt]{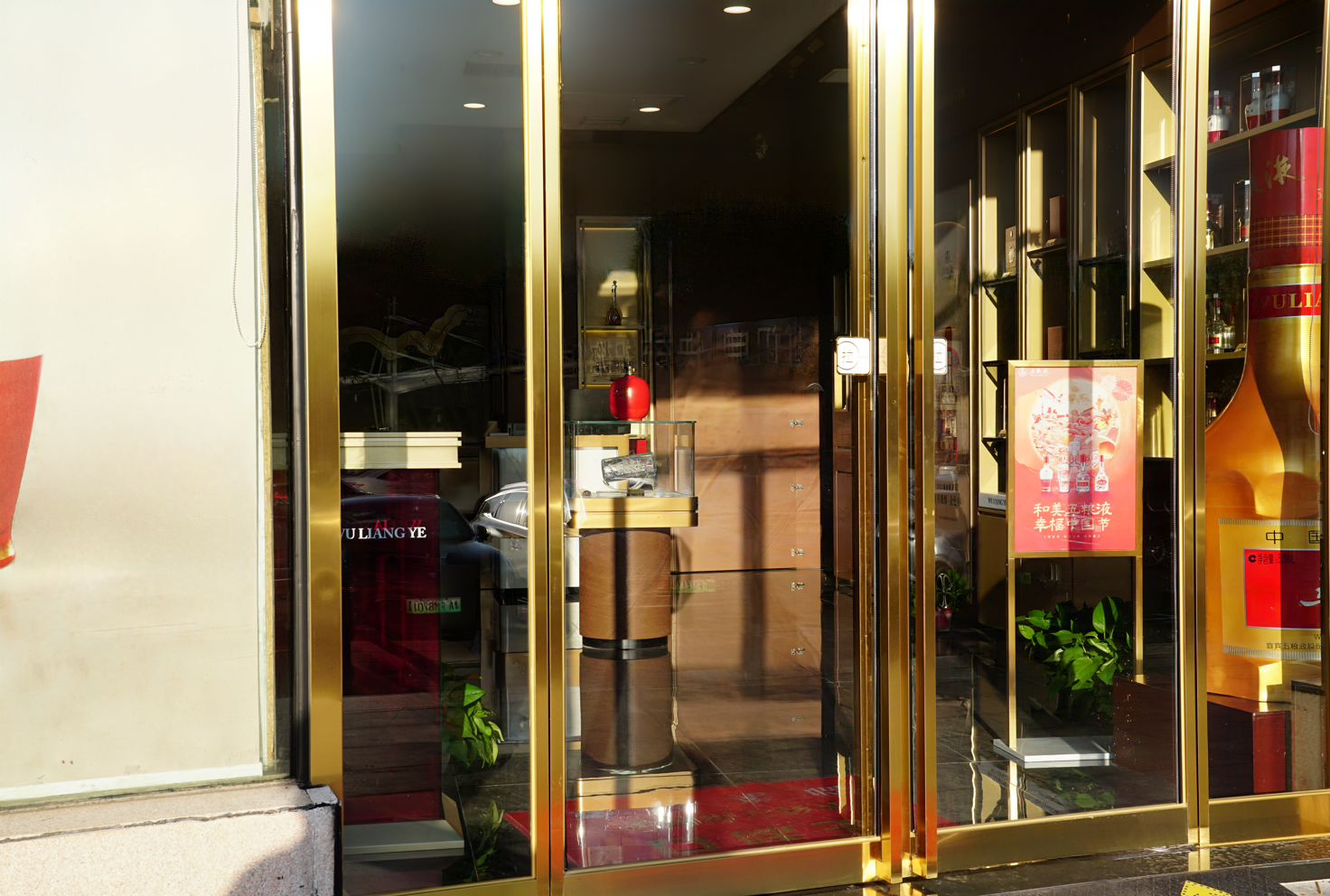}\vspace{2pt}
        \subcaption*{RDNet~\cite{cvpr/zhao2025reversible}}
    \end{subfigure}
    \begin{subfigure}{0.19\linewidth}
        \includegraphics[width=1\linewidth,height=65pt]{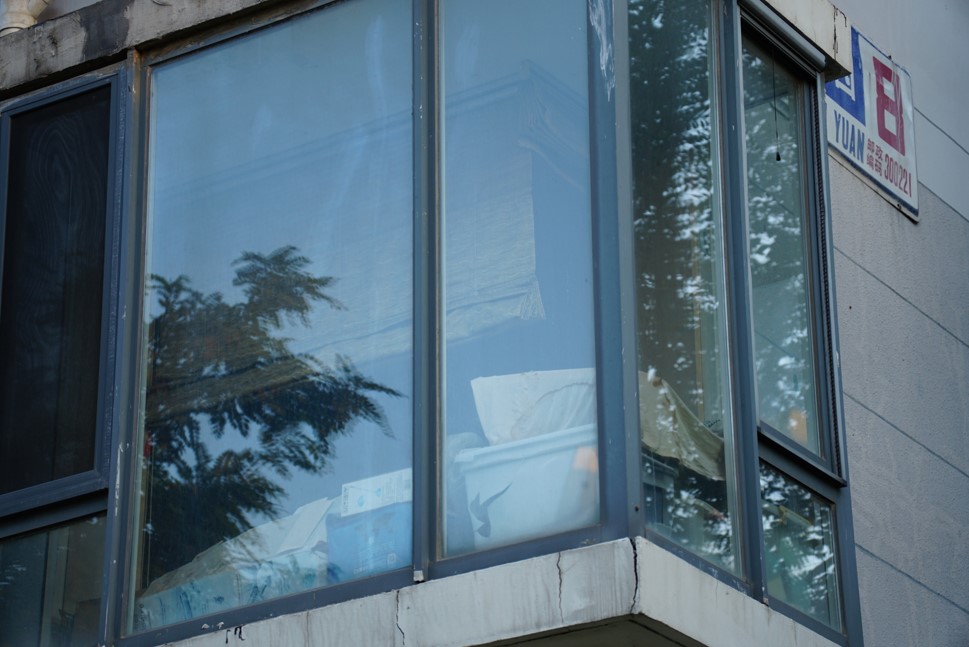}\vspace{2pt}
        \includegraphics[width=1\linewidth,height=65pt]{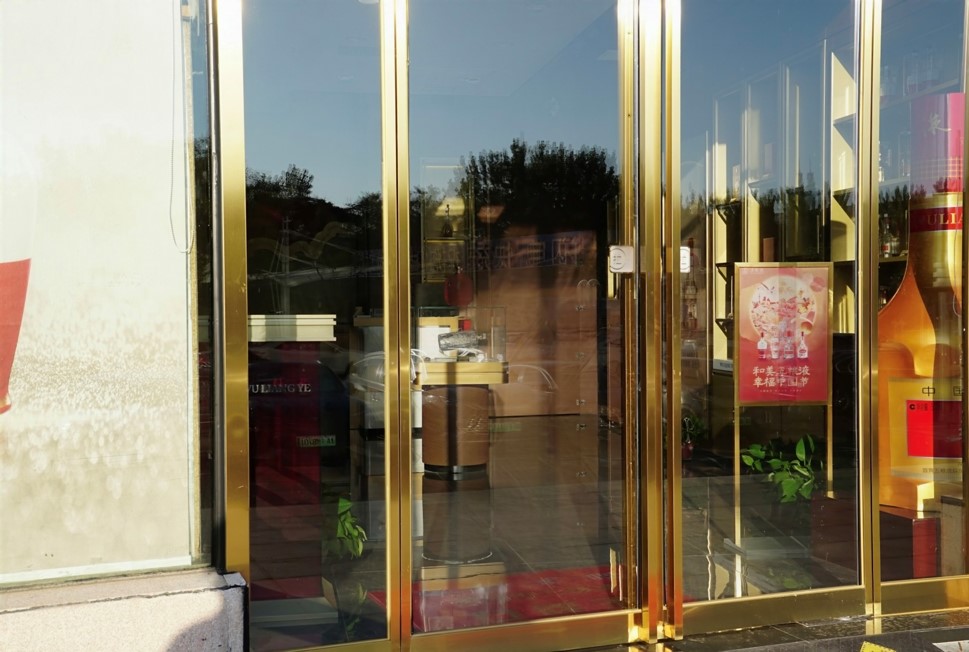}\vspace{2pt}
        \subcaption*{VIVO X300 Ultra}
    \end{subfigure}
        \begin{subfigure}{0.19\linewidth}
        \includegraphics[width=1\linewidth,height=65pt]{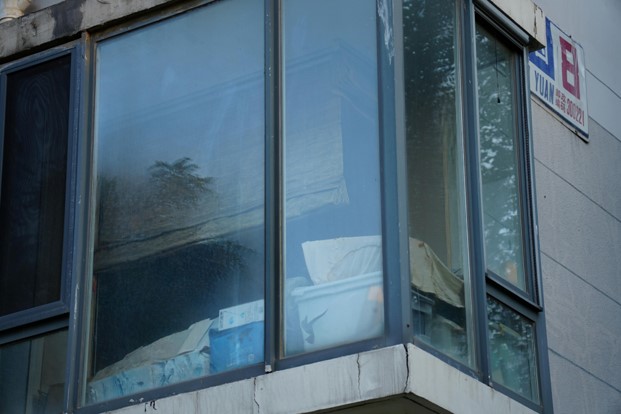}\vspace{2pt}
        \includegraphics[width=1\linewidth,height=65pt]{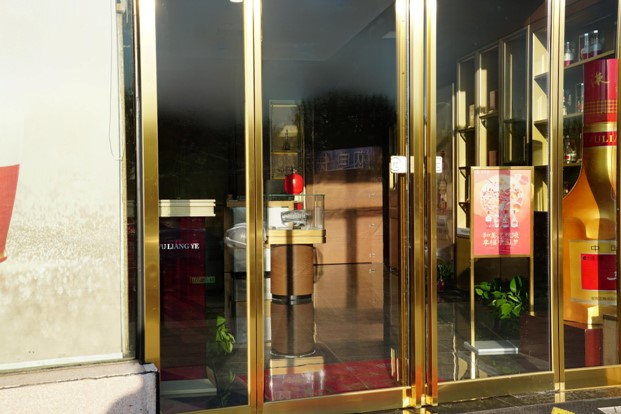}\vspace{2pt}
        \subcaption*{OPPO Find X9 Pro}
    \end{subfigure}
        \begin{subfigure}{0.19\linewidth}
        \includegraphics[width=1\linewidth,height=65pt]{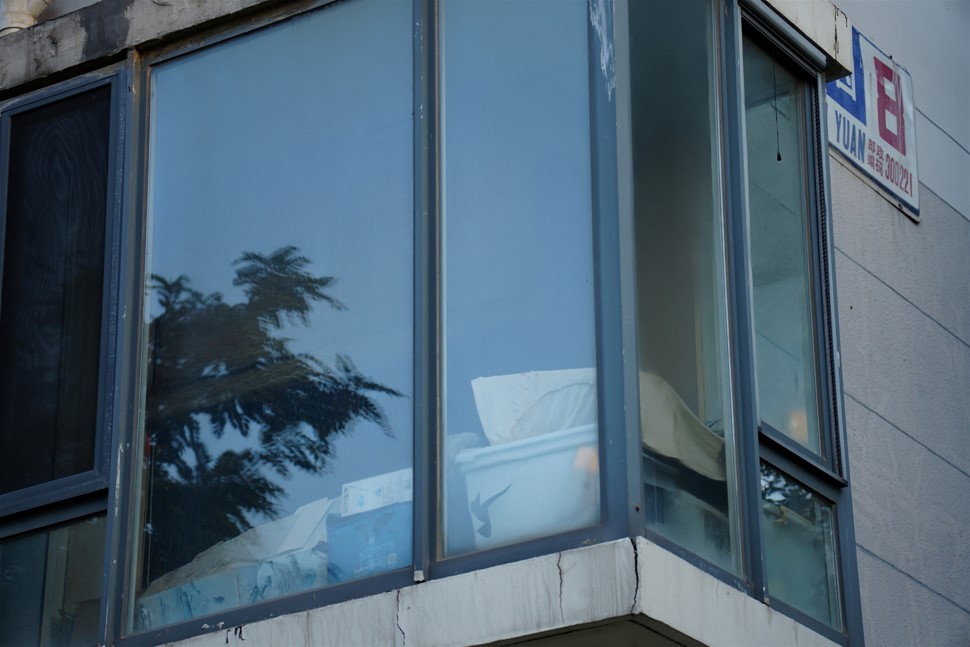}\vspace{2pt}
        \includegraphics[width=1\linewidth,height=65pt]{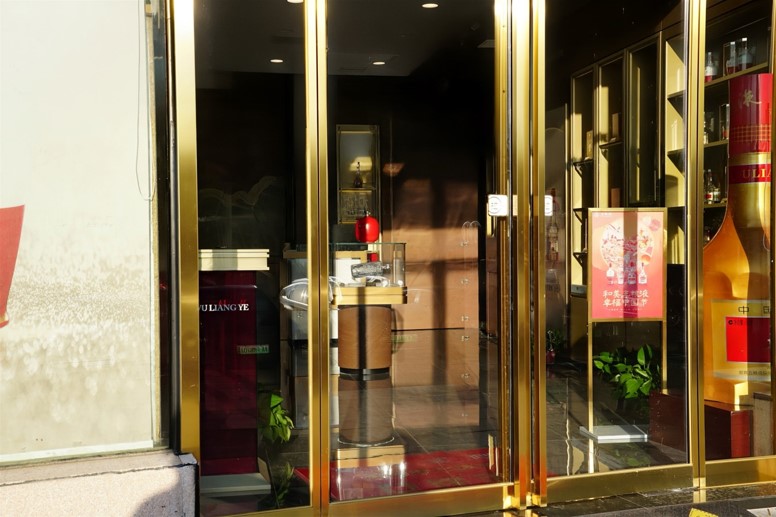}\vspace{2pt}
        \subcaption*{HUAWEI Mate 80 PM}
    \end{subfigure}
    \begin{subfigure}{0.19\linewidth}
        \includegraphics[width=1\linewidth,height=65pt]{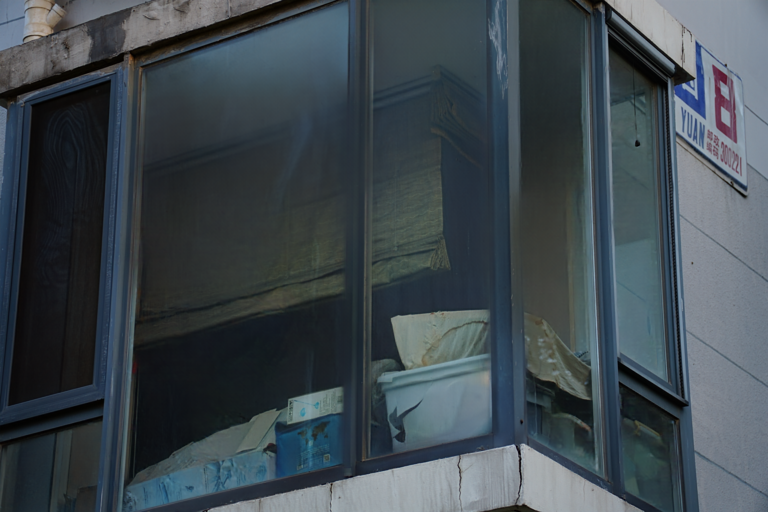}\vspace{2pt}
        \includegraphics[width=1\linewidth,height=65pt]{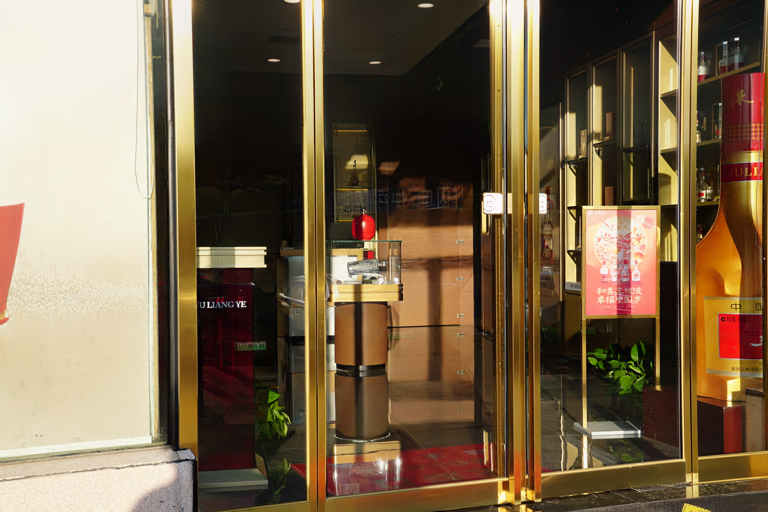}\vspace{2pt}
        \subcaption*{DIRS-PAIR (Ours)}
    \end{subfigure}
     }
     \caption{Visual comparison of transmission predictions in real-world scenarios. Notably, in addition to academic state-of-the-art models, we benchmark against the proprietary reflection removal algorithms built into the latest flagship smartphones.}
     \label{fig:real_world_extra}
\end{figure*}

\section{Experimental Validation}

This section evaluates our proposed DIRS framework. Section~\ref{sec:imp_details} provides the implementation details. Section~\ref{sec:performance} compares our best-performing variant, DIRS-PAIR, against state-of-the-art methods, followed by comprehensive ablation studies in Section~\ref{sec:ablation}. Extended applications, including \emph{reflection scene reconstruction} and \emph{polarized multi-image separation}, are provided in the subsequent sections to further demonstrate the generalization capability of our design.

\subsection{Implementation Details} 
\label{sec:imp_details}

\noindent\textbf{Datasets}. Our training datasets comprise both synthetic and real-world image pairs. For fair comparisons with existing state-of-the-art methods, we follow established protocols and employ two different data settings: 
\textit{Setting I:} In each epoch, we utilize 7,643 synthesized pairs randomly sampled from the PASCAL VOC dataset \cite{ijcv/EveringhamGWWZ10}, alongside 90 real-world pairs from Real20 \cite{cvpr/ZhangNC18a}.
\textit{Setting II:} We introduce 200 extra real-world pairs from the ``Nature'' dataset \cite{cvpr/LiY0LH20}, and scale up the synthetic data by sampling 13,700 pairs from the dataset of Zhang \emph{et al.} \cite{cvpr/ZhangNC18a} to establish a larger training pool.

\noindent\textbf{Data Synthesis and Completion}. 
To provide robust dual-stream supervision, we prepare both synthetic and real triplets. For synthetic data, we employ our Independent multiplicative model defined as $\mathbf{I}_{\textrm{syn}} = a\mathbf{T} + b\mathbf{R} + c(\mathbf{T}\circ\mathbf{R}) + z$ to generate physically plausible triplets. For real data, mainstream datasets provide only paired observations consisting of the composite image $\mathbf{I}_\textrm{real}$ and the transmission layer $\mathbf{T}$. Rather than using the flawed $\mathbf{I}_\textrm{real} - \mathbf{T}$ and forcing the network to absorb non-linear biases, we infer structurally decoupled pseudo reflection layers $\hat{\mathbf{R}}_\textrm{comp}$, as detailed in Sec.~\ref{sec:ref_comp}. Utilizing pseudo triplets consisting of $\mathbf{I}_\textrm{real}$, $\mathbf{T}$, and $\hat{\mathbf{R}}_\textrm{comp}$ provides explicit supervision for both branches, facilitating a more stable dual-stream optimization.

\noindent\textbf{Training Strategy}. The proposed DIRS models are implemented in PyTorch and optimized using the Adam optimizer on a single NVIDIA RTX 3090 GPU. The models are trained for 20 or 80 epochs depending on the data setting (Setting I or II, respectively). The learning rate is fixed at $10^{-4}$ with a batch size of 1 for all experiments. During training, patches of size $256 \times 256$ and $384 \times 384$ are randomly cropped from the input images for the CNN-based and Transformer-based DIRS variants, respectively.

\subsection{Performance Evaluation}
\label{sec:performance}

\noindent\textbf{Quantitative Comparison}. As reported in Tables~\ref{tab:qcomp} and \ref{tab:nature}, we conduct a comprehensive comparison between our DIRS-PAIR model and state-of-the-art methods across five real-world testing benchmarks: Real20 \cite{cvpr/ZhangNC18a}, Nature \cite{cvpr/LiY0LH20}, and three subsets of the $\textrm{SIR}^2$ dataset \cite{iccv/WanSDTK17}. Notably, our model trained under both data settings consistently demonstrates a compelling overall advantage across the evaluated data distributions. This holds true even when compared against recent diffusion-based models \cite{eccv/hong2024differ} and methods leveraging additional real-world data or extra language prompts \cite{cvpr/ZhongHWLS24}. This consistent superiority is attributed to the unified DIRS architecture, which enables dynamic information exchange to disentangle the coupled layers. More importantly, by employing Dual-Stream Joint Attention as an explicit similarity-based soft gate, the PAIR block prevents the indiscriminate aggregation of noisy cross-stream features and selectively captures long-range context, thereby safeguarding the information purity of the recovered background. These quantitative advantages are further corroborated by the subsequent visual comparisons, which demonstrate our effectiveness in eliminating large-area reflections.

\noindent\textbf{Qualitative Comparison}. As shown in Fig.~\ref{fig:visual_comp} on a challenging sample from the Real20 dataset, earlier methods such as BDN and ERRNet struggle to suppress strong reflections, while Zhang \emph{et al.} often introduces color distortion and artifacts. Although recent methods like RRW and RDNet improve the results, they still leave reflection residuals under strong coupling. By contrast, DIRS-PAIR combines LORS modeling with attention based decoupling to separate reflections of varying intensities and recover cleaner transmission layers with higher fidelity. Fig.~\ref{fig:visual_comp_r} further shows the reflection layer predictions on the $\text{SIR}^2$ dataset. Unlike previous methods such as IBCLN and Dong \emph{et al.}, which often produce noisy reflection predictions entangled with transmission remnants, DIRS-PAIR yields much more coherent reflection layers through LORS based residual routing and dual stream attention. To further validate the generalization capability in unconstrained environments, we provide visual comparisons on newly captured challenging real world scenarios in Fig.~\ref{fig:real_world_extra}. Notably, this evaluation not only benchmarks against academic state of the art models but also includes the proprietary algorithms built into the latest flagship smartphones (VIVO X300 Ultra, OPPO Find X9 Pro, and HUAWEI Mate 80 PM). Existing academic methods often struggle with severe nonlinear couplings, leaving obvious hazy residuals (\emph{e.g.}, ERRNet and IBCLN) or suffering from global color degradation (\emph{e.g.}, RRW). Regarding commercial algorithms, typical on-device pipelines (\emph{e.g.}, VIVO and OPPO) are highly optimized for general photography but lack the capacity to physically decouple strong spatial overlaps. The HUAWEI Mate 80 PM employs a more advanced cloud-deployed diffusion method, demonstrating powerful suppression in certain scenarios; however, it can occasionally lead to inconsistent generalization across varying lighting conditions, restricted by its transmission-only predictions and linear formulation. In contrast, our DIRS-PAIR demonstrates highly robust and consistent performance across these diverse challenging cases. Benefiting from explicit dual stream feature interaction and the mathematically grounded nonlinear formation prior, it thoroughly eliminates stubborn reflection remnants while faithfully preserving the structural integrity and original color fidelity of the background scene. Overall, these broad advantages support the effectiveness of our proposed unified paradigm. A detailed limitation analysis is provided in Sec.~\ref{sec:limitations}.

\subsection{Analysis of DIRS Variants}
\label{sec:variants_analysis}

To accommodate diverse deployment scenarios from edge devices to high-performance platforms, our proposed DIRS framework is highly configurable. As introduced, we instantiate it into three variants: the activation-based CNN DIRS-YTMT, the gate-based CNN DIRS-MuGI, and the attention-based Transformer DIRS-PAIR. Here, we comprehensively analyze these variants, evaluating both their qualitative visual differences and quantitative performance-complexity trade-offs which we detail below.

\begin{table*}[t]
\centering
\caption{Comparison of performance, complexity, and inference time. FLOPs and Inference Time are measured on a single NVIDIA RTX 3090 GPU with an input resolution of $256 \times 256$. The inference time is reported as the equivalent single-image latency measured at a batch size of 16. PSNR and SSIM are averaged over the Real20 and SIR$^2$ datasets. }
\label{tab:complexity}
\begin{tabular}{lcccccc}
\toprule
Method & Venue & Params (M) $\downarrow$ & FLOPs (G) $\downarrow$ & Time (ms) $\downarrow$ & PSNR $\uparrow$ & SSIM $\uparrow$ \\
\midrule
Zhang \emph{et al.} \cite{cvpr/ZhangNC18a}  & CVPR,18 & 0.39 & 50.27 & 8.88 & 20.08 & 0.835 \\
ERRNet \cite{cvpr/WeiYFW019}                & CVPR, 19 & 18.95 & 413.01 & 34.39 & 23.53 & 0.879 \\
IBCLN \cite{cvpr/LiY0LH20}                  & CVPR, 20 & 21.61 & 262.03 & 27.66 & 24.10 & 0.879 \\
Dong \emph{et al.} \cite{iccv/Dong00BXL21}  & ICCV, 21 & 10.93 & 295.53 & 37.86 & 24.21 & 0.897 \\
RobustSIRR \cite{cvpr/song2023robust}       & CVPR, 23 & 17.71 & 29.82 & 11.18 & 22.59 & 0.889 \\
RDNet \cite{cvpr/zhao2025reversible}        & CVPR, 25 & 264.96 & 243.83 & 58.07 & 25.95 & 0.909 \\
DExNet \cite{pami/huang2025lightweight}     & TPAMI, 25 & 9.66 & 240.88 & 103.52 & 25.96 & 0.912 \\
\midrule
DIRS-YTMT (Ours) & - & 32.42 & 102.91 & 31.35 & 24.94 & 0.902 \\
DIRS-MuGI (Ours) & - & 84.47 & 153.98 & 49.95 & 25.63 & 0.913 \\
DIRS-PAIR (Ours) & - & 48.80 & 200.22 & 75.36 & {26.37} & {0.918} \\
\bottomrule
\end{tabular}
\end{table*}

\begin{figure*}[t]
    \newcommand{\imgH}{4.8cm} 
    
    \begin{subfigure}[t]{0.40\linewidth}
        \centering
        \includegraphics[width=\linewidth, height=\imgH]{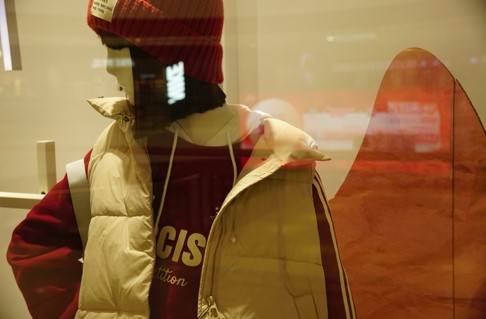}
        \\[3pt]
        \includegraphics[width=\linewidth, height=\imgH]{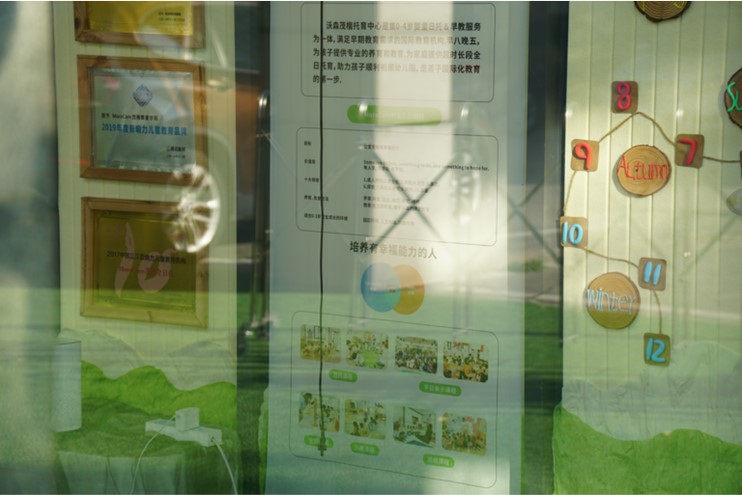}
        \caption{Input $(\textbf{I})$}
    \end{subfigure}
    \hfill
    \begin{subfigure}[t]{0.193\linewidth}
        \centering
        \includegraphics[width=\linewidth, height=\imgH]{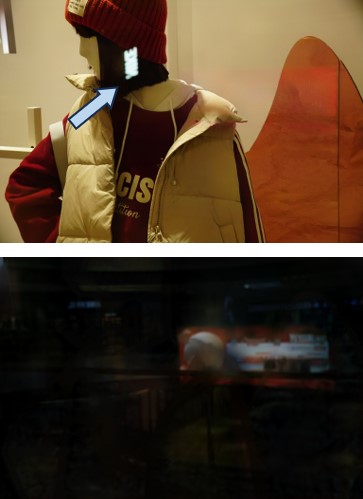}
        \\[3pt]
        \includegraphics[width=\linewidth, height=\imgH]{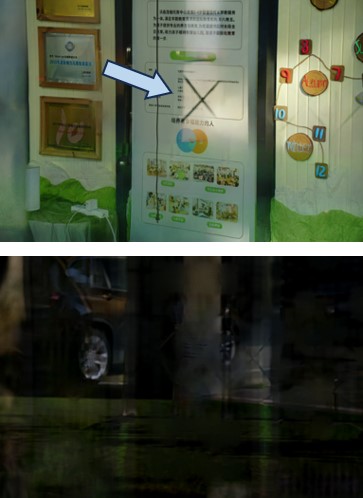}
        \caption{DIRS-YTMT $(\hat{\mathbf{T}}, \hat{\mathbf{R}})$}
    \end{subfigure}
    \hfill
    \begin{subfigure}[t]{0.193\linewidth}
        \centering
        \includegraphics[width=\linewidth, height=\imgH]{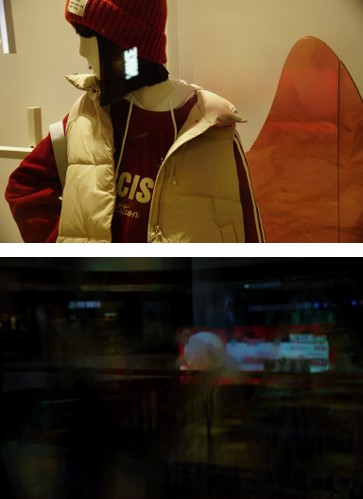}
        \\[3pt]
        \includegraphics[width=\linewidth, height=\imgH]{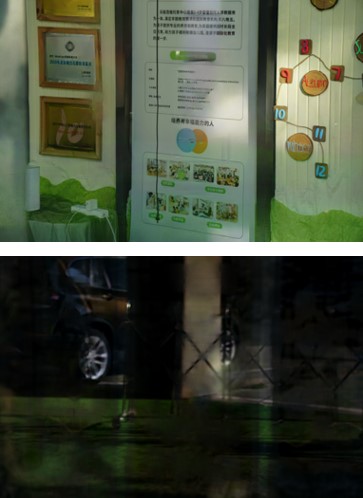}
        \caption{DIRS-MuGI $(\hat{\mathbf{T}}, \hat{\mathbf{R}})$}
    \end{subfigure}
    \hfill
    \begin{subfigure}[t]{0.193\linewidth}
        \centering
        \includegraphics[width=\linewidth, height=\imgH]{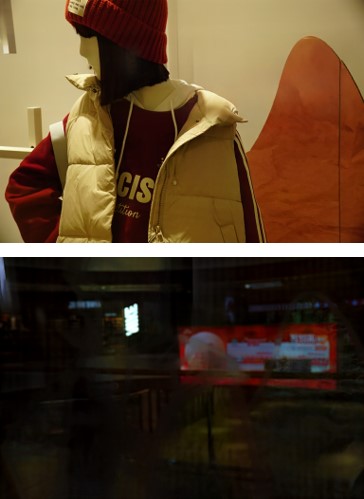}
        \\[3pt]
        \includegraphics[width=\linewidth, height=\imgH]{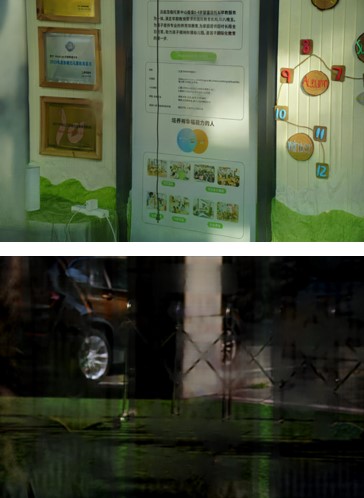}
        \caption{DIRS-PAIR $(\hat{\mathbf{T}}, \hat{\mathbf{R}})$}
    \end{subfigure}
  \caption{Visual comparison of layer predictions among our three variants of the DIRS architecture. }
  \label{fig:ourown}
  \vspace{-10pt}
\end{figure*}

\subsubsection{Visual Comparison of DIRS Variants}
Figure~\ref{fig:ourown} visually compares the layer separation results of the three variants on challenging real-world scenes. As highlighted by the blue arrows in Fig.~\ref{fig:ourown}~(b), the lightweight DIRS-YTMT removes most reflections but leaves subtle ghosting in severely coupled regions, as its dichotomous activation lacks dynamic modulation for extreme overlaps. DIRS-MuGI (Fig.~\ref{fig:ourown}~(c)) mitigates these artifacts via mutual gating, allowing active cross-stream semantic suppression to handle spatially varying nonlinearities effectively. Ultimately, the Transformer-based DIRS-PAIR (Fig.~\ref{fig:ourown}~(d)) achieves the cleanest separation. Leveraging Dual-Stream Joint-Attention (DS-JA), it explicitly assesses cross-stream similarities with a nonlocal receptive field, thoroughly eliminating stubborn remnants while perfectly preserving the structural fidelity of the transmission layer.

\subsubsection{Performance vs. Complexity Trade-offs}
To evaluate the practicability of our framework, we conduct a holistic comparison of quantitative performance (PSNR, SSIM) and computational complexity (Params, FLOPs, Inference Time). The results, benchmarked against recent state-of-the-art methods, are summarized in Table~\ref{tab:complexity}. As reported, DIRS-YTMT serves as a highly efficient baseline. It maintains a low computational footprint (102.91 GFLOPs) and fast inference speed (31.35 ms latency), yet still outperforms several earlier heavy models (\emph{e.g.}, ERRNet and IBCLN) in both quality and efficiency, making it well-suited for real-time or resource-constrained applications. DIRS-MuGI offers a balanced compromise, achieving competitive performance (25.63 dB) with a moderate computational budget. DIRS-PAIR pushes the absolute upper bound of separation quality (26.37 dB), establishing a new state-of-the-art. While it requires the most computational resources among our variants due to its Transformer backbone and non-local attention operations, it is worth noting that DIRS-PAIR remains significantly more computationally efficient (200.22 GFLOPs) than many recent top-performing networks (\emph{e.g.}, RDNet at 243.83 GFLOPs and DExNet at 240.88 GFLOPs). Overall, these variants demonstrate the inherent efficiency and adaptability of our proposed methods. By offering flexible configurations tailored to specific deployment environments, the DIRS paradigm serves as a scalable framework that effectively facilitates dual-stream feature interactions across diverse complexity levels.

\subsection{Ablation Study}
\label{sec:ablation}
As shown in Table~\ref{tab:ablation}, we conduct a comprehensive ablation study across the three DIRS variants. The experiments investigate the overall architecture, block designs, physical modeling constraints, and data synthesis strategies to validate our proposed components and architectural choices.

\vspace{1mm}\noindent\textbf{Ablation Study on DIRS-YTMT.} We first evaluate the activation-based DIRS-YTMT. Reducing it to a single-stream architecture causes a noticeable performance drop, confirming that a dedicated dual-branch structure is essential to constrain the reflection layer and regularize the separation process. We then ablate the DSI Block (DSIB) via two variants: (1) \textit{w/o DSI}, which retains negative-domain features locally instead of exchanging them, and (2) \textit{w/o PAF}, which replaces the paired activation strategy with standard ReLUs. Both suffer performance reductions, verifying that our YTMT strategy's dichotomous feature separation maximizes utilization efficiency and suits dual-stream complementarity. Furthermore, removing the LORS constraints (\textit{w/o LORS}) causes a severe drop (from 24.94 to 24.16 dB). Without LORS bypassing high-order residuals and biases, the intermediate convolutions are forced to absorb these non-linear semantics, disrupting feature separation and degrading fidelity. Finally, replacing our multiplicative model with a simplistic \textit{Linear Model} for data synthesis yields the lowest performance (24.09 dB). This corroborates that explicit physical non-linearities are indispensable for bridging the synthetic-to-real domain gap during model training.

\begin{table}[t]
\caption{Ablation on design factors across three DIRS variants. All metrics are averaged over Real20 and SIR$^2$.}
\label{tab:ablation}
\centering
\begin{tabular}{c|clcc}
 \toprule[1pt] 
 \bigstrut[b] Model & Designs & Variants & PSNR  & SSIM \\
 \hline
 \bigstrut \multirow{6}{*}{DIRS-YTMT} & Arch. & Single Stream & 24.76 & 0.897 \\
 \cline{2-5}
 \bigstrut[t] & \multirow{2}{*}{DSIB} & w/o DSI & 24.73 & 0.899 \\
 \bigstrut & & w/o PAF & 24.82 & 0.899  \\
 \cline{2-5}
 \bigstrut & \multirow{1}{*}{LORS} & w/o LORS & 24.16 & 0.894 \\
 \cline{2-5}
 \bigstrut[t] & \multirow{1}{*}{Data} & Linear Model & 24.09 & 0.893 \\
 \cline{2-5}
 \bigstrut & \multicolumn{2}{c}{Full Setting} &  \textbf{24.94} &  \textbf{0.902} \\ 
 \hline
 \multirow{9}{*}{DIRS-MuGI} 
 \bigstrut & \multirow{1}{*}{Arch.} & Single Stream & 24.79 & 0.899 \\
 \cline{2-5}
 \bigstrut & \multirow{1}{*}{DSIB} & w/o DSI & 24.99 & 0.903 \\
 \cline{2-5}
 \bigstrut & \multirow{4}{*}{LORS} & w/o LORS & 24.78 & 0.899 \\
 & & w/o LZO ($\boldsymbol{\Psi}$) & 25.17 & 0.902 \\
 \bigstrut & & w/o LNR ($\boldsymbol{\Phi}$) & 25.39 & 0.905 \\
 \cline{2-5}
 \bigstrut[t] & \multirow{1}{*}{Data} & Linear Model & 25.06 & 0.905 \\
 \cline{2-5}
 \bigstrut & \multicolumn{2}{c}{Full Setting} &  \textbf{25.63} &  \textbf{0.913} \\
 \hline
 \multirow{9}{*}{DIRS-PAIR} 
 \bigstrut & \multirow{1}{*}{Arch.} & Single Stream & 25.67 & 0.909 \\
 \cline{2-5}
 \bigstrut & \multirow{4}{*}{DSIB} & Standard FFN & 25.31 & 0.905  \\
 & & w/o DS-JA & 25.83 & 0.915 \\
 \bigstrut & & w/o DS-SA & 25.05 & 0.916\\
 \cline{2-5}
 \bigstrut & \multirow{1}{*}{LORS} & w/o LORS & 24.78 & 0.899 \\
 \cline{2-5}
 \bigstrut[t] & \multirow{1}{*}{Data} & Linear Model & 25.21 & 0.903 \\
 \cline{2-5}
 \bigstrut[t] & \multicolumn{2}{c}{Full Setting} &  \textbf{26.37} &  \textbf{0.918} \\
\bottomrule[1pt]
\end{tabular}
\end{table}

\vspace{1mm}\noindent\textbf{Ablation Study on DIRS-MuGI.} We next evaluate the gate-based DIRS-MuGI. Reverting to a single-stream architecture causes a pronounced performance drop. Similarly, the dual-branch configuration without explicit interaction (\textit{w/o DSI}) severely underperforms. This verifies that isolated processing cannot handle the spatially non-uniform coupling of real-world reflections. Instead, the network relies on MuGI's multiplicative modeling to modulate the features and capture the mutually inhibitive nature of the transmission and reflection streams. Additionally, as analyzed in the YTMT variant, removing the entire LORS model or employing the \textit{Linear Model} for data synthesis severely degrades performance. Separately ablating its sub-components, specifically the non-linear residual (\textit{w/o LNR, $\boldsymbol{\Phi}$}) and zero-order offset (\textit{w/o LZO, $\boldsymbol{\Psi}$}), causes measurable declines, empirically proving that both high-order physical interactions and ambient biases are non-negligible in sRGB superposition.

\vspace{1mm}\noindent\textbf{Ablation Study on DIRS-PAIR.} This part evaluates DIRS-PAIR. Replacing the modulated feed-forward network with a Standard FFN~\cite{corr/BahdanauCB14} causes a performance drop, further confirming the importance of explicit multiplicative modeling. We ablate the two core components of the PAIR module: the Dual-Stream Joint Attention (\textit{w/o DS-JA}) and the Dual-Stream Self-Attention (\textit{w/o DS-SA}). Removing either component leads to decreased metrics. DS-JA functions as a soft gate during the selective inter-stream information exchange, whereas DS-SA maintains intra-stream structural coherence, confirming that explicit cross-stream modulation and spatial self-attention act effectively together. Moreover, consistent with previous variants, removing the LORS constraints (\textit{w/o LORS}) degrades performance to 24.78 dB, and employing the \textit{Linear Model} for data synthesis lowers it to 25.21 dB. The consistent performance drops observed across all three DIRS variants under these settings summarize a core finding of our framework: explicitly modeling physical non-linearities, both internally through the LORS architecture and externally via the data synthesis pipeline, is essential for robust real-world reflection separation.

\section{Extended Experiments}
\label{sec:extend_exps}
While the preceding evaluations establish the efficacy of DIRS-PAIR in standard single-image reflection separation, the inherent flexibility of our unified dual-stream paradigm allows for broader physical applications. In this section, we extend our framework to two advanced scenarios. First, in \textit{Reflection Scene Reconstruction}, we leverage the explicitly separated reflection components and our non-linear synthesis model to recover the severely degraded reflection scene. Second, we adapt our architecture to \textit{Polarized Image Reflection Separation}, integrating multi-view polarization cues to resolve intrinsic ill-posedness and severe overexposure.

\begin{figure*}[t]
    \centering{
    \begin{subfigure}[t]{0.195\linewidth}
        \includegraphics[width=1\linewidth,height=65pt]{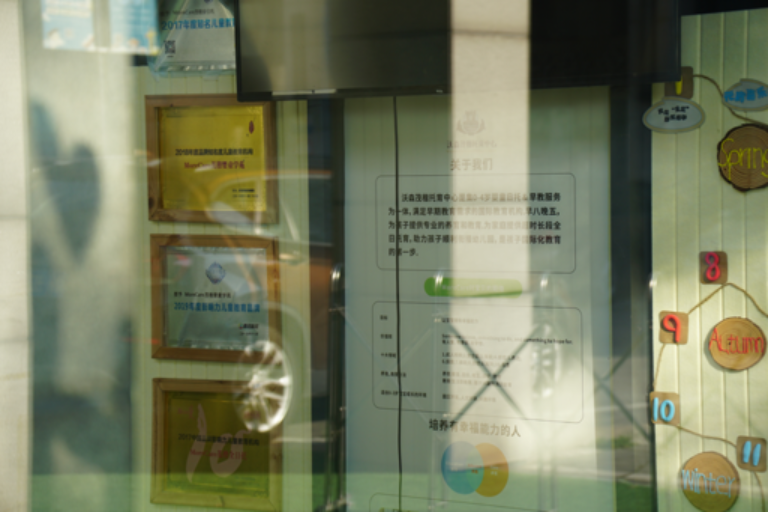}\vspace{2pt}
        \includegraphics[width=1\linewidth,height=65pt]{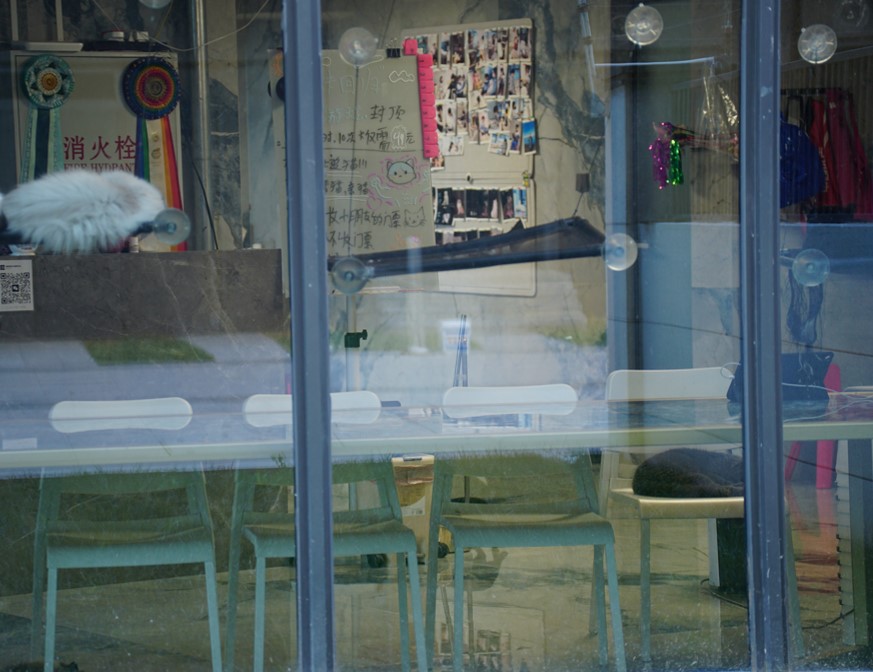}\vspace{2pt}
        \includegraphics[width=1\linewidth,height=65pt]{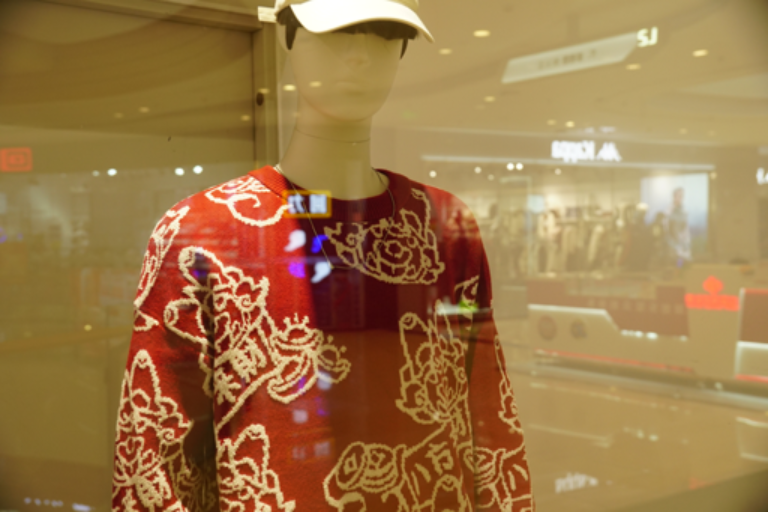}
        \subcaption*{Input}
    \end{subfigure}
    \begin{subfigure}[t]{0.195\linewidth}
        \includegraphics[width=1\linewidth,height=65pt]{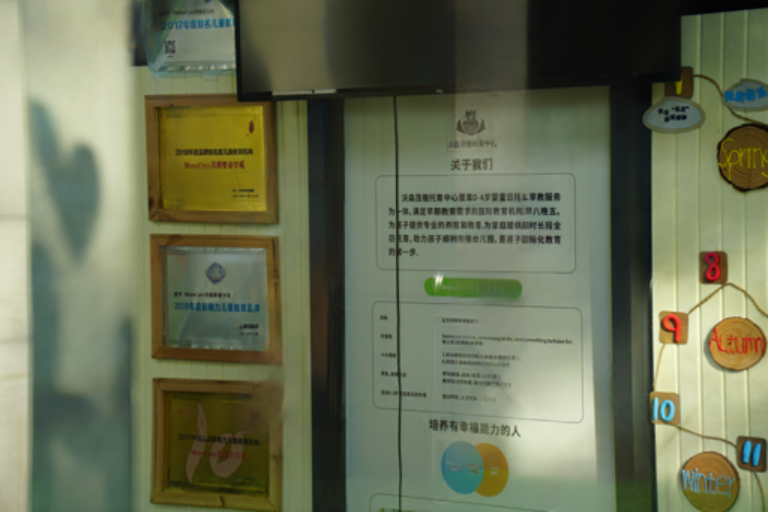}\vspace{2pt}
        \includegraphics[width=1\linewidth,height=65pt]{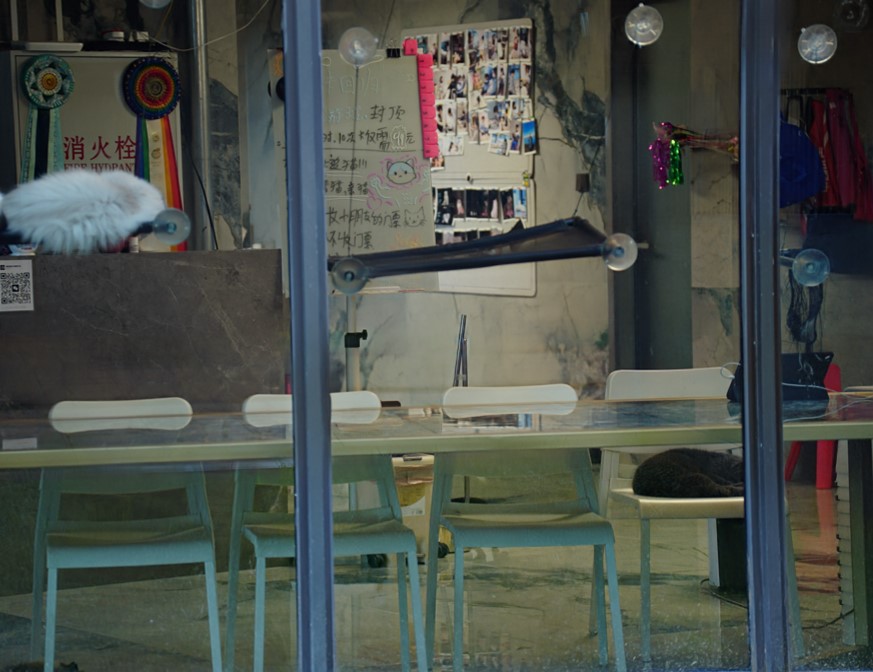}\vspace{2pt}
        \includegraphics[width=1\linewidth,height=65pt]{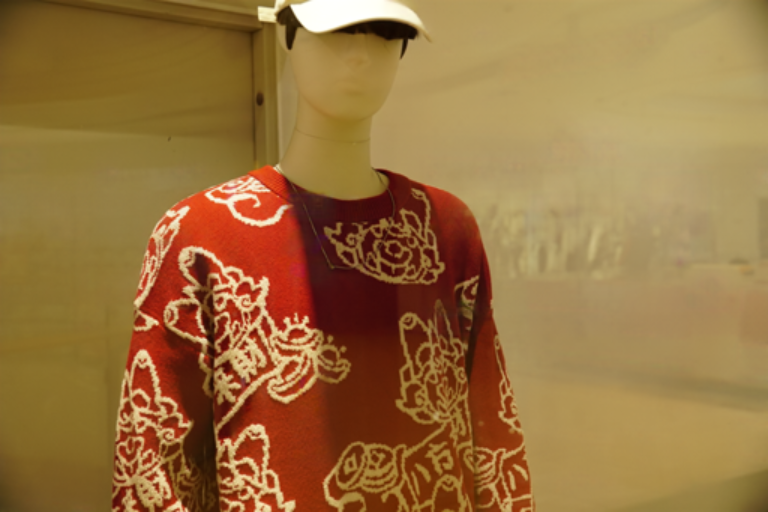}
        \subcaption*{Transmission}
    \end{subfigure}
    \begin{subfigure}[t]{0.195\linewidth}
        \includegraphics[width=1\linewidth,height=65pt]{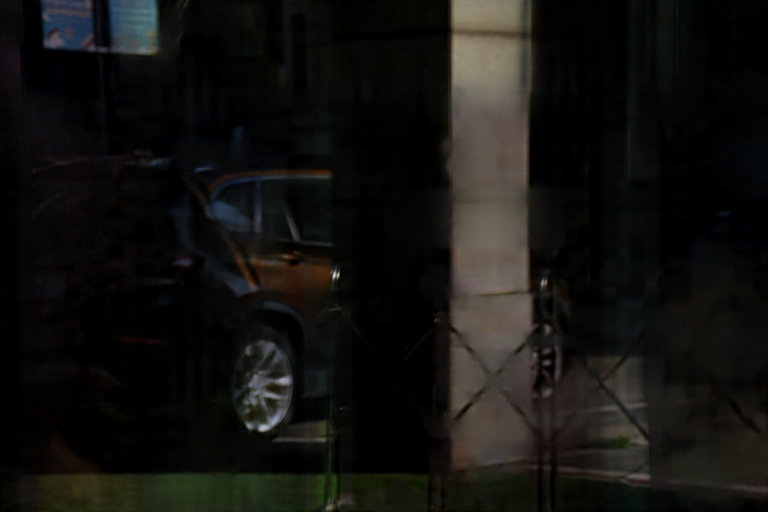}\vspace{2pt}
        \includegraphics[width=1\linewidth,height=65pt]{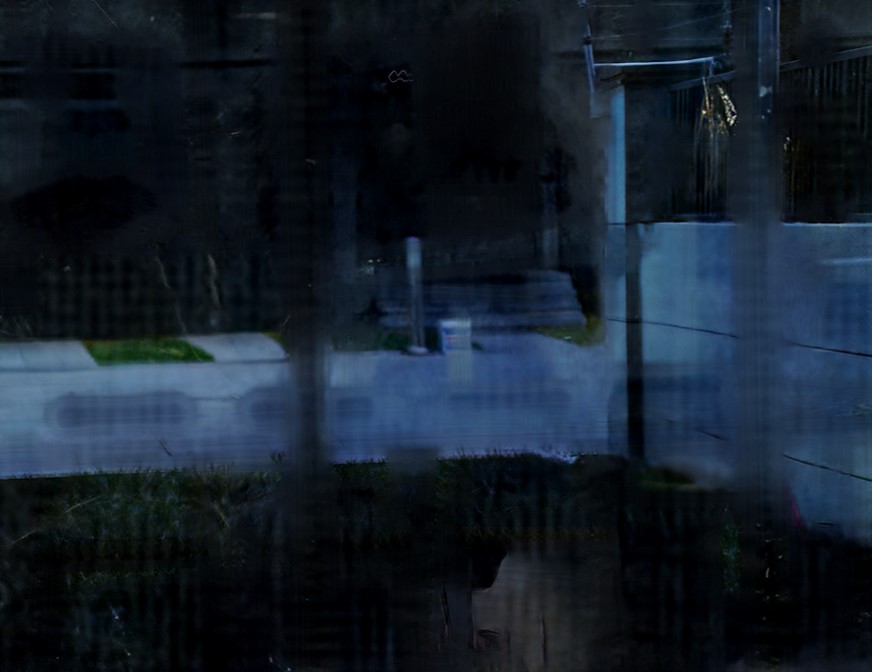}\vspace{2pt}
        \includegraphics[width=1\linewidth,height=65pt]{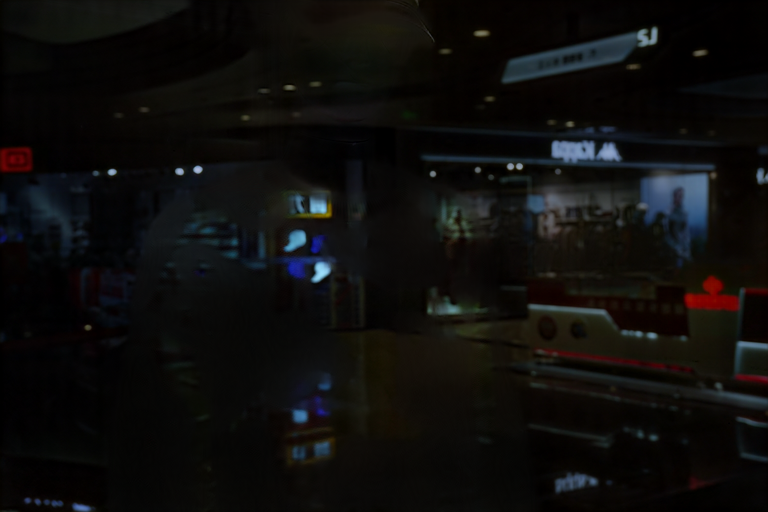}
        \subcaption*{Reflection}
    \end{subfigure}
    \begin{subfigure}[t]{0.195\linewidth}
        \includegraphics[width=1\linewidth,height=65pt]{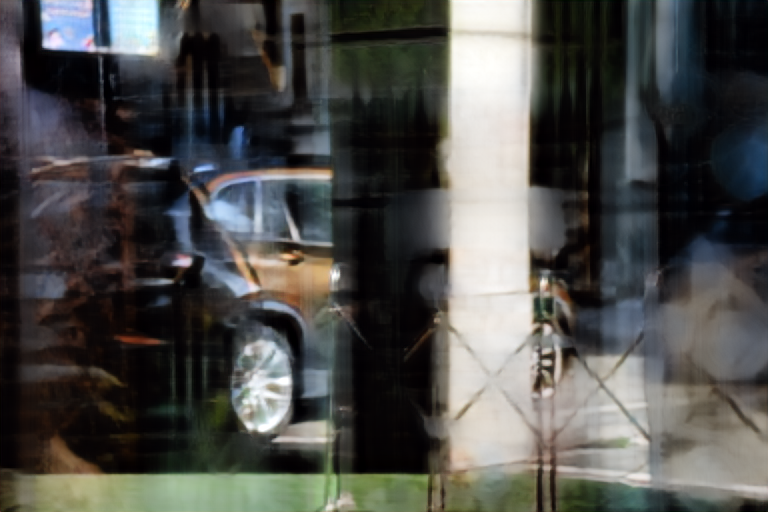}\vspace{2pt}
        \includegraphics[width=1\linewidth,height=65pt]{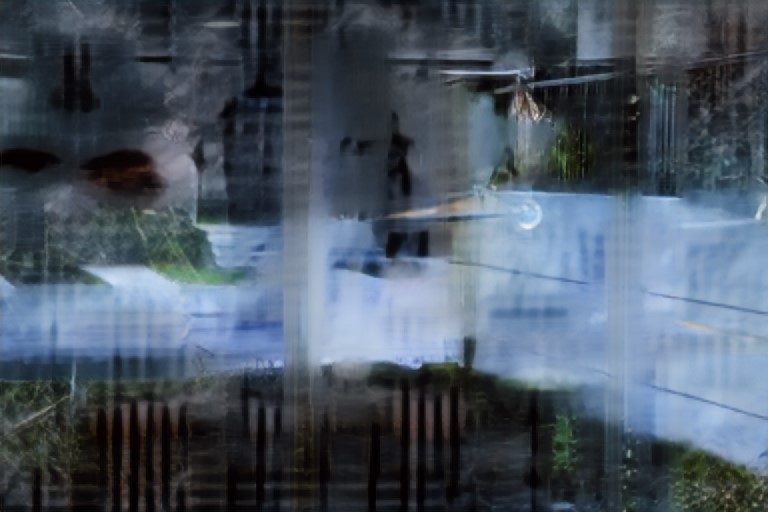}\vspace{2pt}
        \includegraphics[width=1\linewidth,height=65pt]{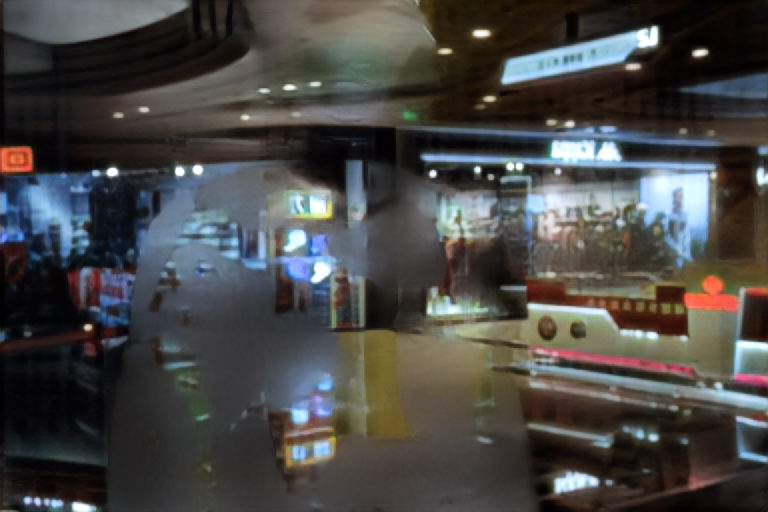}
        \subcaption*{Enhancement Baseline}
    \end{subfigure}
    \begin{subfigure}[t]{0.195\linewidth}
        \includegraphics[width=1\linewidth,height=65pt]{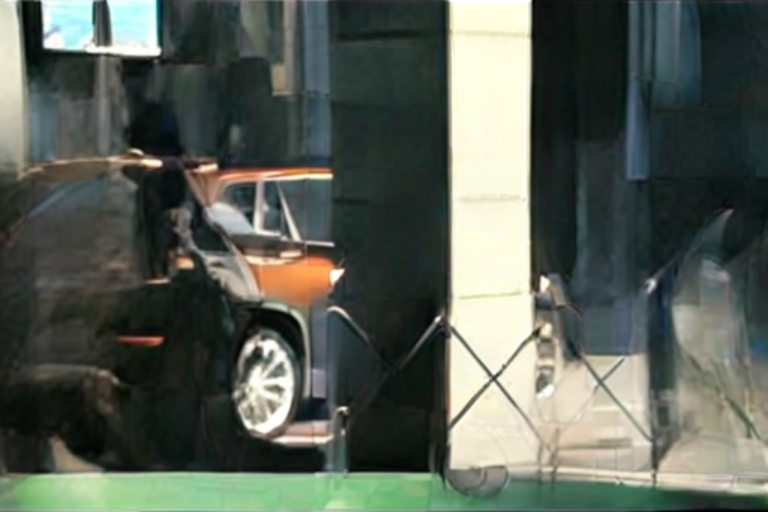}\vspace{2pt}
        \includegraphics[width=1\linewidth,height=65pt]{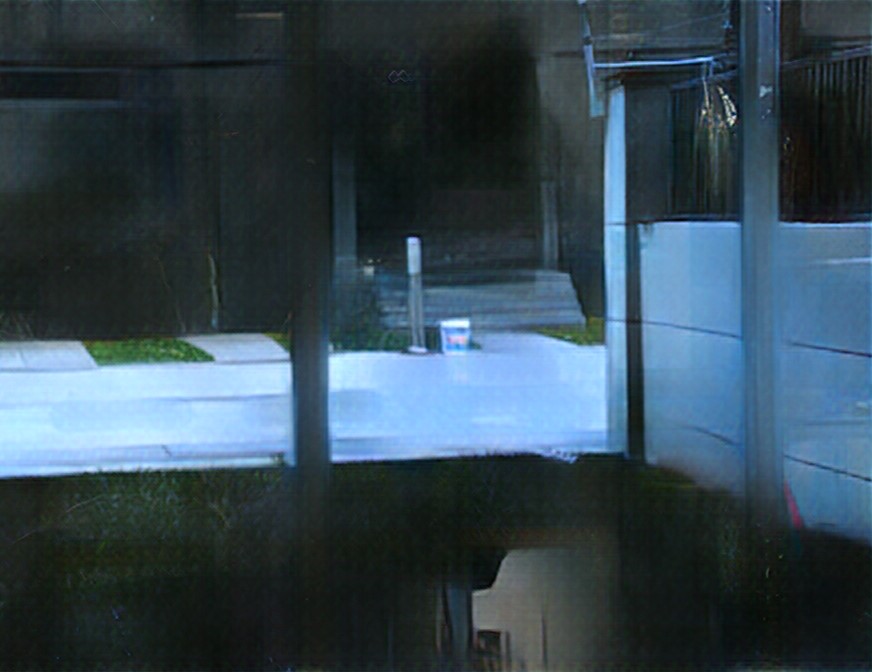}\vspace{2pt}
        \includegraphics[width=1\linewidth,height=65pt]{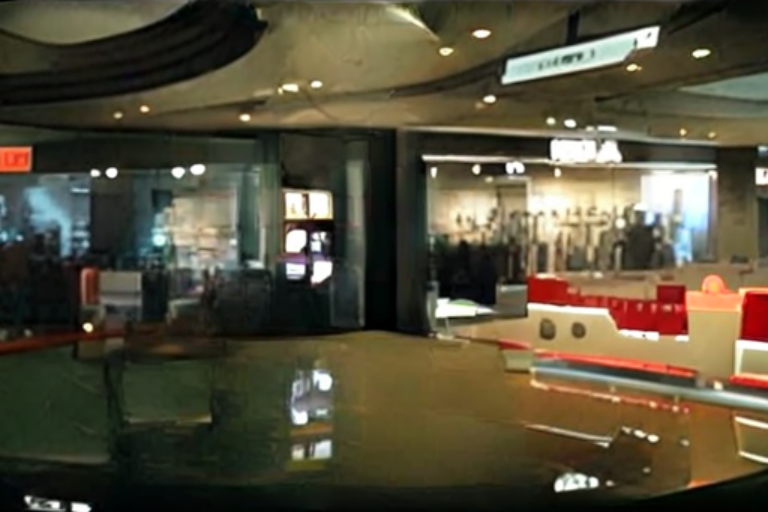}
        \subcaption*{Ours ($\hat{\mathbf{R}}_\text{scene}$)}
    \end{subfigure}
     }
     \caption{Reflection separation (Transmission and Reflection) and scene reconstruction (Reflection Scene) on real-world samples (Input). Reconstruction enhances reflection visibility and restores content lost during decomposition.}
     \label{fig:reflection_scene}
\end{figure*}

\subsection{Reflection Scene Reconstruction}
Following the transparent-surface optical path discussed in the main paper, the captured reflection $\mathbf{R}$ is essentially a degraded version of the original reflection scene $\mathbf{R}_\text{scene}$ due to the low reflectance and scattering of the glass. Defining this degradation process as $\mathbf{R} = \mathcal{F}_{\text{deg}}(\mathbf{R}_\text{scene})$, we can substitute it into our unified non-linear synthesis model formulated earlier, yielding:
\begin{equation}
    \mathbf{I}_{\text{syn}} = a\mathbf{T} + b \mathcal{F}_{\text{deg}}(\mathbf{R}_\text{scene}) + c(\mathbf{T} \circ \mathcal{F}_{\text{deg}}(\mathbf{R}_\text{scene})) + z.
    \label{eq:scene_syn}
\end{equation}

\begin{table}[t]
  \centering
  \caption{Quantitative comparison of Reflection Scene Reconstruction using advanced no-reference image quality assessment metrics on real-world samples. $\uparrow$ indicates higher is better. The best results are in \textbf{bold}.}
  \renewcommand{\arraystretch}{1.2}
    \begin{tabular}{lccc}
    \toprule
    Method & MUSIQ $\uparrow$ & TOPIQ-NR $\uparrow$ & CLIP-IQA $\uparrow$ \\
    \midrule
     Original $\hat{\mathbf{R}}$ & 25.14 & 0.188 & 0.186 \\
     Baseline $\mathcal{F}_{\textrm{enh}}(\hat{\mathbf{R}})$& 26.75 & 0.186 & 0.148  \\
    Ours $\hat{\mathbf{R}}_\text{scene}$ & \textbf{28.55} &  \textbf{0.204} & \textbf{0.289}  \\
    \bottomrule
    \end{tabular}
  \label{tab:nr_iqa}
\end{table}

Training a network to invert this complex physical process requires complete $(\mathbf{T}, \mathbf{R}_\text{scene}, \mathbf{I})$ triplets. However, real-world datasets typically only provide $(\mathbf{R}_\text{scene}, \mathbf{R})$ pairs~\cite{cvpr/WanSLDK20} or $(\mathbf{T}, \mathbf{I})$ pairs~\cite{cvpr/ZhangNC18a,cvpr/LiY0LH20,cvpr/zhu2024revisiting}, necessitating the use of synthetic data to close the training loop. One might attempt to learn a direct mapping $\mathcal{F}_{\text{enh}}: \mathbf{R} \to \mathbf{R}_\text{scene}$ for reflection enhancement. Yet, the ground-truth $\mathbf{R}$ in existing datasets is captured by placing a black cloth behind the glass, which avoids the structural truncation caused by ISP saturation in highly coupled scenarios. Consequently, an $\mathcal{F}_{\text{enh}}$ trained on such data functions more akin to a low-light enhancer, exhibiting limited capability for structural completion.

To achieve physically interpretable scene reconstruction, we repurpose the module $\mathcal{F}_C$ from Sec.~\ref{sec:ref_comp} to formulate as:
\begin{equation}
    \hat{\mathbf{R}}_\text{scene} = \hat{\mathbf{R}} + \mathcal{F}_C(\mathbf{I}, \hat{\mathbf{T}}).
\end{equation}
We constrain this process using a joint objective:
\begin{equation}
    \mathcal{L} = \|\hat{\mathbf{R}}_\text{scene} - \mathbf{R}_\text{scene}\| + \|\hat{\mathbf{I}}_{\text{syn}} - \mathbf{I}_{\text{syn}} \|,
\end{equation}
where $\hat{\mathbf{I}}_{\text{syn}} = a\mathbf{T} + b \mathcal{F}_{\text{deg}}(\hat{\mathbf{R}}_\text{scene}) + c(\mathbf{T} \circ \mathcal{F}_{\text{deg}}(\hat{\mathbf{R}}_\text{scene})) + z$. The mapping $\mathcal{F}_{\text{deg}}$ is pre-trained on the $(\mathbf{R}_\text{scene}, \mathbf{R})$ pairs~\cite{cvpr/WanSLDK20} and kept frozen during the training of $\mathcal{F}_C$. By shifting the mapping from a naive $\mathbf{R} \rightarrow \mathbf{R}_\text{scene}$ to a conditional $(\hat{\mathbf{R}}, \mathbf{I}, \hat{\mathbf{T}}) \rightarrow \hat{\mathbf{R}}_\text{scene}$, the model effectively learns to repair the structural damage inherent in the initial reflection prediction. As shown in Fig.~\ref{fig:reflection_scene}, our method achieves robust scene reconstruction, successfully recovering both visibility and lost structural content.
To quantitatively validate this, we report modern no-reference image quality assessment metrics in Tab.~\ref{tab:nr_iqa}, including MUSIQ~\cite{iccv/ke2021musiq}, TOPIQ-NR~\cite{tip/chen2024topiq}, and CLIP-IQA~\cite{aaai/wang2023exploring}. Compared to the separated raw reflection and the enhancement baseline, our physically guided reconstruction explicitly restores structural integrity and avoids unnatural distortions, consistently achieving the highest quantitative scores across all evaluators. The absolute scores remain relatively modest across all methods. This primarily stems from the inherent sparsity of reflection layers. Since the separated reflections often contain limited semantic and structural content, they naturally receive lower evaluations from general-purpose quality assessors trained on complete natural scenes. Furthermore, while aggressive processing could artificially inflate these metrics, prioritizing numerical gains at the expense of physical fidelity to the original reflection layer is fundamentally inappropriate. As demonstrated in our visual results, our method avoids unnatural hallucination, striking a careful balance between perceptual reconstruction and adherence to the physical formation.

\begin{figure*}[t]
    \centering
    \begin{subfigure}[t]{0.16\linewidth}
        \includegraphics[width=1\linewidth,height=230pt]{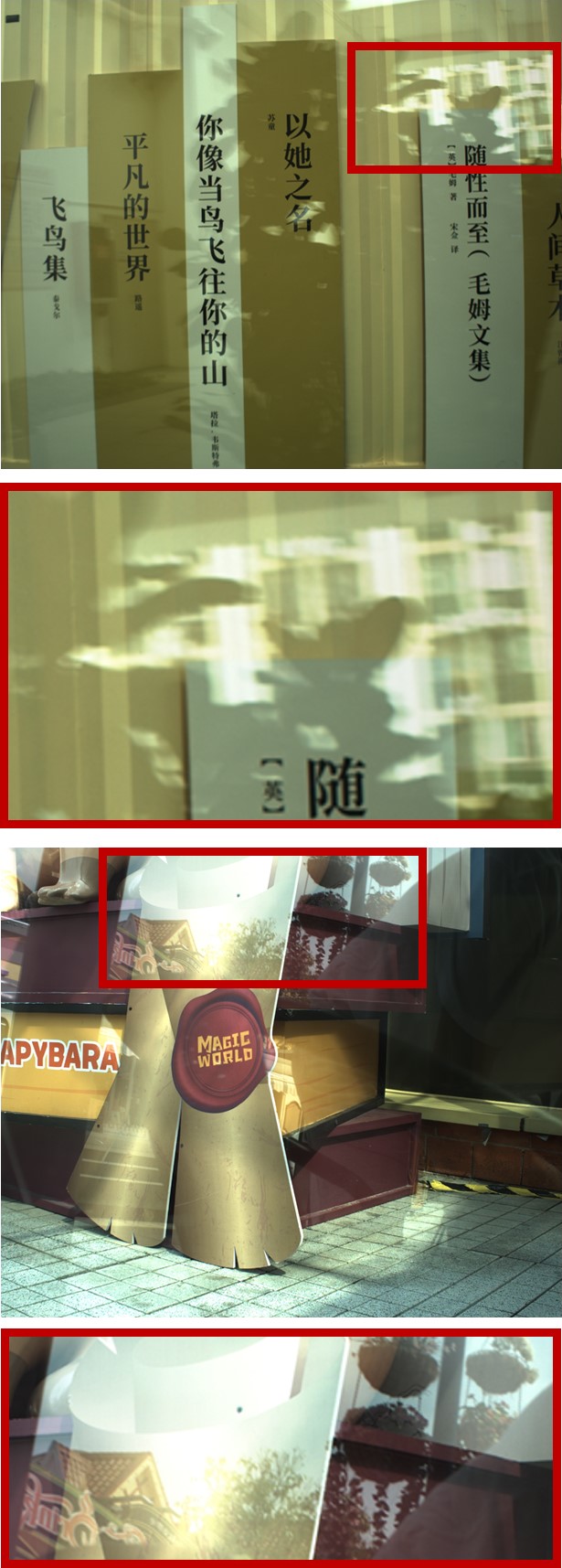}
        \subcaption*{RGB Input}
    \end{subfigure}
    \begin{subfigure}[t]{0.16\linewidth}
        \includegraphics[width=1\linewidth,height=230pt]{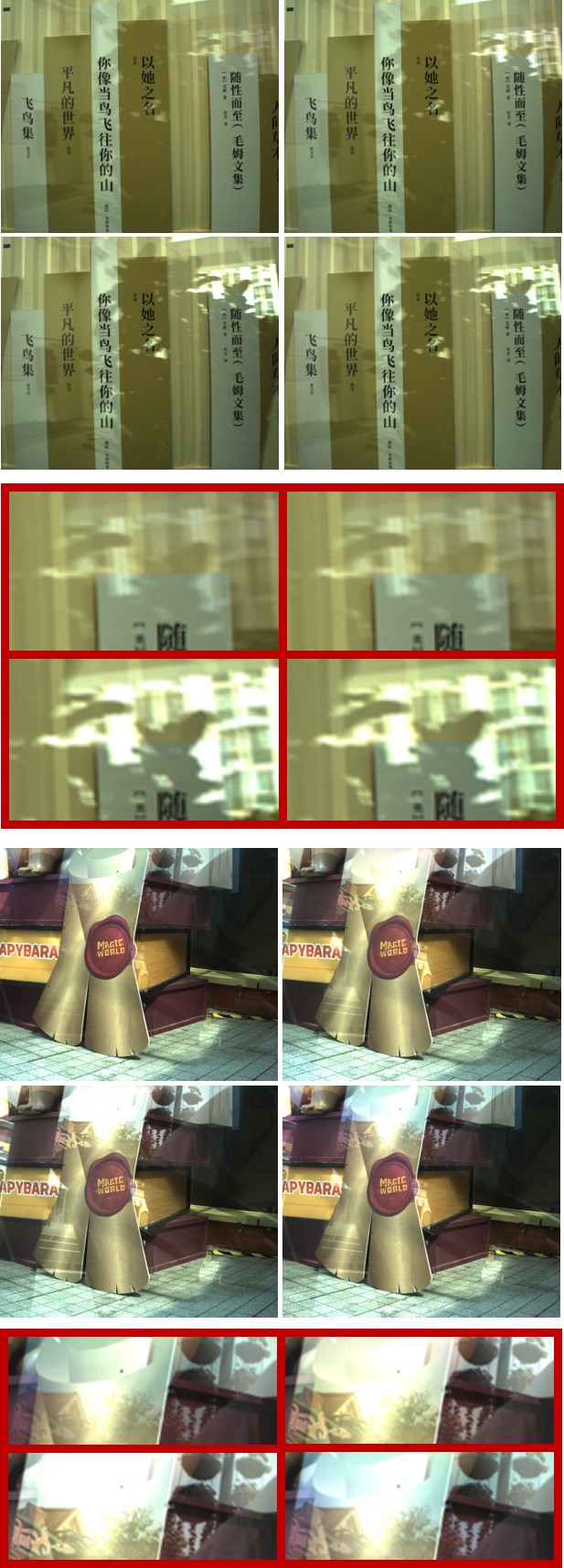}
        \subcaption*{Polarized Inputs}
    \end{subfigure}
    \begin{subfigure}[t]{0.16\linewidth}
        \includegraphics[width=1\linewidth,height=230pt]{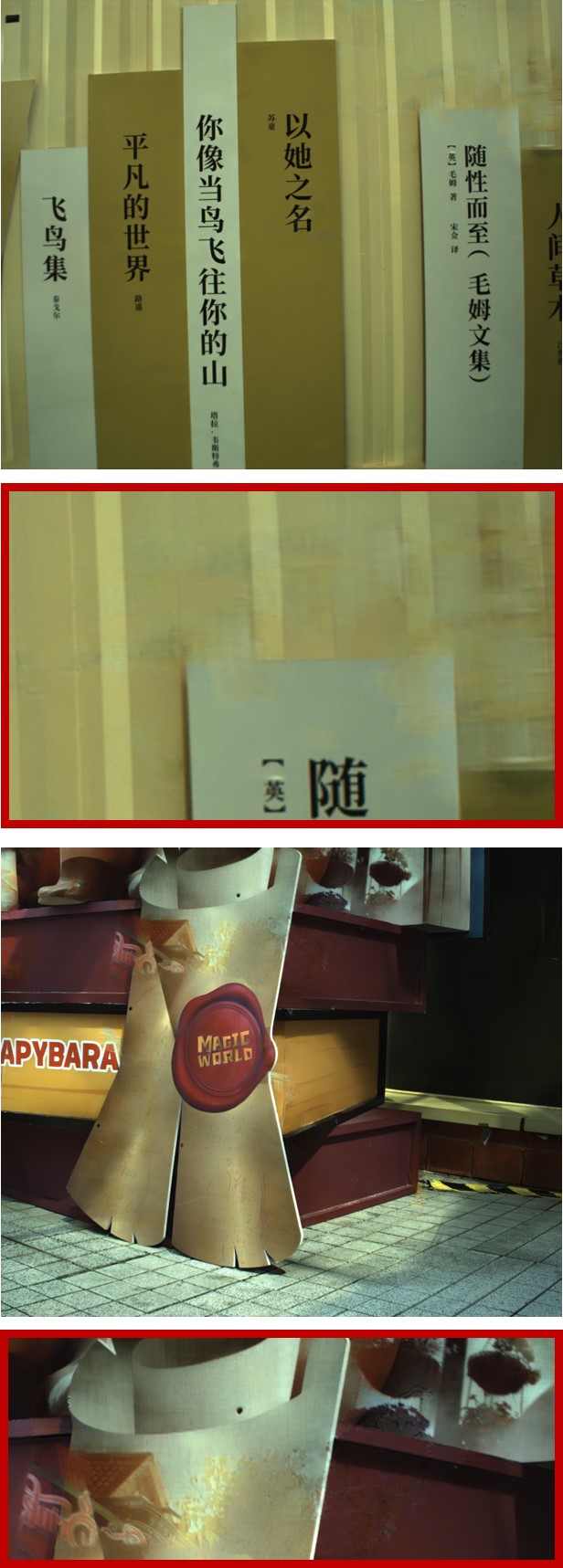}
        \subcaption*{PolarFree~\cite{cvpr/yao2025polarfree}}
    \end{subfigure}
    \begin{subfigure}[t]{0.16\linewidth}
        \includegraphics[width=1\linewidth,height=230pt]{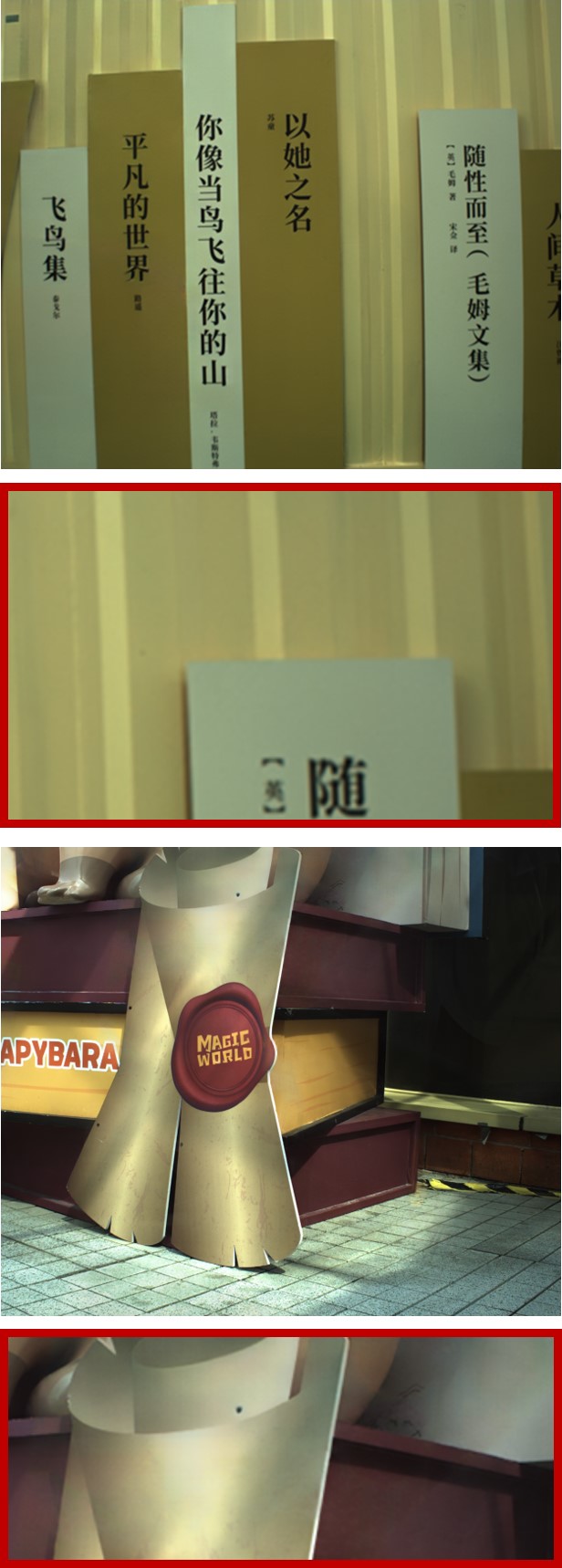}
        \subcaption*{Ours~($\hat{\textbf{T}}$)}
    \end{subfigure}
    \begin{subfigure}[t]{0.16\linewidth}
        \includegraphics[width=1\linewidth,height=230pt]{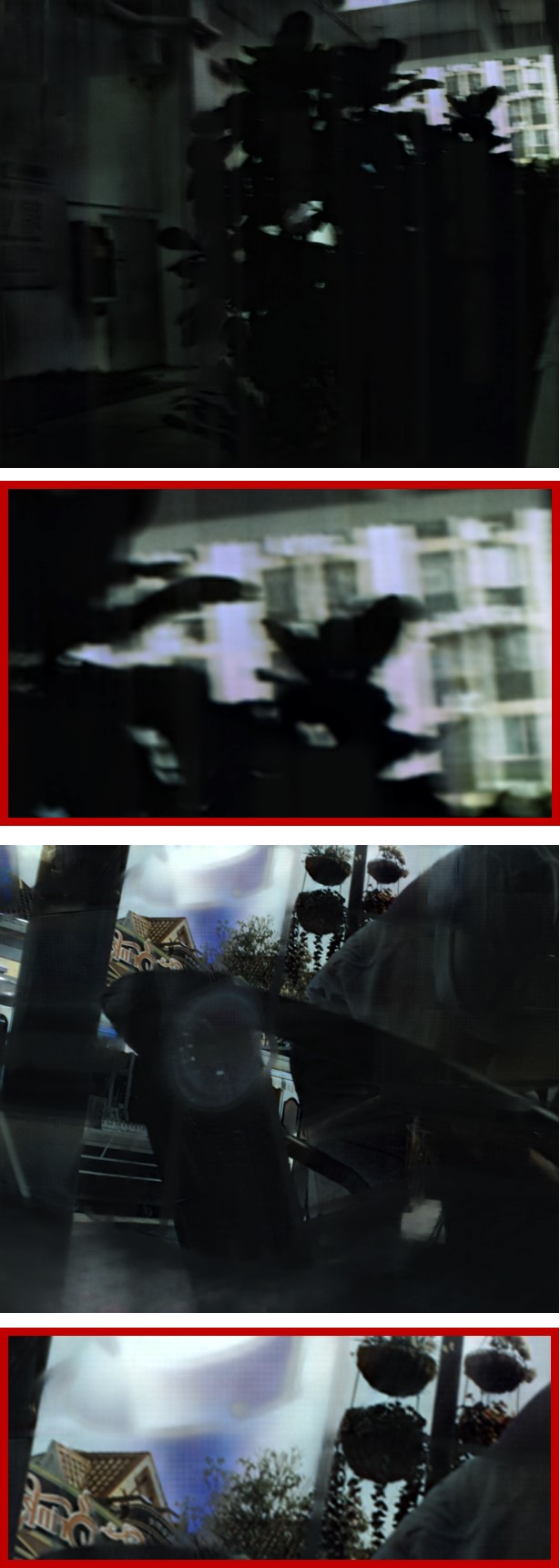}
        \subcaption*{Ours~($\hat{\textbf{R}}$)}
    \end{subfigure}
    \begin{subfigure}[t]{0.16\linewidth}
        \includegraphics[width=1\linewidth,height=230pt]{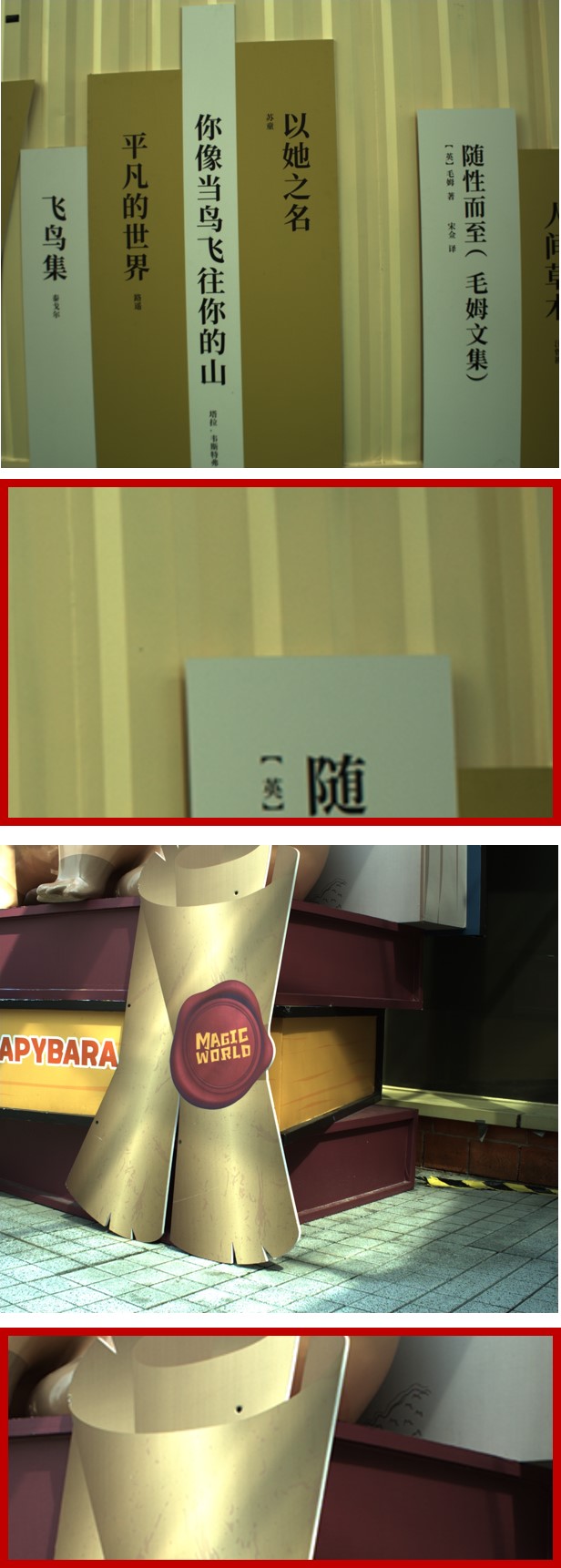}
        \subcaption*{GT}
    \end{subfigure}
    
    \caption{Visual comparison of polarization-based reflection separation on the PolaRGB dataset~\cite{cvpr/yao2025polarfree}. Compared to the recent method PolarFree~\cite{cvpr/yao2025polarfree}, our adapted DIRS framework effectively restores structural details in severely overexposed areas (highlighted in red boxes), yielding cleaner transmission predictions with fewer artifacts.}
    \label{fig:polar_comp}
    \vspace{-5pt}
\end{figure*}

\subsection{Polarized Image Reflection Separation}
\label{sec:polarized}
Although dual-stream interactive models effectively advance single-image reflection separation, they remain constrained by the limited scale of real-world datasets and the intrinsic ill-posedness of single-view observations. Specifically, when strong reflections overlap with transmission layers, the resulting overexposure causes irreversible information loss in a single image. To faithfully restore these regions, we adapt our DIRS design for polarization-based multiple-image reflection separation (PMIRS).

\begin{table}[t]
  \centering
  \caption{Quantitative comparison on the PolaRGB test set~\cite{cvpr/yao2025polarfree} for the PMIRS task. $\uparrow$ indicates higher is better, while $\downarrow$ indicates lower is better. The best results are in \textbf{bold}.}
  \renewcommand{\arraystretch}{1.2}
    \begin{tabular}{lccc}
    \toprule
    Method & PSNR $\uparrow$ & SSIM $\uparrow$ & LPIPS $\downarrow$ \\
    \midrule
    PolarFree~\cite{cvpr/yao2025polarfree} & 22.90 & 0.878 & 0.103 \\
    DIRS-PAIR (Ours) & \textbf{25.73} & \textbf{0.925} & \textbf{0.065} \\
    \bottomrule
    \end{tabular}%
  \label{tab:polar_metrics}%
\end{table}%

A recent dataset, PolaRGB~\cite{cvpr/yao2025polarfree}, utilizes a division-of-focal-plane polarization camera with a color Bayer pattern to capture aligned images at four polarization angles ($0^\circ, 45^\circ, 90^\circ$, and $135^\circ$), denoted as $\mathbf{I}_{0^\circ}, \mathbf{I}_{45^\circ}, \mathbf{I}_{90^\circ}$, and $\mathbf{I}_{135^\circ}$. According to Malus's law~\cite{born1965principles}, reflection intensity varies significantly across different polarization angles. This physical variation allows structural cues hidden within severely overexposed regions to become visible at certain angles, enabling more accurate decomposition. Standard PMIRS strategies typically rely on Stokes parameters~\cite{cvpr/LeiHZYSC20,cvpr/yao2025polarfree}:
\begin{equation}
\begin{aligned}
    \mathbf{S}_{0} &= (\mathbf{I}_{0^\circ} + \mathbf{I}_{45^\circ} + \mathbf{I}_{90^\circ} + \mathbf{I}_{135^\circ})/2, \\
    \mathbf{S}_{1} &= \mathbf{I}_{0^\circ} - \mathbf{I}_{90^\circ}, \quad \mathbf{S}_{2} = \mathbf{I}_{45^\circ} - \mathbf{I}_{135^\circ},
\end{aligned}
\end{equation}
where $\mathbf{S}_{0}$ represents the total unpolarized intensity, and $\mathbf{S}_{1}$ and $\mathbf{S}_{2}$ provide the linear polarization state based on intensity differences between orthogonal polarization directions. 

Our DIRS architecture naturally aligns with these polarization-based inputs. Following the DIRS design in the main paper, $\mathbf{S}_{0}$ captures the average observation and is ideal for global scene semantic extraction, whereas $\mathbf{S}_{1}$ and $\mathbf{S}_{2}$ serve as primary polarization cues. Therefore, we utilize the single-stream GPE to process $\mathbf{S}_{0}$ and the dual-stream APE to process the polarization components:
\begin{equation}
\begin{aligned}
    \{\mathbf{F}_{\mathbf{S}_0}^l\}_{l=1}^L &= \mathcal{F}_{\text{GPE}}(\mathbf{S}_{0}), \\
    \{\mathbf{F}_{\mathbf{S}_1}^l, \mathbf{F}_{\mathbf{S}_2}^l\}_{l=1}^L &= \mathcal{F}_{\text{APE}}([\mathbf{I}_{0^\circ}, \mathbf{I}_{90^\circ}], [\mathbf{I}_{45^\circ}, \mathbf{I}_{135^\circ}]),
\end{aligned}
\end{equation}
where $[\cdot, \cdot]$ denotes channel-wise concatenation. Notably, instead of explicitly inputting the differential parameters $\mathbf{S}_{1}$ and $\mathbf{S}_{2}$, we feed the concatenated raw image pairs to the APE. This strategy preserves richer original structural details for feature fusion, while the network retains the capacity to naturally learn the required differential mapping. 
As illustrated in Fig.~\ref{fig:polar_comp}, our adapted strategy effectively handles severe highlight overlaps, yielding cleaner reflection separation with minimal artifacts compared to existing methods. To quantitatively validate this extension, we compare our adapted model against the state-of-the-art PMIRS method, PolarFree~\cite{cvpr/yao2025polarfree}, on the PolaRGB test set. As reported in Tab.~\ref{tab:polar_metrics}, our DIRS-based architecture significantly outperforms PolarFree, achieving an improvement of nearly 3 dB in PSNR alongside substantially better structural similarity (SSIM) and perceptual quality (LPIPS). The faithful restoration of previously overexposed regions, both visually and quantitatively, decisively demonstrates the versatility and broad applicability of the proposed DIRS framework beyond standard single-image scenarios.

\begin{figure*}[t]
    \begin{subfigure}[b]{0.19\linewidth}
        \centering
        \includegraphics[width=\linewidth,height=100pt]{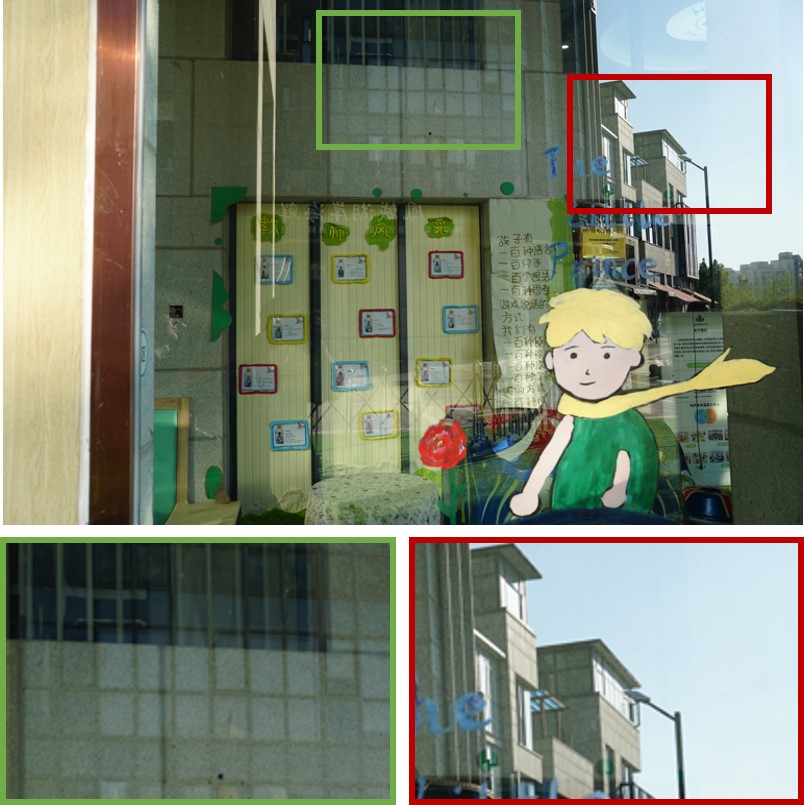}
        \caption*{Input}
    \end{subfigure}
    \begin{subfigure}[b]{0.19\linewidth}
        \centering
        \includegraphics[width=\linewidth,height=100pt]{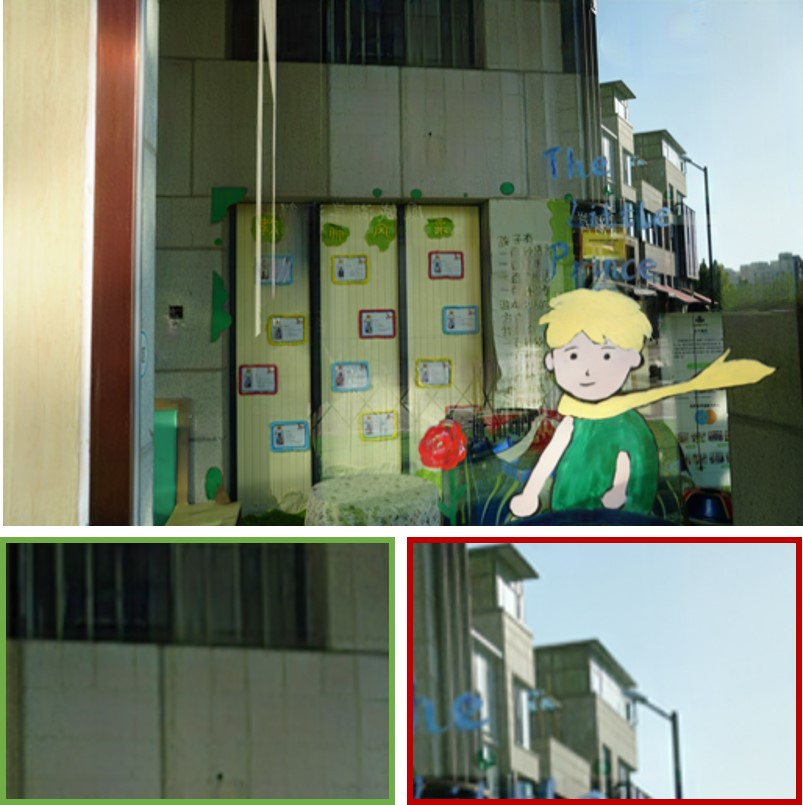}
        \caption*{Dong~\emph{et al.} \cite{iccv/Dong00BXL21}}
    \end{subfigure}
    \begin{subfigure}[b]{0.19\linewidth}
        \centering
        \includegraphics[width=\linewidth,height=100pt]{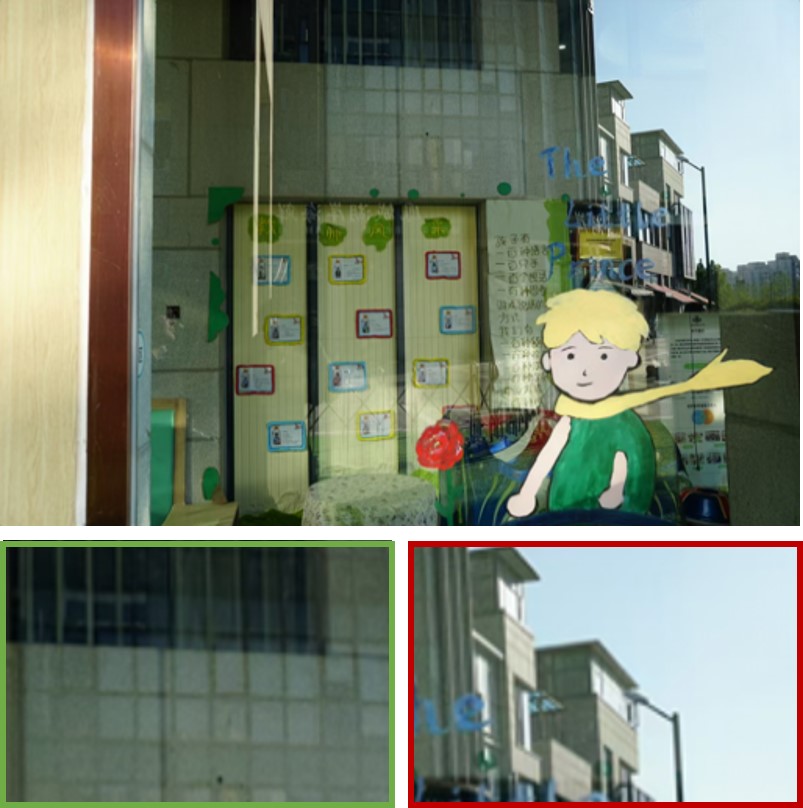}
        \caption*{RRW~\cite{cvpr/zhu2024revisiting}}
    \end{subfigure}
    \begin{subfigure}[b]{0.19\linewidth}
        \centering
        \includegraphics[width=\linewidth,height=100pt]{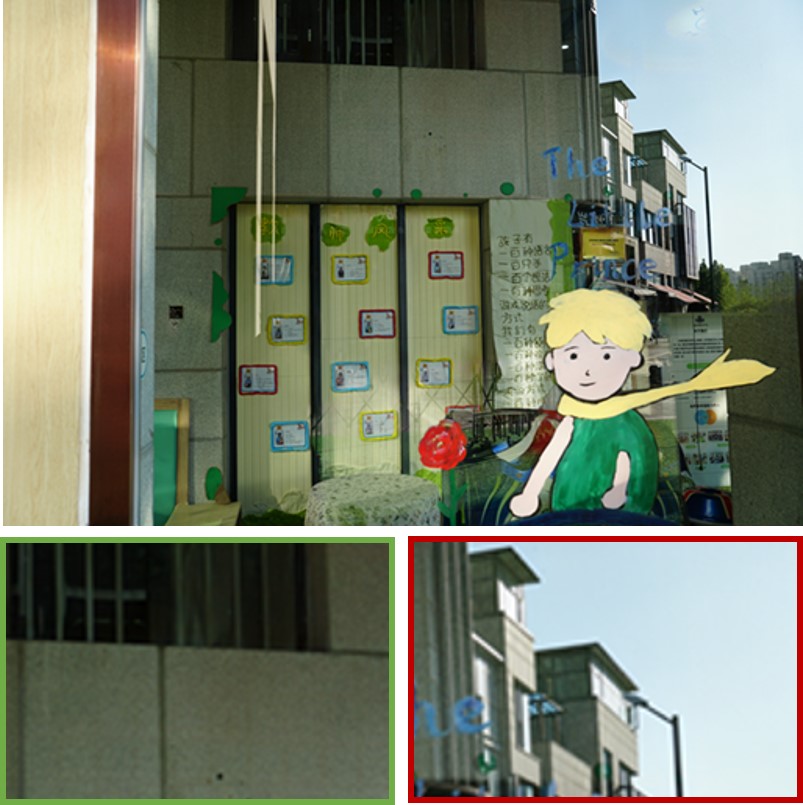}
        \caption*{RDNet~\cite{cvpr/zhao2025reversible}}
    \end{subfigure}
    \begin{subfigure}[b]{0.19\linewidth}
        \centering
        \includegraphics[width=\linewidth,height=100pt]{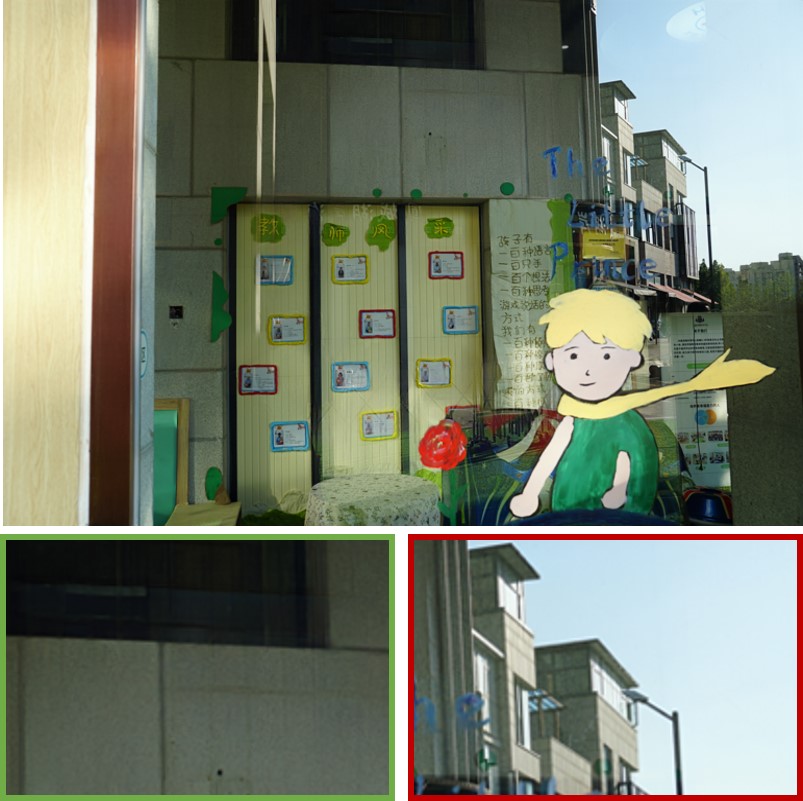}
        \caption*{DIRS-PAIR (Ours)}
    \end{subfigure}
 
\caption{Limitation analysis. While our method effectively disentangles reflection components under relatively mild coupling, all evaluated methods fail under extreme illumination contrast due to irreversible information loss.}
  \label{fig:limit}
  \vspace{-10pt}
\end{figure*}

\section{Limitations}
\label{sec:limitations}

While our proposed DIRS framework significantly advances the state-of-the-art in single-image reflection separation, it still faces inherent physical bottlenecks under extreme real-world conditions. Fig.~\ref{fig:limit} illustrates a challenging failure case characterized by spatially varying coupling intensities. As highlighted by the red boxes, in scenarios where a large illuminance disparity exists between the two sides of the glass, the reflection layer becomes highly prominent. This naturally reduces the transmission information retained in the captured image. Consequently, existing state of the art methods, including ours, struggle to recover the background details in these specific areas. Conversely, in regions with moderate coupling (highlighted by the green boxes), our method remains robust, successfully separating the complex grid reflection structures where other baseline models still fail to achieve satisfactory results.

This contrast underscores the fundamental ill-posedness of the single-image reflection separation task. When the underlying physical signal is truncated or entirely masked, purely algorithmic decoupling is insufficient. This highlights the necessity of exploring alternative strategies to tackle such extreme conditions. Promising future directions include incorporating explicit user guidance (such as manual annotations or language-guided prompts) and exploiting wider spatial context via panoramic-based methods, as discussed in the main manuscript. Furthermore, integrating auxiliary physical observations offers a practical alternative to mitigate this information loss. This encompasses the multiple image strategies surveyed in this document, including the polarization based reflection separation paradigm to which we successfully adapted our framework in Sec.~\ref{sec:extend_exps}.

\section{Conclusion}

In this paper, we present a principled framework that advances both the theoretical understanding and practical performance of reflection separation. We show that the widely adopted linear superposition assumption is insufficient under realistic sRGB image formation, and introduce a learnable nonlinear superposition formulation to capture inter-layer coupling and ISP-induced biases. Building upon this formulation, we develop a generalized dual-stream interactive framework that explicitly models bidirectional dependencies between transmission and reflection through unified feature interaction mechanisms.
Extensive experiments demonstrate that the proposed approach achieves state-of-the-art performance on diverse real-world benchmarks while maintaining strong generalization capability. Beyond empirical results, our study reveals that reflection separation is fundamentally a coupled decomposition problem, where accurate recovery requires the joint modeling of nonlinear image formation and feature interaction. We expect that this perspective will inspire future research toward more principled and physically grounded image decomposition frameworks.

\appendices

\section{Proposition Proofs}
\label{sec:append_proofs}
In this part, we provide the detailed proofs for the two propositions presented in the main manuscript. These proofs justify the intrinsic limitations of linear superposition models in the sRGB space and the necessity of interaction for non-linear reflection formation.

\subsection{Proof of Proposition 1}


\begin{proof}
Since the ISP mapping $\mathcal{F}_{\textup{ISP}}$ operates pixel-wise, we consider an arbitrary spatial location $\mathbf{p}$. Let the continuous scalar values $t = \mathbf{T}_\textup{raw}(\mathbf{p})$ and $r = \mathbf{R}_\textup{raw}(\mathbf{p})$ denote the independent RAW-domain intensities of the transmission and reflection layers at location $\mathbf{p}$, respectively. We prove the proposition by contradiction. 

Assume there exist fixed, scene-independent constants $a, b, z$ such that the linear relationship holds for all admissible continuous values of $t$ and $r$. This yields:
\begin{equation}
\mathcal{F}_{\textup{ISP}}(t + r) = a\mathcal{F}_{\textup{ISP}}(t) + b\mathcal{F}_{\textup{ISP}}(r) + z.
\label{eq:prop1_assumption}
\end{equation}

By applying the chain rule, we take the partial derivative of both sides of Eq.~(\ref{eq:prop1_assumption}) with respect to the transmission intensity $t$:
\begin{equation}
\frac{\partial}{\partial t}\mathcal{F}_{\textup{ISP}}(t + r) = \mathcal{F}_{\textup{ISP}}'(t + r) \cdot 1 = a\mathcal{F}_{\textup{ISP}}'(t).
\label{eq:prop1_dt}
\end{equation}
Similarly, taking the partial derivative with respect to the reflection intensity $r$ yields:
\begin{equation}
\frac{\partial}{\partial r}\mathcal{F}_{\textup{ISP}}(t + r) = \mathcal{F}_{\textup{ISP}}'(t + r) \cdot 1 = b\mathcal{F}_{\textup{ISP}}'(r).
\label{eq:prop1_dr}
\end{equation}

Since the left-hand sides of Eq.~(\ref{eq:prop1_dt}) and Eq.~(\ref{eq:prop1_dr}) are identically $\mathcal{F}_{\textup{ISP}}'(t + r)$, we can directly equate their right-hand sides to obtain:
\begin{equation}
a\mathcal{F}_{\textup{ISP}}'(t) = b\mathcal{F}_{\textup{ISP}}'(r),
\end{equation}
for all independent variables $t$ and $r$. For this equality to hold universally across all independent intensity inputs, the derivative function $\mathcal{F}_{\textup{ISP}}'(\cdot)$ must evaluate to a constant. 

If the first derivative is constant, the function $\mathcal{F}_{\textup{ISP}}$ must strictly be a linear mapping of the form $\mathcal{F}_{\textup{ISP}}(v) = c \cdot v + d$. However, this conclusion directly contradicts the fundamental premise that $\mathcal{F}_{\textup{ISP}}$ is a nonlinear operation.  Since assuming the existence of $a, b, z$ forces the ISP mapping to be linear, the initial assumption must be false. Therefore, no such fixed constants can exist, proving that the standard linear model $\mathbf{I} = a\mathbf{T} + b\mathbf{R} + z$ is fundamentally inadequate for sRGB superimposition.
\end{proof}

 \subsection{Proof of Proposition 2}


\begin{proof}
By definition, a multivariate interaction function $\boldsymbol{\Phi}(\mathbf{T}, \mathbf{R})$ is additively separable if and only if it can be decomposed into independent functions of each variable: $\boldsymbol{\Phi}(\mathbf{T}, \mathbf{R}) = \boldsymbol{\Phi}_\mathbf{T}(\mathbf{T}) + \boldsymbol{\Phi}_\mathbf{R}(\mathbf{R})$. We proceed by contradiction to prove that this separation is impossible under nonlinear real-world ISP constraints.

Let the independently rendered sRGB layers be $\mathbf{T} = \mathcal{F}(\mathbf{T}_{\text{raw}})$ and $\mathbf{R} = \mathcal{F}(\mathbf{R}_{\text{raw}})$. If the sRGB formation model were fully decoupled, we could substitute the separable assumption into the physical formation equation:
\begin{equation}
\begin{aligned}
    \mathcal{F}(\mathbf{T}_{\text{raw}} + \mathbf{R}_{\text{raw}}) &= \mathcal{F}(\mathbf{T}_{\text{raw}}) + \mathcal{F}(\mathbf{R}_{\text{raw}}) \\
    &\quad + \boldsymbol{\Phi}_\mathbf{T}(\mathcal{F}(\mathbf{T}_{\text{raw}})) + \boldsymbol{\Phi}_\mathbf{R}(\mathcal{F}(\mathbf{R}_{\text{raw}})) + \boldsymbol{\Psi}.
\end{aligned}
\label{eq:forward_decouple}
\end{equation}
To test this equality in the continuous and differentiable regions of the ISP pipeline, we take the mixed partial derivative $\frac{\partial^2}{\partial \mathbf{T}_{\text{raw}} \partial \mathbf{R}_{\text{raw}}}$ on both sides of Eq.~(\ref{eq:forward_decouple}), yielding:
\begin{equation}
    \mathcal{F}''(\mathbf{T}_{\text{raw}} + \mathbf{R}_{\text{raw}}) = 0.
\end{equation}
This result implies that the second derivative of the ISP function must be zero everywhere, meaning $\mathcal{F}(\cdot)$ can only be strictly linear. However, real-world ISP functions are inherently non-linear. For example, the standard Gamma compression $\mathcal{F}(x) = x^\gamma$ (typically $\gamma \approx 1/2.2$) yields a non-zero second derivative $\mathcal{F}''(x) = \gamma(\gamma-1)x^{\gamma-2} \neq 0$ for positive irradiances. Similarly, empirically calibrated Camera Response Functions (CRFs) also exhibit intrinsic convexity or concavity. Thus, the assumption leads to a mathematical contradiction, proving that $\boldsymbol{\Phi}(\mathbf{T}, \mathbf{R})$ strictly generates inseparable cross-terms and is therefore non-separable. 

Next, we establish that this physical non-separability mathematically necessitates coupled estimation. Consider our reconstruction objective $\mathcal{L}_{\text{rec}} = \|\mathbf{I}_{\text{obs}} - (\mathbf{T} + \mathbf{R} + \boldsymbol{\Phi}(\mathbf{T}, \mathbf{R}) + \boldsymbol{\Psi})\|_2^2$. The local curvature of the joint optimization landscape is governed by the Hessian matrix $\mathbf{H}$:
\begin{equation}
    \mathbf{H} = \begin{bmatrix} 
    \mathbf{H}_{\mathbf{T}\mathbf{T}} & \mathbf{H}_{\mathbf{T}\mathbf{R}} \\[6pt]
    \mathbf{H}_{\mathbf{R}\mathbf{T}} & \mathbf{H}_{\mathbf{R}\mathbf{R}} 
    \end{bmatrix} = \begin{bmatrix} 
    \frac{\partial^2 \mathcal{L}_{\text{rec}}}{\partial \mathbf{T}^2} & \frac{\partial^2 \mathcal{L}_{\text{rec}}}{\partial \mathbf{T} \partial \mathbf{R}} \\[6pt]
    \frac{\partial^2 \mathcal{L}_{\text{rec}}}{\partial \mathbf{R} \partial \mathbf{T}} & \frac{\partial^2 \mathcal{L}_{\text{rec}}}{\partial \mathbf{R}^2} 
    \end{bmatrix}.
\end{equation}
Because $\boldsymbol{\Phi}$ is non-separable, its mixed partial derivative dictates that the off-diagonal blocks are strictly non-zero ($\mathbf{H}_{\mathbf{T}\mathbf{R}} \neq \mathbf{0}$). In gradient-based optimization, this non-zero cross-curvature indicates that the gradient field of one layer dynamically depends on the state of the other, governed by the total differential $\Delta (\nabla_{\mathbf{T}} \mathcal{L}_{\text{rec}}) \approx \mathbf{H}_{\mathbf{T}\mathbf{R}} \Delta \mathbf{R}$. 
A separated estimator yet implicitly assumes $\mathbf{H}_{\mathbf{T}\mathbf{R}} = \mathbf{0}$, which contradicts the established physical formation. Therefore, converging to the true joint optimum requires the computational graph to evaluate this cross-variable curvature. In conclusion, explicitly modeling these cross-dependencies through dual-stream feature interaction is theoretically motivated for approximating the coupled inverse problem.
\end{proof}

\bibliographystyle{ieee_fullname}
\bibliography{egbib}

@misc{born1965principles,
  title={Principles of Optics},
  author={Born, Max and Wolf, Emil},
  year={1965},
  publisher={Pergamon Press Ltd., London}
}

@inproceedings{eccv/shizawa1992visual,
  title={On visual ambiguities due to transparency in motion and stereo},
  author={Shizawa, Masahiko},
  booktitle={ECCV},
  pages={411--419},
  year={1992}
}

@inproceedings{cvpr/qiu2023looking,
  title={Looking through the glass: Neural surface reconstruction against high specular reflections},
  author={Qiu, Jiaxiong and Jiang, Peng-Tao and Zhu, Yifan and Yin, Ze-Xin and Cheng, Ming-Ming and Ren, Bo},
  booktitle={CVPR},
  pages={20823--20833},
  year={2023}
}

@inproceedings{eccv/BergenBHP90,
  author       = {James R. Bergen and
                  Peter J. Burt and
                  Rajesh Hingorani and
                  Shmuel Peleg},
  title        = {Transparent-Motion Analysis},
  booktitle    = {ECCV},
  volume       = {427},
  pages        = {566--569},
  year         = {1990}
}

@inproceedings{iccv/SchechnerKB98,
  author       = {Yoav Y. Schechner and
                  Nahum Kiryati and
                  Ronen Basri},
  title        = {Separation of Transparent Layers Using Focus},
  booktitle    = {ICCV},
  pages        = {1061--1066},
  year         = {1998},
}

@inproceedings{cvpr/SzeliskiAA00,
  author    = {Richard Szeliski and
               Shai Avidan and
               P. Anandan},
  title     = {Layer Extraction from Multiple Images Containing Reflections and Transparency},
  booktitle = {CVPR},
  pages     = {1246},
  year      = {2000}
}

@article{pami/BergenBHP92,
  author       = {James R. Bergen and
                  Peter J. Burt and
                  Rajesh Hingorani and
                  Shmuel Peleg},
  title        = {A Three-Frame Algorithm for Estimating Two-Component Image Motion},
  journal      = {TPAMI},
  volume       = {14},
  number       = {9},
  pages        = {886--896},
  year         = {1992}
}

@inproceedings{cvpr/TsinKS03,
  author       = {Yanghai Tsin and
                  Sing Bing Kang and
                  Richard Szeliski},
  title        = {Stereo Matching with Reflections and Translucency},
  booktitle    = {CVPR},
  pages        = {702--709},
  year         = {2003}
}

@inproceedings{eccv/SarelI04,
  author    = {Bernard Sarel and
               Michal Irani},
  title     = {Separating Transparent Layers through Layer Information Exchange},
  booktitle = {ECCV},
  pages     = {328--341},
  year      = {2004}
}

@article{tog/AgrawalRNL05,
  author    = {Amit K. Agrawal and
               Ramesh Raskar and
               Shree K. Nayar and
               Yuanzhen Li},
  title     = {Removing photography artifacts using gradient projection and flash-exposure sampling},
  journal   = {TOG},
  volume    = {24},
  number    = {3},
  pages     = {828--835},
  year      = {2005}
}

@inproceedings{iccv/SarelI05,
  author    = {Bernard Sarel and
               Michal Irani},
  title     = {Separating Transparent Layers of Repetitive Dynamic Behaviors},
  booktitle = {ICCV},
  pages     = {26--32},
  year      = {2005}
}

@inproceedings{cvpr/GaiSZ08,
  author    = {Kun Gai and
               Zhenwei Shi and
               Changshui Zhang},
  title     = {Blindly separating mixtures of multiple layers with spatial shifts},
  booktitle = {CVPR},
  year      = {2008}
}

@article{pami/GaiSZ12,
  author    = {Kun Gai and
               Zhenwei Shi and
               Changshui Zhang},
  title     = {Blind Separation of Superimposed Moving Images Using Image Statistics},
  journal   = {TPAMI},
  volume    = {34},
  pages     = {19--32},
  year      = {2012}
}

@article{tog/SinhaKGSS12,
  author    = {Sudipta N. Sinha and
               Johannes Kopf and
               Michael Goesele and
               Daniel Scharstein and
               Richard Szeliski},
  title     = {Image-based rendering for scenes with reflections},
  journal   = {TOG},
  volume    = {31},
  number    = {4},
  pages     = {100:1--100:10},
  year      = {2012}
}

@inproceedings{iccv/LiB13,
  author    = {Yu Li and
               Michael S. Brown},
  title     = {Exploiting Reflection Change for Automatic Reflection Removal},
  booktitle = {ICCV},
  pages     = {2432--2439},
  year      = {2013}
}

@inproceedings{cvpr/GuoCM14,
  author    = {Xiaojie Guo and
               Xiaochun Cao and
               Yi Ma},
  title     = {Robust Separation of Reflection from Multiple Images},
  booktitle = {CVPR},
  pages     = {2195--2202},
  year      = {2014}
}

@article{tog/Freeman15,
  author    = {Tianfan Xue and
               Michael Rubinstein and
               Ce Liu and
               William T. Freeman},
  title     = {A computational approach for obstruction-free photography},
  journal   = {TOG},
  volume    = {34},
  pages     = {79:1--79:11},
  year      = {2015},
}

@inproceedings{cvpr/SimonP15,
  author    = {Christian Simon and
               In Kyu Park},
  title     = {Reflection removal for in-vehicle black box videos},
  booktitle = {CVPR},
  pages     = {4231--4239},
  year      = {2015}
}

@inproceedings{cvpr/YangLDT16,
  author    = {Jiaolong Yang and
               Hongdong Li and
               Yuchao Dai and
               Robby T. Tan},
  title     = {Robust Optical Flow Estimation of Double-Layer Images under Transparency or Reflection},
  booktitle = {CVPR},
  pages     = {1410--1419},
  year      = {2016}
}

@inproceedings{mm/SunLYZWL16,
  author    = {Chao Sun and
               Shuaicheng Liu and
               Taotao Yang and
               Bing Zeng and
               Zhengning Wang and
               Guanghui Liu},
  title     = {Automatic Reflection Removal using Gradient Intensity and Motion Cues},
  booktitle = {ACM MM},
  pages     = {466--470},
  year      = {2016}
}

@inproceedings{cvpr/HanS17,
  author    = {Byeong{-}Ju Han and
               Jae{-}Young Sim},
  title     = {Reflection Removal Using Low-Rank Matrix Completion},
  booktitle = {CVPR},
  pages     = {3872--3880},
  year      = {2017}
}

@article{tip/HanS18,
  author       = {Byeong{-}Ju Han and
                  Jae{-}Young Sim},
  title        = {Glass Reflection Removal Using Co-Saliency-Based Image Alignment and
                  Low-Rank Matrix Completion in Gradient Domain},
  journal      = {TIP},
  volume       = {27},
  number       = {10},
  pages        = {4873--4888},
  year         = {2018}
}

@inproceedings{cvpr/AlayracCZ19,
  author       = {Jean{-}Baptiste Alayrac and
                  Jo{\~{a}}o Carreira and
                  Andrew Zisserman},
  title        = {The Visual Centrifuge: Model-Free Layered Video Representations},
  booktitle    = {CVPR},
  pages        = {2457--2466},
  year         = {2019}
}

@inproceedings{cvpr/PunnappurathB19,
  author       = {Abhijith Punnappurath and
                  Michael S. Brown},
  title        = {Reflection Removal Using a Dual-Pixel Sensor},
  booktitle    = {CVPR},
  pages        = {1556--1565},
  year         = {2019},
}

@inproceedings{cvpr/LiuL0CH20,
  author    = {Yu{-}Lun Liu and
               Wei{-}Sheng Lai and
               Ming{-}Hsuan Yang and
               Yung{-}Yu Chuang and
               Jia{-}Bin Huang},
  title     = {Learning to See Through Obstructions},
  booktitle = {CVPR},
  pages     = {14203--14212},
  year      = {2020}
}

@inproceedings{cvpr/LeiC21,
  author    = {Chenyang Lei and
               Qifeng Chen},
  title     = {Robust Reflection Removal With Reflection-Free Flash-Only Cues},
  booktitle = {CVPR},
  pages     = {14811--14820},
  year      = {2021}
}

@article{pami/LiuLYCH22,
  author    = {Yu{-}Lun Liu and
               Wei{-}Sheng Lai and
               Ming{-}Hsuan Yang and
               Yung{-}Yu Chuang and
               Jia{-}Bin Huang},
  title     = {Learning to See Through Obstructions With Layered Decomposition},
  journal   = {TPAMI},
  volume    = {44},
  number    = {11},
  pages     = {8387--8402},
  year      = {2022}
}

@article{pami/LeiJC23,
  author       = {Chenyang Lei and
                  Xudong Jiang and
                  Qifeng Chen},
  title        = {Robust Reflection Removal With Flash-Only Cues in the Wild},
  journal      = {TPAMI},
  volume       = {45},
  number       = {12},
  pages        = {15530--15545},
  year         = {2023}
}

@inproceedings{cvpr/Wolff89a,
  author       = {Lawrence B. Wolff},
  title        = {Using polarization to separate reflection components},
  booktitle    = {CVPR},
  pages        = {363--369},
  year         = {1989}
}

@article{ijcv/NayarFB97,
  author    = {Shree K. Nayar and
               Xi{-}Sheng Fang and
               Terrance E. Boult},
  title     = {Separation of Reflection Components Using Color and Polarization},
  journal   = {IJCV},
  volume    = {21},
  number    = {3},
  pages     = {163--186},
  year      = {1997}
}

@inproceedings{cvpr/FaridA99,
  author    = {Hany Farid and
               Edward H. Adelson},
  title     = {Separating Reflections and Lighting Using Independent Components Analysis},
  booktitle = {CVPR},
  pages     = {1262--1267},
  year      = {1999},
}

@inproceedings{schechner1999polarization,
  title={Polarization-based decorrelation of transparent layers: The inclination angle of an invisible surface},
  author={Schechner, Yoav Y and Shamir, Joseph and Kiryati, Nahum},
  booktitle={ICCV},
  pages={814--819},
  year={1999}
}

@article{schechner2000polarization,
  title={Polarization and statistical analysis of scenes containing a semireflector},
  author={Schechner, Yoav Y and Shamir, Joseph and Kiryati, Nahum},
  journal={JOSA A},
  volume={17},
  number={2},
  pages={276--284},
  year={2000}
}

@article{tip/KongTS11,
  author    = {Naejin Kong and
               Yu{-}Wing Tai and
               Sung Yong Shin},
  title     = {High-Quality Reflection Separation Using Polarized Images},
  journal   = {TIP},
  volume    = {20},
  number    = {12},
  pages     = {3393--3405},
  year      = {2011}
}

@article{pami/KongTS14,
  author    = {Naejin Kong and
               Yu{-}Wing Tai and
               Joseph S. Shin},
  title     = {A Physically-Based Approach to Reflection Separation: From Physical Modeling to Constrained Optimization},
  journal   = {TPAMI},
  volume    = {36},
  number    = {2},
  pages     = {209--221},
  year      = {2014}
}

@inproceedings{eccv/WieschollekGGK18,
  author    = {Patrick Wieschollek and
               Orazio Gallo and
               Jinwei Gu and
               Jan Kautz},
  title     = {Separating Reflection and Transmission Images in the Wild},
  booktitle = {ECCV},
  pages     = {90--105},
  year      = {2018}
}

@inproceedings{nips/LyuCLPS19,
  author    = {Youwei Lyu and
               Zhaopeng Cui and
               Si Li and
               Marc Pollefeys and
               Boxin Shi},
  title     = {Reflection Separation using a Pair of Unpolarized and Polarized Images},
  booktitle = {NeurIPS},
  pages     = {14532--14542},
  year      = {2019},
}

@inproceedings{eccv/LiQZH20,
  author    = {Rui Li and
               Simeng Qiu and
               Guangming Zang and
               Wolfgang Heidrich},
  title     = {Reflection Separation via Multi-bounce Polarization State Tracing},
  booktitle = {ECCV},
  pages     = {781--796},
  year      = {2020},
}

@inproceedings{cvpr/LeiHZYSC20,
  author    = {Chenyang Lei and
               Xuhua Huang and
               Mengdi Zhang and
               Qiong Yan and
               Wenxiu Sun and
               Qifeng Chen},
  title     = {Polarized Reflection Removal With Perfect Alignment in the Wild},
  booktitle = {CVPR},
  pages     = {1747--1755},
  year      = {2020},
}

@inproceedings{nips/LevinZW02,
  author    = {Anat Levin and
               Assaf Zomet and
               Yair Weiss},
  title     = {Learning to Perceive Transparency from the Statistics of Natural Scenes},
  booktitle = {NeurIPS},
  pages     = {1247--1254},
  year      = {2002},
}

@inproceedings{cvpr/LevinZW04,
  author    = {Anat Levin and
               Assaf Zomet and
               Yair Weiss},
  title     = {Separating Reflections from a Single Image Using Local Features},
  booktitle = {CVPR},
  pages     = {306--313},
  year      = {2004},
}

@inproceedings{eccv/LevinW04,
  title={User assisted separation of reflections from a single image using a sparsity prior},
  author={Levin, Anat and Weiss, Yair},
  booktitle={ECCV},
  pages={602--613},
  year={2004}
}

@article{pami/LevinW07,
  author    = {Anat Levin and
               Yair Weiss},
  title     = {User Assisted Separation of Reflections from a Single Image Using a Sparsity Prior},
  journal   = {TPAMI},
  volume    = {29},
  number    = {9},
  pages     = {1647--1654},
  year      = {2007}
}

@inproceedings{wacv/ChungCWC09,
  author       = {Yun{-}Chung Chung and
                  Shyang{-}Lih Chang and
                  Jung Ming Wang and
                  Sei{-}Wang Chen},
  title        = {Interference reflection separation from a single image},
  booktitle    = {WACV},
  pages        = {1--6},
  year         = {2009}
}

@inproceedings{iscas/YanXY13,
  author       = {Qing Yan and
                  Yi Xu and
                  Xiaokang Yang},
  title        = {Separation of weak reflection from a single superimposed image using gradient profile sharpness},
  booktitle    = {ISCAS},
  pages        = {937--940},
  year         = {2013}
}

@inproceedings{cvpr/LiB14,
  author    = {Yu Li and
               Michael S. Brown},
  title     = {Single Image Layer Separation Using Relative Smoothness},
  booktitle = {CVPR},
  pages     = {2752--2759},
  year      = {2014}
}

@inproceedings{cvpr/ShihKDF15,
  author       = {Yichang Shih and
                  Dilip Krishnan and
                  Fr{\'{e}}do Durand and
                  William T. Freeman},
  title        = {Reflection removal using ghosting cues},
  booktitle    = {CVPR},
  pages        = {3193--3201},
  year         = {2015}
}

@inproceedings{icip/WanSHK16,
  author       = {Renjie Wan and
                  Boxin Shi and
                  Ah{-}Hwee Tan and
                  Alex C. Kot},
  title        = {Depth of field guided reflection removal},
  booktitle    = {ICIP},
  pages        = {21--25},
  year         = {2016}
}

@inproceedings{iccv/WanSDTK17,
  author    = {Renjie Wan and
               Boxin Shi and
               Ling{-}Yu Duan and
               Ah{-}Hwee Tan and
               Alex C. Kot},
  title     = {Benchmarking Single-Image Reflection Removal Algorithms},
  booktitle = {ICCV},
  pages     = {3942--3950},
  year      = {2017}
}

@inproceedings{iccv/FanYHCW17,
  author    = {Qingnan Fan and
               Jiaolong Yang and
               Gang Hua and
               Baoquan Chen and
               David P. Wipf},
  title     = {A Generic Deep Architecture for Single Image Reflection Removal and
               Image Smoothing},
  booktitle = {ICCV},
  pages     = {3258--3267},
  year      = {2017}
}

@inproceedings{cvpr/ZhangNC18a,
  author    = {Xuaner Cecilia Zhang and
               Ren Ng and
               Qifeng Chen},
  title     = {Single Image Reflection Separation With Perceptual Losses},
  booktitle = {CVPR},
  pages     = {4786--4794},
  year      = {2018}
}

@inproceedings{cvpr/WanSDTK18,
  author    = {Renjie Wan and
               Boxin Shi and
               Ling{-}Yu Duan and
               Ah{-}Hwee Tan and
               Alex C. Kot},
  title     = {CRRN: Multi-Scale Guided Concurrent Reflection Removal Network},
  booktitle = {CVPR},
  pages     = {4777--4785},
  year      = {2018}
}

@inproceedings{eccv/YangGLS18,
  author    = {Jie Yang and
               Dong Gong and
               Lingqiao Liu and
               Qinfeng Shi},
  title     = {Seeing Deeply and Bidirectionally: A Deep Learning Approach for Single Image Reflection Removal},
  booktitle = {ECCV},
  pages     = {675--691},
  year      = {2018},
}

@inproceedings{cvpr/WenT0LHH19,
  author    = {Qiang Wen and
               Yinjie Tan and
               Jing Qin and
               Wenxi Liu and
               Guoqiang Han and
               Shengfeng He},
  title     = {Single Image Reflection Removal Beyond Linearity},
  booktitle = {CVPR},
  pages     = {3771--3779},
  year      = {2019}
}

@inproceedings{cvpr/WeiYFW019,
  author    = {Kaixuan Wei and
               Jiaolong Yang and
               Ying Fu and
               David P. Wipf and
               Hua Huang},
  title     = {Single Image Reflection Removal Exploiting Misaligned Training Data and Network Enhancements},
  booktitle = {CVPR},
  pages     = {8178--8187},
  year      = {2019},
}

@inproceedings{cvpr/LiY0LH20,
  author    = {Chao Li and
               Yixiao Yang and
               Kun He and
               Stephen Lin and
               John E. Hopcroft},
  title     = {Single Image Reflection Removal Through Cascaded Refinement},
  booktitle = {CVPR},
  pages     = {3562--3571},
  year      = {2020}
}

@article{li2023two,
  title={Two-stage single image reflection removal with reflection-aware guidance},
  author={Li, Yu and Liu, Ming and Yi, Yaling and Li, Qince and Ren, Dongwei and Zuo, Wangmeng},
  journal={Applied Intelligence},
  volume={53},
  number={16},
  pages={19433--19448},
  year={2023}
}

@inproceedings{cvpr/WanSLDK20,
  author    = {Renjie Wan and
               Boxin Shi and
               Haoliang Li and
               Ling{-}Yu Duan and
               Alex C. Kot},
  title     = {Reflection Scene Separation From a Single Image},
  booktitle = {CVPR},
  pages     = {2395--2403},
  year      = {2020},
}

@article{tip/FengPJCZL21,
  author    = {Xin Feng and
               Wenjie Pei and
               Zihui Jia and
               Fanglin Chen and
               David Zhang and
               Guangming Lu},
  title     = {Deep-Masking Generative Network: A Unified Framework for Background Restoration From Superimposed Images},
  journal   = {TIP},
  volume    = {30},
  pages     = {4867--4882},
  year      = {2021}
}

@inproceedings{cvpr/HongZZJKS21,
  author       = {Yuchen Hong and
                  Qian Zheng and
                  Lingran Zhao and
                  Xudong Jiang and
                  Alex C. Kot and
                  Boxin Shi},
  title        = {Panoramic Image Reflection Removal},
  booktitle    = {CVPR},
  pages        = {7762--7771},
  year         = {2021}
}

@inproceedings{icme/FengJJP0L21,
  author       = {Xin Feng and
                  Haobo Ji and
                  Bo Jiang and
                  Wenjie Pei and
                  Fanglin Chen and
                  Guangming Lu},
  title        = {Contrastive Feature Decomposition for Image Reflection Removal},
  booktitle    = {ICME},
  pages        = {1--6},
  year         = {2021}
}

@inproceedings{mma/shao2021model,
  title={A Model-Guided Unfolding Network for Single Image Reflection Removal},
  author={Shao, Dongliang and Shi, Yunhui and Wang, Jin and Ling, Nam and Yin, Baocai},
  booktitle={ACM MM Asia},
  pages={1--7},
  year={2021}
}

@inproceedings{nips/HuG21,
  author    = {Qiming Hu and
               Xiaojie Guo},
  title     = {Trash or Treasure? An Interactive Dual-Stream Strategy for Single Image Reflection Separation},
  booktitle = {NeurIPS},
  pages     = {24683--24694},
  year      = {2021}
}

@inproceedings{cvpr/ZhengSCJDK21,
  author    = {Qian Zheng and
               Boxin Shi and
               Jinnan Chen and
               Xudong Jiang and
               Ling{-}Yu Duan and
               Alex C. Kot},
  title     = {Single Image Reflection Removal With Absorption Effect},
  booktitle = {CVPR},
  pages     = {13395--13404},
  year      = {2021}
}

@inproceedings{iccv/Dong00BXL21,
  author    = {Zheng Dong and
               Ke Xu and
               Yin Yang and
               Hujun Bao and
               Weiwei Xu and
               Rynson W. H. Lau},
  title     = {Location-aware Single Image Reflection Removal},
  booktitle = {ICCV},
  pages     = {4997--5006},
  year      = {2021}
}

@inproceedings{mm/ZhangSL22,
  author       = {Ya{-}Nan Zhang and
                  Linlin Shen and
                  Qiufu Li},
  title        = {Content and Gradient Model-driven Deep Network for Single Image Reflection Removal},
  booktitle    = {ACM MM},
  pages        = {6802--6812},
  year         = {2022},
}

@inproceedings{cvpr/song2023robust,
  title={Robust single image reflection removal against adversarial attacks},
  author={Song, Zhenbo and Zhang, Zhenyuan and Zhang, Kaihao and Luo, Wenhan and Fan, Zhaoxin and Ren, Wenqi and Lu, Jianfeng},
  booktitle={CVPR},
  pages={24688--24698},
  year={2023}
}

@inproceedings{iccv/Hu23,
  title={Single image reflection separation via component synergy},
  author={Hu, Qiming and Guo, Xiaojie},
  booktitle={ICCV},
  pages={13138--13147},
  year={2023}
}

@article{pami/HongZZJKS23,
  author       = {Yuchen Hong and
                  Qian Zheng and
                  Lingran Zhao and
                  Xudong Jiang and
                  Alex C. Kot and
                  Boxin Shi},
  title        = {PAR$^2$Net: End-to-End Panoramic Image Reflection Removal},
  journal      = {TPAMI},
  volume       = {45},
  number       = {10},
  pages        = {12192--12205},
  year         = {2023}
}

@inproceedings{mm/wang2023personalized,
  title={Personalized single image reflection removal network through adaptive cascade refinement},
  author={Wang, Mengyi and Zhang, Xinxin and Gong, Yongshun and Yin, Yilong},
  booktitle={ACM MM},
  pages={8204--8213},
  year={2023}
}

@inproceedings{cvpr/zhu2024revisiting,
  title={Revisiting Single Image Reflection Removal In the Wild},
  author={Zhu, Yurui and Fu, Xueyang and Jiang, Peng-Tao and Zhang, Hao and Sun, Qibin and Chen, Jinwei and Zha, Zheng-Jun and Li, Bo},
  booktitle={CVPR},
  year={2024}
}

@article{chen2024closer,
  title={A Closer Look at the Reflection Formulation in Single Image Reflection Removal},
  author={Chen, Zhikai and Long, Fuchen and Qiu, Zhaofan and Zhang, Juyong and Zha, Zheng-Jun and Yao, Ting and Luo, Jiebo},
  journal={TIP},
  year={2024}
}

@article{ijcv/hong2024light,
  title={Light flickering guided reflection removal},
  author={Hong, Yuchen and Chang, Yakun and Liang, Jinxiu and Ma, Lei and Huang, Tiejun and Shi, Boxin},
  journal={IJCV},
  volume={132},
  number={9},
  pages={3933--3953},
  year={2024}
}

@inproceedings{cvpr/ZhongHWLS24,
  author       = {Haofeng Zhong and
                  Yuchen Hong and
                  Shuchen Weng and
                  Jinxiu Liang and
                  Boxin Shi},
  title        = {Language-guided Image Reflection Separation},
  booktitle      = {CVPR},
  year         = {2024}
}

@article{nips/hu2024single,
  title={Single image reflection separation via dual-stream interactive transformers},
  author={Hu, Qiming and Wang, Hainuo and Guo, Xiaojie},
  journal={NeurIPS},
  volume={37},
  pages={55228--55248},
  year={2024}
}

@inproceedings{eccv/hong2024differ,
  title={L-DiffER: Single Image Reflection Removal with Language-Based Diffusion Model},
  author={Hong, Yuchen and Zhong, Haofeng and Weng, Shuchen and Liang, Jinxiu and Shi, Boxin},
  booktitle={ECCV},
  pages={58--76},
  year={2024}
}

@article{pami/huang2025lightweight,
  author={Huang, Jun-Jie and Liu, Tianrui and Chen, Zihan and Liu, Xinwang and Wang, Meng and Dragotti, Pier Luigi},
  journal={TPAMI}, 
  title={A Lightweight Deep Exclusion Unfolding Network for Single Image Reflection Removal}, 
  year={2025},
  volume={47},
  number={6},
  pages={4957-4973}
 }

@inproceedings{cvpr/yao2025polarfree,
  title={PolarFree: Polarization-based Reflection-Free Imaging},
  author={Yao, Mingde and Wang, Menglu and Tam, King-Man and Li, Lingen and Xue, Tianfan and Gu, Jinwei},
  booktitle={CVPR},
  pages={10890--10899},
  year={2025}
}

@inproceedings{cvpr/kee2025removing,
  title={Removing reflections from raw photos},
  author={Kee, Eric and Pikielny, Adam and Blackburn-Matzen, Kevin and Levoy, Marc},
  booktitle={CVPR},
  pages={161--171},
  year={2025}
}

@inproceedings{cvpr/zhao2025reversible,
  title={Reversible decoupling network for single image reflection removal},
  author={Zhao, Hao and Li, Mingjia and Hu, Qiming and Guo, Xiaojie},
  booktitle={CVPR},
  pages={26430--26439},
  year={2025}
}

@inproceedings{aaai/chen2025firm,
  title={FIRM: Flexible Interactive Reflection ReMoval},
  author={Chen, Xiao and Jiang, Xudong and Tao, Yunkang and Lei, Zhen and Li, Qing and Lei, Chenyang and Zhang, Zhaoxiang},
  booktitle={AAAI},
  volume={39},
  number={2},
  pages={2230--2238},
  year={2025}
}

@inproceedings{aaai/hu2026dereflection,
  title={Dereflection any image with diffusion priors and diversified data},
  author={Hu, Jichen and Yang, Chen and Zhou, Zanwei and Fang, Jiemin and Tian, Qi and Shen, Wei},
  booktitle={AAAI},
  volume={40},
  number={6},
  pages={4860--4868},
  year={2026}
}

@article{arxiv/howard2017mobilenets,
  title={Mobilenets: Efficient convolutional neural networks for mobile vision applications},
  author={Howard, Andrew G and Zhu, Menglong and Chen, Bo and Kalenichenko, Dmitry and Wang, Weijun and Weyand, Tobias and Andreetto, Marco and Adam, Hartwig},
  journal={arXiv preprint},
  year={2017}
}

@inproceedings{dauphin2017language,
  title={Language modeling with gated convolutional networks},
  author={Dauphin, Yann N and Fan, Angela and Auli, Michael and Grangier, David},
  booktitle={ICML},
  pages={933--941},
  year={2017}
}

@inproceedings{corr/BahdanauCB14,
  author       = {Dzmitry Bahdanau and
                  Kyunghyun Cho and
                  Yoshua Bengio},
  title        = {Neural Machine Translation by Jointly Learning to Align and Translate},
  booktitle    = {ICLR},
  year         = {2015}
}

@inproceedings{cvpr/Liu0LYXWN000WG22,
  author       = {Ze Liu and
                  Han Hu and
                  Yutong Lin and
                  Zhuliang Yao and
                  Zhenda Xie and
                  Yixuan Wei and
                  Jia Ning and
                  Yue Cao and
                  Zheng Zhang and
                  Li Dong and
                  Furu Wei and
                  Baining Guo},
  title        = {Swin Transformer V2: Scaling Up Capacity and Resolution},
  booktitle    = {CVPR},
  pages        = {11999--12009},
  year         = {2022}
}

@inproceedings{iccv/ZongS023,
  author       = {Zhuofan Zong and
                  Guanglu Song and
                  Yu Liu},
  title        = {DETRs with Collaborative Hybrid Assignments Training},
  booktitle    = {ICCV},
  pages        = {6725--6735},
  year         = {2023}
}

@inproceedings{nips/XieWYAAL21,
  author       = {Enze Xie and
                  Wenhai Wang and
                  Zhiding Yu and
                  Anima Anandkumar and
                  Jos{\'{e}} M. {\'{A}}lvarez and
                  Ping Luo},
  title        = {SegFormer: Simple and Efficient Design for Semantic Segmentation with
                  Transformers},
  booktitle    = {NeurIPS},
  pages        = {12077--12090},
  year         = {2021}
}

@article{ijcv/EveringhamGWWZ10,
  author    = {Mark Everingham and
               Luc Van Gool and
               Christopher K. I. Williams and
               John M. Winn and
               Andrew Zisserman},
  title     = {The Pascal Visual Object Classes {(VOC)} Challenge},
  journal   = {IJCV},
  volume    = {88},
  number    = {2},
  pages     = {303--338},
  year      = {2010}
}

@inproceedings{iccv/shao2019objects365,
  title={Objects365: A large-scale, high-quality dataset for object detection},
  author={Shao, Shuai and Li, Zeming and Zhang, Tianyuan and Peng, Chao and Yu, Gang and Zhang, Xiangyu and Li, Jing and Sun, Jian},
  booktitle={ICCV},
  pages={8430--8439},
  year={2019}
}

@inproceedings{icmlw/maas2013rectifier,
  title={Rectifier nonlinearities improve neural network acoustic models},
  author={Maas, Andrew L and Hannun, Awni Y and Ng, Andrew Y and others},
  booktitle={ICML Workshop},
  volume={30},
  number={1},
  pages={3},
  year={2013}
}

@inproceedings{iccv/ke2021musiq,
  title={Musiq: Multi-scale image quality transformer},
  author={Ke, Junjie and Wang, Qifei and Wang, Yilin and Milanfar, Peyman and Yang, Feng},
  booktitle={ICCV},
  pages={5148--5157},
  year={2021}
}

@inproceedings{aaai/wang2023exploring,
  title={Exploring clip for assessing the look and feel of images},
  author={Wang, Jianyi and Chan, Kelvin CK and Loy, Chen Change},
  booktitle={AAAI},
  volume={37},
  number={2},
  pages={2555--2563},
  year={2023}
}

@article{tip/chen2024topiq,
  title={Topiq: A top-down approach from semantics to distortions for image quality assessment},
  author={Chen, Chaofeng and Mo, Jiadi and Hou, Jingwen and Wu, Haoning and Liao, Liang and Sun, Wenxiu and Yan, Qiong and Lin, Weisi},
  journal={TIP},
  volume={33},
  pages={2404--2418},
  year={2024}
}

\end{document}